\setlist[itemize,enumerate]{leftmargin=*}
\definecolor{theoremcolor}{rgb}{0.94, 0.94, 0.94}
\definecolor{examplecolor}{rgb}{1, 1, 1.0}
\newmdtheoremenv[linewidth=0pt,innerleftmargin=4pt,innerrightmargin=4pt]{definition}{Definition}
\newmdtheoremenv[linewidth=0pt,innerleftmargin=4pt,innerrightmargin=4pt]{proposition}{Proposition}
\newmdtheoremenv[linewidth=0pt,innerleftmargin=0pt,innerrightmargin=0pt,backgroundcolor=examplecolor]{example}{Example}
\newmdtheoremenv{corollary}{Corollary}
\newmdtheoremenv{theorem}{Theorem}
\newmdtheoremenv{lemma}{Lemma}
\DeclareMathOperator*{\argmax}{arg\,max}
\newcommand{\andre}[1]{{\textcolor{blue}{\bf [{\sc Andre:} #1]}}}
\newcommand{\remove}[1]{}
\title{Sparse and Continuous Attention Mechanisms}
\newcommand\markUnbabel{\Cancer}
\newcommand\markIT{\Leo}
\newcommand\markISR{\Jupiter}
\newcommand\markLUMLIS{\Saturn}
\newcommand\markUvA{\Virgo}
\author{%
  Andr\'e F.~T.~Martins\textsuperscript{\markIT,\markLUMLIS,\markUnbabel} 
  \And
  Ant\'onio Farinhas\textsuperscript{\markIT} 
  \And
  Marcos Treviso\textsuperscript{\markIT} 
  \AND
  Vlad Niculae\textsuperscript{\markUvA,\markIT}
  \And
  Pedro M.~Q.~Aguiar\textsuperscript{\markISR,\markLUMLIS} 
  \And
  M\'ario A.~T.~Figueiredo\textsuperscript{\markIT,\markLUMLIS} 
  \AND \\[-4ex]
  \{
  {\tt andre.t.martins}, 
  {\tt marcos.treviso}, 
  {\tt antonio.farinhas}, 
  {\tt mario.figueiredo}\}\\{\tt @tecnico.ulisboa.pt}, 
  {\tt aguiar@isr.ist.utl.pt}, 
  {\tt vlad@vene.ro}\\[2ex]
  \textsuperscript{\markIT{}}Instituto de Telecomunica\c{c}\~oes,  Instituto Superior Técnico, Lisbon, Portugal \\
  \textsuperscript{\markISR{}}Instituto de Sistemas e Rob\'otica, Instituto Superior Técnico, Lisbon, Portugal \\
  \textsuperscript{\markLUMLIS{}}LUMLIS (Lisbon ELLIS Unit), Lisbon, Portugal \\
  \textsuperscript{\markUvA{}}Informatics Institute, University of Amsterdam, The Netherlands \\
  \textsuperscript{\markUnbabel{}}Unbabel, Lisbon, Portugal
}
\begin{document}
\maketitle
\begin{abstract}
Exponential families are widely used in machine learning; they include many distributions in continuous and discrete domains (\textit{e.g.}, Gaussian, Dirichlet, Poisson, and categorical distributions via the softmax transformation). Distributions in each of these families have fixed support. In contrast, for finite domains, there has been recent work on sparse alternatives to softmax (\textit{e.g.} sparsemax and $\alpha$-entmax), which have varying support, being able to assign zero probability to irrelevant categories.
This paper expands that work in two directions: first, we extend $\alpha$-entmax\remove{and Fenchel-Young (FY) losses} to continuous domains, revealing a link with Tsallis statistics and deformed exponential families. 
Second, we introduce continuous-domain attention mechanisms, deriving efficient gradient backpropagation algorithms for $\alpha \in \{1,2\}$.
Experiments on  attention-based text classification, machine translation, and visual question answering  illustrate the use of continuous attention in 1D and 2D, showing that it allows attending to time intervals and compact regions.\remove{and FY-based interval regression on weather data.}
\end{abstract}

\section{Introduction}

Exponential families are ubiquitous in statistics and machine learning \citep{brown1986fundamentals,barndorff2014information}. 
They enjoy many useful properties, such as the existence of conjugate priors (crucial in Bayesian inference) and 
the classical Pitman-Koopman-Darmois theorem \citep{pitman1936sufficient,darmois1935lois,koopman1936distributions}, which states that,
among families with {\bf fixed support} (independent of the parameters), exponential families are the only having sufficient statistics of fixed dimension for any number of i.i.d.\ samples.

Departing from exponential families, there has been recent work on discrete, finite-domain distributions with {\bf varying and sparse support},  via the \textit{sparsemax} and the \textit{entmax} transformations \citep{Martins2016ICML,blondel2020learning,peters2019sparse}. 
Those approaches drop the link to exponential families of categorical distributions provided by the softmax transformation, which always yields dense probability mass functions. In contrast, sparsemax and entmax can lead to sparse distributions, whose support is not constant throughout the family. This property has been used \remove{to obtain new loss functions for multi-label classification and sparse sequence-to-sequence problems, as well as} to design sparse attention mechanisms with improved interpretability \citep{peters2019sparse,correia2019adaptively}.

However, {sparsemax} and {entmax} are so far limited to discrete domains. Can a similar approach be extended to continuous domains? This paper provides that extension and pinpoints a connection with ``deformed exponential families'' \citep{naudts2009q,sears2010generalized,ding2010t} and Tsallis statistics \citep{Tsallis1988}, leading to {\bf $\alpha$-sparse families} (\S\ref{sec:sparse_families}). 
We use this construction to obtain new density families with varying support, 
including the {\it truncated parabola} and {\it paraboloid} distributions (2-sparse counterpart of the Gaussian, \S\ref{sec:sparsemax} and Fig.~\ref{fig:gaussian_paraboloid}). 

Softmax and its variants are widely used in \textit{attention mechanisms}, an important component of neural networks \citep{bahdanau2014neural}. Attention-based neural networks can ``attend'' to finite sets of objects and identify relevant features. 
We use our extension above to devise new {\bf continuous attention mechanisms} (\S\ref{sec:attention}), which can attend to continuous data streams and to domains that are inherently continuous, such as images. Unlike traditional attention mechanisms, ours are suitable for selecting compact regions, such as 1D-segments or 2D-ellipses. We show that the Jacobian of these transformations are generalized covariances, and we use this fact to obtain efficient backpropagation algorithms (\S\ref{sec:jacobian}). 

As a proof of concept, we apply our models with continuous attention to text classification, machine translation, and visual question answering tasks, with encouraging results (\S\ref{sec:experiments}).

\remove{We also generalize the recent Fenchel-Young (FY) losses \citep{blondel2020learning} 
to arbitrary domains, illustrating their usefulness by estimating sparse continuous densities for regression problems under bounded noise \citep{d2013bounded}. We show that properties of FY losses hold for general $\alpha$-sparse families, including convexity and closed-form gradient w.r.t.\ their canonical parameters. We use this to perform interval regression, which returns mean estimates and intervals, based on the support of their distribution.} 

\paragraph{Notation.}
Let $(S, \mathcal{A}, \nu)$ be a measure space, 
where $S$ is a set, $\mathcal{A}$ is a $\sigma$-algebra, and $\nu$ is a measure. 
We denote by $\mathcal{M}_+^1(S)$ the set of $\nu$-absolutely continuous probability measures. From the Radon-Nikodym theorem \citep[\S31]{halmos2013measure}, each element of $\mathcal{M}_+^1(S)$ is identified (up to equivalence within measure zero) with a probability density function $p: S \rightarrow \mathbb{R}_+$, with $\int_S p(t)\, d\nu(t) = 1$. 
For convenience, we often drop $d\nu(t)$ from the integral. 
We denote the measure of $A\in  \mathcal{A}$ as
$|A| = \nu(A) = \int_{A} 1$, and  
the support of a density $p \in \mathcal{M}_+^1(S)$ as $\mathrm{supp}(p) = \{t \in S \mid p(t) > 0\}$. 
Given $\phi:S \rightarrow \mathbb{R}^m$, 
we write expectations 
as
$\mathbb{E}_p[\phi(t)] := \int_S p(t) \, \phi(t)$.  
Finally, we define $[a]_+ := \max\{a, 0\}$. 

\begin{figure*}[t]
\centering
\includegraphics[width=.27\textwidth]{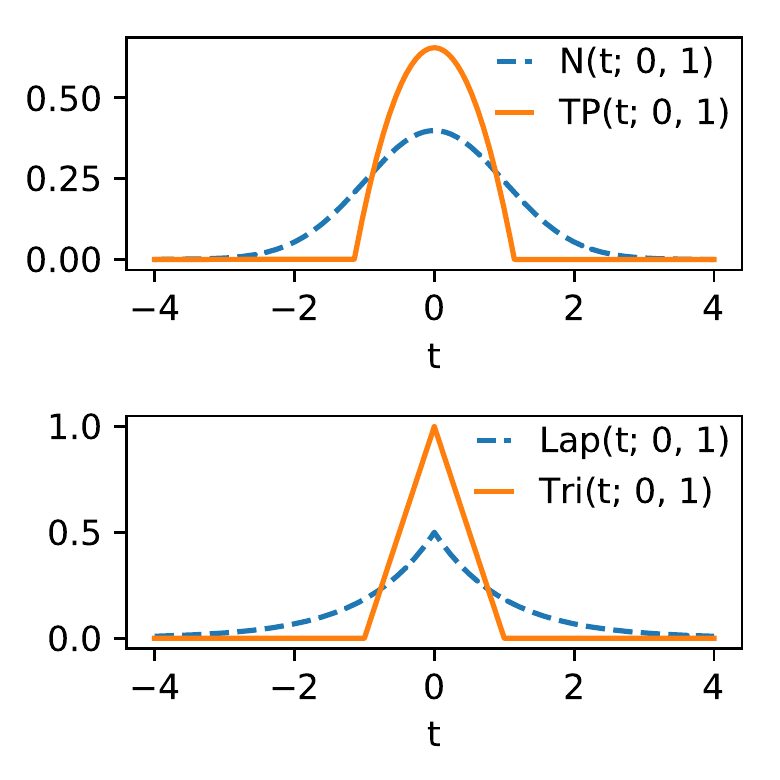}\quad
\includegraphics[width=.33\textwidth]{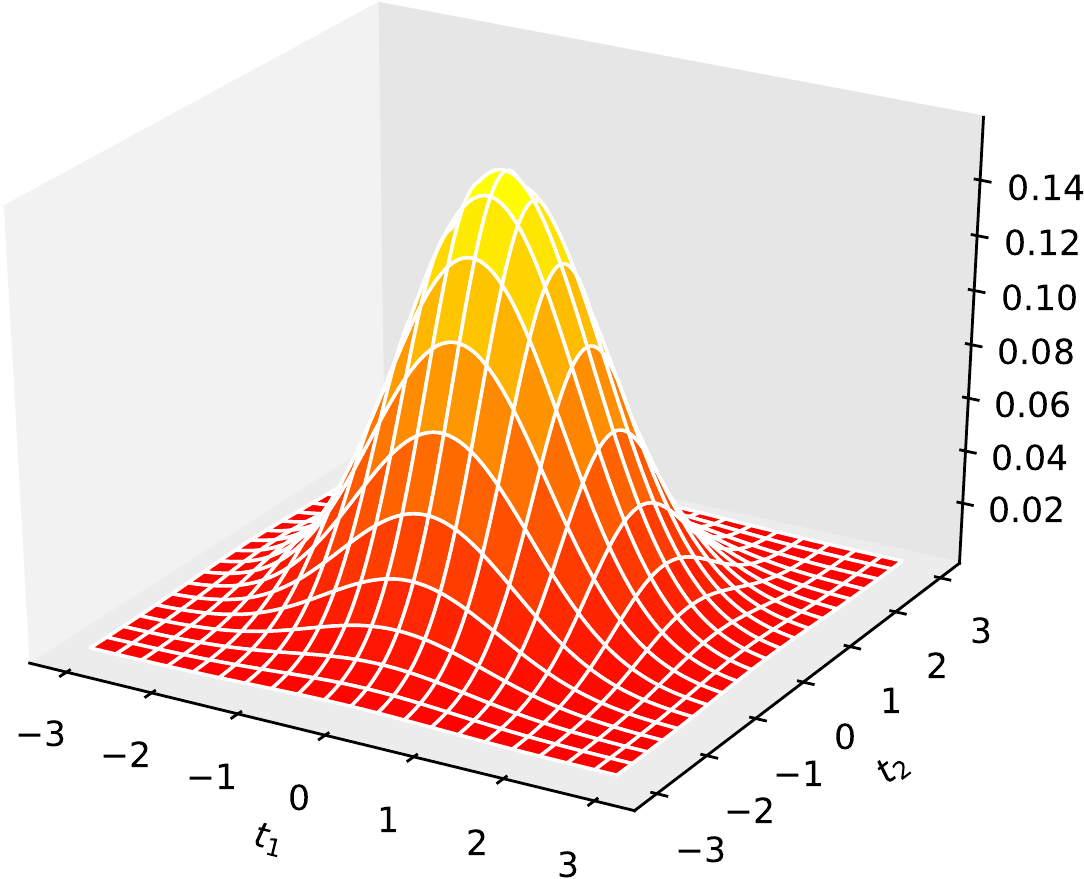}%
\includegraphics[width=.33\textwidth]{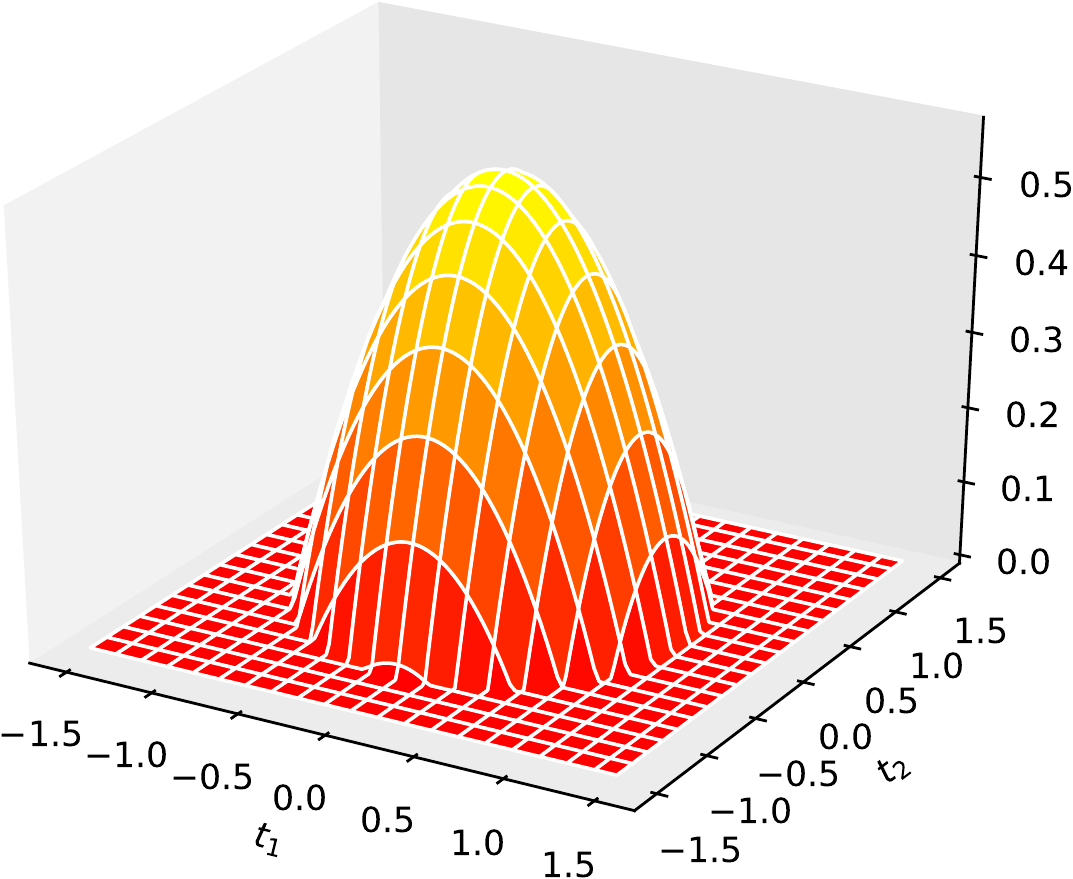}%
\caption{{\bf 1D and 2D distributions generated by the $\Omega_\alpha$-RPM for $\alpha\in \{1,2\}$.} Left: Univariate location-scale families, including Gaussian and truncated parabola (top) and Laplace and triangular (bottom). Middle and right: Bivariate Gaussian $\mathcal{N}(t; 0, I)$ and truncated paraboloid $\mathrm{TP}(t; 0, I)$. \label{fig:gaussian_paraboloid}}
\end{figure*}

\remove{
\begin{figure*}[t]
\centering \includegraphics[width=.8\textwidth]{figures/sparse_distributions}
\caption{{\bf Location-scale families with different $\alpha$-Tsallis regularizations.} Left: Gaussian and truncated parabola distributions with zero mean and different variances. Middle: same for Laplace and triangular distributions. Right: Distributions $\hat{p}_{\Omega_\alpha}[-t^2/2]$ for varying $\alpha$. \label{fig:gaussian_laplacian_distributions}}
\end{figure*}
}

\section{Sparse Families}\label{sec:sparse_families}

In this section, we provide background on exponential families and its generalization through Tsallis statistics. We link these concepts, 
studied in statistical physics, 
to sparse alternatives to softmax recently 
proposed in the machine learning literature \citep{Martins2016ICML,peters2019sparse}, extending 
the latter to continuous domains. 

\subsection{Regularized prediction maps ($\Omega$-RPM)}

Our starting point is the notion of $\Omega$-regularized prediction maps, introduced by \citet{blondel2020learning} for finite domains $S$. 
This is a general framework for mapping vectors in $\mathbb{R}^{|S|}$ 
(\textit{e.g.}, label scores computed by a neural network) into probability vectors in $\triangle^{|S|}$ (the simplex), with a regularizer $\Omega$ encouraging uniform distributions. Particular choices of $\Omega$ recover 
argmax, softmax \citep{bridle1990probabilistic}, and sparsemax \citep{Martins2016ICML}. 
Our definition below extends this framework to arbitrary measure spaces $\mathcal{M}_+^1(S)$, where we assume  $\Omega: \mathcal{M}_+^1(S) \rightarrow \mathbb{R}$ is a lower semi-continuous,  proper, and strictly convex function.

\vspace{0.1cm}
\begin{definition}
\label{def:regularized_prediction}
The $\Omega$-\textbf{regularized prediction map} ($\Omega$-RPM) $\hat{p}_{\Omega}: \mathcal{F} \rightarrow \mathcal{M}_+^1(S)$ is defined as
\begin{equation}\label{eq:reg_prediction}
\hat{p}_{\Omega}[f] = \argmax_{p \in \mathcal{M}_+^1(S)} \mathbb{E}_{p}[f(t)] - \Omega(p),
\end{equation}
where $\mathcal{F}$ is the set of functions for which the maximizer above exists and is unique.
\end{definition}

It is often convenient to consider a ``temperature parameter'' $\tau>0$, absorbed into $\Omega$ via $\Omega := \tau \tilde{\Omega}$. If $f$ has a unique global maximizer $t^\star$, the low-temperature limit yields ${ \lim_{\tau\rightarrow 0}}\, \hat{p}_{\tau \tilde{\Omega}}[f] = \delta_{t^\star}$,
a Dirac delta distribution at the maximizer of $f$. 
For finite $S$, this is the {\it argmax} transformation shown in \citep{blondel2020learning}. 
Other interesting examples of regularization functionals are shown in the next subsections.

\subsection{Shannon's negentropy and exponential families}

A natural choice of regularizer is the {Shannon's negentropy}, $\Omega(p) = \int_S p(t)\log p(t)$. 
In this case, if we interpret $-f(t)$ as an energy function, the $\Omega$-RPM corresponds to the well-known {\it free energy variational principle}, leading to Boltzmann-Gibbs distributions (\citep{cover2012elements}; see App.~\ref{sec:diff_ent_exp_family}): 
\begin{equation}\label{eq:boltzmann}
\hat{p}_\Omega[f](t) = \frac{\exp(f(t))}{\int_S \exp(f(t'))d\nu(t')} = \exp \bigl(f(t) - A(f)\bigr),
\end{equation}
where 
$A(f) := \log \int_S \exp(f(t))$
is the log-partition function. 
If $S$ is finite and $\nu$ is the counting measure, the integral in \eqref{eq:boltzmann} is a summation and we can write $f$ as a vector $[f_1, \ldots, f_{|S|}] \in \mathbb{R}^{|S|}$. In this case, the $\Omega$-RPM is the {\it softmax transformation},
\begin{equation}
\hat{p}_\Omega[f] = \mathrm{softmax}(f) = \tfrac{\exp(f)}{\sum_{k=1}^{|S|} \exp(f_k)} \in \triangle^{|S|}.
\end{equation}
If $S=\mathbb{R}^N$, $\nu$ is the Lebesgue measure, and 
$f(t) = -\sfrac{1}{2} (t-\mu)^\top\Sigma^{-1}(t-\mu)$ for $\mu \in \mathbb{R}^N$ and $\Sigma \succ 0$ (i.e., $\Sigma$ is a positive definite matrix), 
we obtain a {\it multivariate Gaussian}, $\hat{p}_{\Omega}[f](t) = \mathcal{N}(t; \mu, \Sigma)$. 
\remove{with Shannon negentropy $\Omega(p) = -\sfrac{1}{2}\log \det (2\pi e \Sigma)$.} 
This becomes a univariate Gaussian $\mathcal{N}(t; \mu, \sigma^2)$ if $N=1$. 
For $S=\mathbb{R}$ and defining $f(t) = -|t-\mu|/b$, with  $\mu \in \mathbb{R}$ and $b>0$, we get a {\it Laplace} density, $\hat{p}_{\Omega}[f](t) = \tfrac{1}{2b}\exp\left(-|t-\mu|/b\right)$. \remove{, with Shannon negentropy $\Omega(p) = -\log(2b e)$.}


\paragraph{Exponential families.} 
Let $f_{\theta}(t) = \theta^\top \phi(t)$, where $\phi(t) \in \mathbb{R}^M$ is a vector of \textit{statistics} and $\theta \in \Theta \subseteq \mathbb{R}^M$ is a vector of \textit{canonical parameters}. 
A family of the form \eqref{eq:boltzmann} parametrized by $\theta \in \Theta \subseteq \mathbb{R}^M$ is called an {\it exponential family} \citep{barndorff2014information}. 
Exponential families 
have many appealing properties, such as the existence of conjugate priors and sufficient statistics, and a dually flat geometric structure \citep{amari2016information}. Many well-known distributions are exponential families, including the categorical and Gaussian distributions above, and Laplace distributions with a fixed $\mu$. 
A key property of exponential families is that the support is constant within the same family and dictated by the base measure $\nu$: this follows immediately from the positiveness of the $\exp$ function in \eqref{eq:boltzmann}. We abandon this property in the sequel.

\subsection{Tsallis' entropies and $\alpha$-sparse families}

Motivated by applications in statistical physics,
\citet{Tsallis1988} proposed a generalization of Shannon's negentropy. 
This generalization is rooted on the notions of  $\beta$-logarithm, $\log_{\beta}: \mathbb{R}_{\ge 0} \rightarrow \mathbb{R}$ (not to be confused with base-$\beta$ logarithm), and $\beta$-exponential, $\exp_{\beta}: \mathbb{R} \rightarrow \mathbb{R}$: 
\begin{equation}\label{eq:beta_log_exp}
    \log_{\beta}(u) := \left\{
    \begin{array}{ll}
        \frac{u^{1-\beta} - 1}{1-\beta}, & \beta \ne 1 \\
        \log u, & \beta = 1;
    \end{array}
    \right.
    \qquad
    \exp_{\beta}(u) := \left\{
    \begin{array}{ll}
     	[1 + (1-\beta)u]_+^{1/(1-\beta)}, & \beta \ne 1 \\
        \exp u, & \beta = 1.
    \end{array}
    \right.    
\end{equation}
Note that ${\lim_{\beta \rightarrow 1}}\log_\beta(u) = \log u$, ${\lim_{\beta \rightarrow 1}}\exp_\beta(u) = \exp u$, and $\log_\beta(\exp_\beta(u)) = u$ for any $\beta$. 
Another important concept is that of ``$\beta$-escort distribution''  \citep{Tsallis1988}: this is the distribution $\tilde{p}^{\beta}$ given by
\begin{equation}\label{eq:escort}
\tilde{p}^{\beta}(t) := \frac{p(t)^{\beta}}{\|p\|_\beta^\beta}, \quad \text{where $\|p\|_\beta^\beta = \int_{S} p(t')^{\beta} d\nu(t')$}.
\end{equation}
Note that we have $\tilde{p}^{1}(t) = p(t)$. 

The {\bf $\alpha$-Tsallis negentropy}  \citep{havrda1967quantification,Tsallis1988} is defined as:%
\footnote{This entropy is normally defined up to a constant, often presented without the $\tfrac{1}{\alpha}$ factor. We use the same definition as \citet[\S 4.3]{blondel2020learning} for convenience.} %
\begin{equation}\label{eq:tsallis}
\Omega_{\alpha}(p) := \tfrac{1}{\alpha} \mathbb{E}_{p}[\log_{2-\alpha}(p(t))] = 
\begin{cases}
\frac{1}{\alpha(\alpha-1)}\left( \int_S  p(t)^\alpha - 1 \right), &
\alpha \ne 1,\\
\int_S p(t) \log p(t), &
\alpha = 1.
\end{cases}
\end{equation}
Note that $\lim_{\alpha \rightarrow
1}\Omega_{\alpha}(p) = \Omega_{1}(p)$, for any $p \in \mathcal{M}_+^1(S)$, 
with $\Omega_{1}(p)$ recovering Shannon's negentropy (proof in App.~\ref{sec:gini_ent_sparse_family}). 
Another notable case is $\Omega_2(p) = \sfrac{1}{2}\int_S p(t)^2  - \sfrac{1}{2}$,
the negative of which is called the Gini-Simpson index \citep{Jost2006,Rao1982}. 
We come back to the $\alpha=2$ case in \S\ref{sec:sparsemax}.

\remove{
This family is continuous in $\alpha$, \textit{i.e.}, $\lim_{\alpha \rightarrow
1}\Omega_{\alpha}(p) = \Omega_{1}(p)$, for any $p \in \mathcal{M}_+^1(S)$, with $\Omega_{1}(p)$ recovering Shannon's negentropy (proof in App.~\ref{sec:gini_ent_sparse_family}). 
Another notable case is $\Omega_2(p) = \sfrac{1}{2}\int_S p(t)^2  - \sfrac{1}{2}$,
the negative of which has several names, {\it e.g.}, Gini-Simpson index \citep{Jost2006} or Rao's quadratic entropy \citep{Rao1982}. 
We will come back to the $\alpha=2$ case in \S\ref{sec:sparsemax}.
}

For $\alpha>0$, $\Omega_\alpha$ is strictly convex, hence it can be plugged in as the regularizer in  Def.~\ref{def:regularized_prediction}. 
The next proposition (\citep{naudts2009q}; proof in  App.~\ref{sec:gini_ent_sparse_family}) provides an expression for $\Omega_{\alpha}$-RPM using the $\beta$-exponential  \eqref{eq:beta_log_exp}: 

\vspace{0.15cm}
\begin{proposition}\label{prop:solution_rpm_tsallis}
For $\alpha > 0$ and $f\in\mathcal{F}$, 
\begin{equation}\label{eq:entmax}
\hat{p}_{\Omega_\alpha}[f](t) = \exp_{2-\alpha}(f(t) - A_\alpha(f)), 
\end{equation}
where 
$A_\alpha: \mathcal{F} \rightarrow \mathbb{R}$ is a normalizing function:
$A_\alpha(f) = \frac{\frac{1}{1-\alpha} + \int_S p_\theta(t)^{2-\alpha} f(t)}{\int_S p_\theta(t)^{2-\alpha}} - \frac{1}{1-\alpha}.$
\end{proposition}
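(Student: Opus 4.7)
My plan is to solve the optimization in \eqref{eq:reg_prediction} by Lagrangian calculus and then match the solution to the $\beta$-exponential form of \eqref{eq:entmax}. I will focus on $\alpha \neq 1$; the case $\alpha = 1$ recovers the Shannon setting with the exponential family expression \eqref{eq:boltzmann}, and the two formulas agree in the limit by the continuity of $\Omega_\alpha$ in $\alpha$ noted just before the proposition.

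First I would form the Lagrangian with a single multiplier $\lambda$ for the normalization constraint $\int_S p(t)\, d\nu(t) = 1$, leaving $p \geq 0$ implicit and to be handled through KKT. Since $\Omega_\alpha$ is strictly convex for $\alpha > 0$, the maximizer is unique when it exists. A pointwise variational derivative of the objective yields $f(t) - \frac{p(t)^{\alpha-1}}{\alpha - 1} - \lambda = 0$ on $\mathrm{supp}(p)$, i.e., $p(t)^{\alpha - 1} = (\alpha - 1)(f(t) - \lambda)$; folding in complementary slackness off the support gives the compact expression
\begin{equation*}
p(t) = \bigl[(\alpha - 1)(f(t) - \lambda)\bigr]_+^{1/(\alpha - 1)}.
\end{equation*}
Setting $A_\alpha(f) := \lambda + \frac{1}{\alpha - 1}$ and invoking the identity $[1 + (\alpha - 1)u]_+^{1/(\alpha - 1)} = \exp_{2-\alpha}(u)$, which is immediate from \eqref{eq:beta_log_exp} with $\beta = 2 - \alpha$, then delivers \eqref{eq:entmax}.

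To extract the explicit formula for $A_\alpha(f)$, I would multiply the on-support stationarity relation $p(t)^{\alpha - 1} = (\alpha - 1)(f(t) - \lambda)$ by $p(t)^{2-\alpha}$ and integrate over $S$; since the product vanishes off the support, the constraint $\int_S p(t)\, d\nu(t) = 1$ yields
\begin{equation*}
1 = (\alpha - 1)\int_S p(t)^{2-\alpha}\bigl(f(t) - \lambda\bigr)\, d\nu(t).
\end{equation*}
Solving this for $\lambda$ and substituting into $A_\alpha(f) = \lambda + \frac{1}{\alpha - 1}$ reproduces the stated normalization formula (which, as expected, is implicit in $p$ itself).

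The main obstacle is treating the support rigorously: the first-order condition only holds where $p$ is strictly positive, so one must justify via complementary slackness that the $[\cdot]_+$ truncation is the correct extension to all of $S$. For $\alpha > 1$ the exponent $1/(\alpha - 1) > 0$ and the truncation genuinely produces sparse densities; for $0 < \alpha < 1$ the maximizer has full support whenever it exists and the $[\cdot]_+$ acts trivially. The hypothesis $f \in \mathcal{F}$ encapsulates the integrability conditions ensuring that the implicit normalization equation for $\lambda$ is solvable, so existence of $A_\alpha(f)$ is guaranteed by assumption.
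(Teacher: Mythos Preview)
Your argument is correct, but it takes a different route from the paper. You solve the variational problem constructively via Lagrangian/KKT conditions: you compute the pointwise stationarity condition, invoke complementary slackness to justify the $[\cdot]_+$ truncation, and then back out $A_\alpha$ by integrating against $p^{2-\alpha}$. The paper instead \emph{verifies} the claimed maximizer directly, mimicking the Shannon case: it posits $q(t) = \exp_{2-\alpha}(f(t) - A_\alpha(f))$, expands the Bregman divergence $B_{\Omega_\alpha}(p,q) \ge 0$, and uses the additional bound $[u]_+ \ge u$ to obtain $\mathbb{E}_p[f] - \Omega_\alpha(p) \le (\alpha-1)\Omega_\alpha(q) + A_\alpha(f)$ for every $p$, with equality iff $p=q$. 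The paper derives the $A_\alpha$ formula separately by inverting \eqref{eq:entmax} and integrating against $p^{2-\alpha}\,d\nu$, exactly as you do.

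Your approach is more heuristic in the sense you acknowledge: the first-order condition is formal variational calculus on an infinite-dimensional cone, and you defer the support issue to ``complementary slackness'' without spelling out the KKT machinery in function space. The paper's Bregman-divergence argument sidesteps this entirely --- no Lagrangian, no KKT, just a direct inequality with a characterized equality case --- which makes it cleaner to state rigorously. On the other hand, your route is the natural \emph{discovery} argument: it explains where the $\exp_{2-\alpha}$ form comes from rather than pulling it out of a hat, and it makes the threshold interpretation $\lambda = A_\alpha - (\alpha-1)^{-1}$ transparent from the start.
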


Let us contrast \eqref{eq:entmax} with Boltzmann-Gibbs distributions \eqref{eq:boltzmann}, recovered with $\alpha=1$. One key thing to note is that the $(2-\alpha)$-exponential, for $\alpha > 1$, can return zero values. Therefore, the distribution $\hat{p}_{\Omega_\alpha}[f]$ in \eqref{eq:entmax} may not have full support, \textit{i.e.}, we may have $\mathrm{supp}(\hat{p}_{\Omega_\alpha}[f]) \subsetneq S$.  
We say that $\hat{p}_{\Omega_\alpha}[f]$ has {\it sparse support} if $\nu(S \setminus \mathrm{supp}(\hat{p}_{\Omega_\alpha}[f])) > 0$.%
\footnote{This should not be confused with sparsity-inducing distributions \citep{FigueiredoNIPS2001,TippingJMLR2001}.} %
This generalizes the notion of sparse vectors.

\paragraph{Relation to sparsemax and entmax.} 
\citet{blondel2020learning}  showed that, for finite $S$,  $\Omega_2$-RPM is the {\bf sparsemax} transformation, $\hat{p}_\Omega[f] = \mathrm{sparsemax}(f) =
\arg\min_{p \in \triangle^{|S|}} \|p - f\|^2$. 
\remove{(Euclidean projection of  $f\in\mathbb{R}^{|S|}$ onto the $|S|$-dimensional probability simplex $\triangle^{|S|}$).} 
Other values of $\alpha$ were studied by \citet{peters2019sparse}, under the name {\bf $\alpha$-entmax} transformation. For $\alpha>1$, these transformations have a propensity for returning sparse distributions, where several entries have zero probability. 
Proposition~\ref{prop:solution_rpm_tsallis} shows that similar properties can be obtained when $S$ is continuous.

\paragraph{Deformed exponential families.} 
With a linear parametrization $f_\theta(t) = \theta^\top\phi(t)$, 
distributions with the form \eqref{eq:entmax} are called \textit{deformed exponential} or \textit{$q$-exponential families} \citep{naudts2009q,sears2010generalized,ding2010t,matsuzoe2012geometry}. 
\remove{
also referred to as a \textit{$t$-exponential family} \citep{ding2010t} and a \textit{$q$-exponential family} \citep{matsuzoe2012geometry}. 
}
The geometry of these families induced by the Tsallis $q$-entropy was studied by \citet[\S 4.3]{amari2016information}.%
\footnote{Unfortunately, the literature is inconsistent in defining these coefficients. Our $\alpha$ matches that of \citet{blondel2020learning}; Tsallis' $q$ equals $2-\alpha$; this family is also related to Amari's $\alpha$-divergences, but their $\alpha = 2q - 1$. Inconsistent definitions have also been proposed for $q$-exponential families regarding how they are normalized; for example, the Tsallis maxent principle leads to a different definition. See App.~\ref{sec:tsallis_maxent} for a detailed discussion.} 
Unlike those prior works, we are interested in the sparse, light tail scenario ($\alpha>1$), not in heavy tails. For $\alpha>1$, we call these {\bf $\alpha$-sparse families.} 
When $\alpha \rightarrow 1$, $\alpha$-sparse families become exponential families and they cease to be ``sparse'', in the sense that all distributions in the same family have the same support.

A relevant problem is that of characterizing $A_\alpha(\theta)$. When $\alpha=1$, 
$A_1(\theta) = {\lim_{\alpha\rightarrow 1}} A_\alpha(\theta) = \log \int_S \exp(\theta^\top \phi(t))$ is the log-partition function (see \eqref{eq:boltzmann}), and  its first and higher order derivatives are equal to the moments of the sufficient statistics. The following proposition (stated as \citet[Theorem~5]{amari2011geometry}, and proved in our App.~\ref{sec:proof_gradient_A}) 
characterizes $A_\alpha(\theta)$ for $\alpha \ne 1$ in terms of an expectation under the $\beta$-escort distribution for $\beta = 2-\alpha$ (see \eqref{eq:escort}). This proposition will be used later to derive the Jacobian of entmax attention mechanisms. \remove{and the gradient and Hessian of Fenchel-Young losses.}

\vspace{0.3cm}

\begin{proposition}\label{prop:gradient_A}
$A_\alpha(\theta)$ is a convex function and its gradient is given by
\begin{equation}\label{eq:derivative_of_partition}
    \nabla_\theta A_\alpha(\theta) = \mathbb{E}_{\tilde{p}_\theta^{2-\alpha}}[\phi(t)] = \frac{\int_{S} p_\theta(t)^{2-\alpha} \phi(t)}{\int_{S} p_\theta(t)^{2-\alpha}}.
\end{equation}
\end{proposition}


\subsection{The 2-Tsallis entropy: sparsemax}\label{sec:sparsemax}

In this paper, we focus on the case $\alpha=2$. For finite $S$, this corresponds to the sparsemax transfomation proposed by \citet{Martins2016ICML}, 
which has appealing  theoretical and computational properties. 
In the general case, 
plugging $\alpha=2$ in \eqref{eq:entmax} leads to the $\Omega_2$-RPM,
\begin{equation}\label{eq:sparsemax}
\hat{p}_{\Omega_2}[f](t) = [f(t) - \lambda]_+, 
\qquad \text{where $\lambda = A_2(f) - 1$,}
\end{equation}
\textit{i.e.}, $\hat{p}_{\Omega_2}[f]$ is obtained from $f$ by subtracting a constant (which may be negative) and truncating, where that constant $\lambda$ must be such that $ \int_S [f(t) - \lambda]_+ = 1$. 
\remove{For the discrete case, this constant has been called the \textit{threshold function} by \citet{Martins2016ICML}.}

If $S$ is continuous and $\nu$ the Lebesgue measure, we call $\Omega_2$-RPM the {\bf continuous sparsemax} transformation. Examples follow, 
some of which correspond to  novel distributions.

\paragraph{Truncated parabola.} 
If $f(t) = -\frac{(t-\mu)^2}{2\sigma^2}$, we obtain the continuous sparsemax counterpart of a Gaussian, which we dub a ``truncated parabola'':
\begin{equation}\label{eq:truncated_parabola}
\hat{p}_{\Omega_2}[f](t) 
= \left[ - \tfrac{(t-\mu)^2}{2\sigma^2} - \lambda \right]_+
\eqqcolon \mathrm{TP}(t; \mu, \sigma^2), 
\end{equation}
where $\lambda = -\tfrac{1}{2}\bigl(3/(2\sigma)\bigr)^{2/3}$ (see App.~\ref{sec:proof_truncated_parabola}). 
\remove{
, 
$\mathrm{supp}(\hat{p}_{\Omega_2}[f]) = [\mu - \tfrac{3}{-4\, \lambda}, \mu + \tfrac{3}{-4\, \lambda} ]$ and 
$\Omega_2(\hat{p}_{\Omega_2}[f]) = -\tfrac{1}{2} - \tfrac{2\lambda}{5}$.
}
This function, depicted in Fig.~\ref{fig:gaussian_paraboloid} (top left), is widely used in density estimation. For $\mu=0$ and $\sigma=\sqrt{2/3}$, it is known as the Epanechnikov kernel \citep{epanechnikov1969non}.

\paragraph{Truncated paraboloid.} 
The previous example can be generalized to $S=\mathbb{R}^N$, with $f(t)=-\frac{1}{2}(t-\mu)^\top\Sigma^{-1}(t-\mu)$, where $\Sigma \succ 0$, leading to a ``multivariate truncated paraboloid,'' the sparsemax counterpart of the multivariate Gaussian (see middle and rightmost plots in Fig.~\ref{fig:gaussian_paraboloid}):
\begin{equation}\label{eq:truncated_paraboloid}
\hat{p}_{\Omega_2}[f](t) = \bigl[-\lambda - \tfrac{1}{2}(t-\mu)\Sigma^{-1}(t-\mu)\bigr]_+, 
\qquad \text{where $\lambda = -\Bigl(\Gamma\bigl(\tfrac{N}{2} + 2\bigr) / \sqrt{\mathrm{det}(2\pi \Sigma)}\Bigr)^{\frac{2}{2+N}}$}.
\end{equation}
The expression above, derived in App.~\ref{sec:proof_truncated_paraboloid}, reduces to \eqref{eq:truncated_parabola} for $N=1$.
Notice that (unlike in the Gaussian case) a diagonal $\Sigma$ does not lead to a product of independent truncated parabolas.

\paragraph{Triangular.} 
Setting $f(t) = -|t-\mu|/b$, with $b>0$,  yields the triangular distribution 
\begin{equation}\label{eq:triangular}
\hat{p}_{\Omega_2}[f](t) = \left[ -\lambda - \tfrac{|t-\mu|}{b}\right]_+ \eqqcolon \mathrm{Tri}(t; \mu,b),
\end{equation}
where $\lambda\! =\! -1/\sqrt{b}$ (see App.~\ref{sec:proof_triangular}). 
\remove{
$\mathrm{supp}(\hat{p}_{\Omega_2}[f]) = [\mu - \sqrt{b}, \mu + \sqrt{b}]$, and $\Omega_2(\hat{p}_{\Omega_2}[f]) = -\frac{1}{2} +  \frac{1}{3\sqrt{b}}.$
}
Fig.~\ref{fig:gaussian_paraboloid} (bottom left) depicts this distribution alongside Laplace.


\paragraph{Location-scale families.} 
More generally, let
$f_{\mu, \sigma}(t) := -\frac{1}{\sigma}g'(|t-\mu|/\sigma)$ for a location  $\mu \in \mathbb{R}$ and a scale  $\sigma > 0$, where
$g:\mathbb{R}_+ \rightarrow \mathbb{R}$ is convex and continuously differentiable.
Then, we have
\begin{equation}
\hat{p}_{\Omega_2}[f](t) = \left[-\lambda - \tfrac{1}{\sigma}g'(|t-\mu|/\sigma)\right]_+,
\end{equation}
where $\lambda = -g'(a)/\sigma$
and $a$ is the solution of the equation $ag'(a) - g(a) + g(0) = \frac{1}{2}$ (a sufficient condition for such solution to exist is $g$ being strongly convex; see App.~\ref{sec:proof_location_scale} for a proof).  
\remove{
The support of this distribution is
$\mathrm{supp}(\hat{p}_{\Omega_2}[f_{\mu, \sigma}]) = [(-a+\mu)/\sigma, (a+\mu)/\sigma]$. 
}
This example subsumes the truncated parabola ($g(t) = t^3/6$) and the triangular distribution ($g(t) = t^2/2$). 

\paragraph{2-sparse families.} 
Truncated parabola and paraboloid distributions 
form a 2-sparse family, with statistics $\phi(t) = [t, \mathrm{vec}(tt^\top)]$ and  canonical parameters $\theta = [\Sigma^{-1}\mu, \mathrm{vec}(-\frac{1}{2}\Sigma^{-1})]$. 
\remove{
From (\ref{eq:sparsemax}--\ref{eq:truncated_parabola}):
\begin{equation*}
    A_2(\theta) = -\frac{1}{2}\left(\frac{3}{2\sigma}\right)^{2/3} + \frac{\mu^2}{2\sigma^2} + 1 =  -\frac{\theta_1^2}{4\theta_2} + \left(\frac{9\theta_2}{16}\right)^{1/3} + 1.
\end{equation*}
Contrast this expression with the one for Gaussian distributions, which form an exponential family with the same sufficient statistics and canonical parameters, for which $A_1(\theta) = -\frac{\theta_1^2}{4\theta_2} - \frac{1}{2}\log(-2\theta_2)$.
}
Gaussian distributions form an exponential family with the same sufficient statistics and canonical parameters. 
In 1D, truncated parabola and Gaussians are both particular cases of the so-called ``$q$-Gaussian'' \citep[\S 4.1]{naudts2009q}, for $q = 2-\alpha$. 
\remove{Fig.~\ref{fig:gaussian_laplacian_distributions} (right) shows $(2-\alpha)$-Gaussian distributions for $\alpha \in [1,2]$.}
Triangular distributions with a fixed location $\mu$ and varying scale $b$ also form a 2-sparse family (similarly to Laplace distributions with fixed location being exponential families).

\section{Continuous Attention}\label{sec:attention}


Attention mechanisms have become a key component of  neural networks 
\citep{bahdanau2014neural,sukhbaatar2015end,vaswani2017attention}. They dynamically detect and extract relevant input features (such as words in a text or regions of an image). So far, attention has only been applied to discrete domains; we generalize it to {\it continuous} spaces.

\paragraph{Discrete attention.}
Assume an input object split in $L=|S|$ pieces, \textit{e.g.}, a document with $L$ words or an image with $L$ regions. 
A vanilla attention mechanism works as follows: each piece has a $D$-dimensional representation (\textit{e.g.}, coming from an RNN or a CNN), yielding a matrix $V \in \mathbb{R}^{D\times L}$. 
These representations are compared against a query vector (\textit{e.g.}, using an additive model \citep{bahdanau2014neural}), leading to a score vector $f = [f_1, \ldots, f_L] \in \mathbb{R}^L$. 
Intuitively, the relevant pieces that need attention should be assigned high scores. Then, a transformation $\rho : \mathbb{R}^L \rightarrow \triangle^{L}$ (\textit{e.g.}, softmax or sparsemax) is applied to the score vector to produce a probability vector $p = \rho(f)$.
We may see this as an $\Omega$-RPM. The probability vector $p$ is then used to compute a weighted average of the input representations, via $c = Vp \in \mathbb{R}^D$. This context vector $c$ is finally used to produce the network's decision.

To learn via the backpropagation algorithm, the Jacobian of the transformation $\rho$, $J_{\rho} \in \mathbb{R}^{L \times L}$, is needed. 
\citet{Martins2016ICML} gave  expressions for softmax and sparsemax,
\begin{equation}\label{eq:jacobians_discrete}
J_{\mathrm{softmax}}(f) = \mathrm{Diag}(p) - pp^\top, \qquad
J_{\mathrm{sparsemax}}(f) = \mathrm{Diag}(s) - ss^\top/(1^\top s),
\end{equation}
where $p=\mathrm{softmax}(f)$, and $s$ is a binary vector whose $\ell$\textsuperscript{th} entry is 1 iff $\ell \in \mathrm{supp}(\mathrm{sparsemax}(f))$. 

\subsection{The continuous case: score and value functions}
\label{subsec:continuous_attention}

Our extension of $\Omega$-RPMs to arbitrary domains (Def.~\ref{def:regularized_prediction}) opens the door for constructing {\bf continuous attention mechanisms}. The idea is simple: instead of splitting the input object into a finite set of pieces, we assume an underlying continuous domain: \textit{e.g.}, text may be represented as a function $V:S \rightarrow \mathbb{R}^{D}$ that maps points in the real line ($S \subseteq \mathbb{R}$, continuous time) onto a $D$-dimensional vector representation, representing the ``semantics'' of the text evolving over time; images may be regarded as a smooth function in 2D ($S \subseteq \mathbb{R}^2$), instead of being split into regions in a grid. 

Instead of scores $[f_1, \ldots, f_L]$, we now have a {\bf score function} $f: S  \rightarrow
\mathbb{R}$, which we map to a probability density $p \in \mathcal{M}_+^1(S)$. 
This density is used in tandem with the  value mapping $V: S \rightarrow \mathbb{R}^D$ to obtain a context vector $c = \mathbb{E}_{p} [V(t)]  \in \mathbb{R}^D$. 
Since $\mathcal{M}_+^1(S)$ may be infinite dimensional, we need to parametrize  $f$, $p$, and $V$ to be able to compute in a finite-dimensional parametric space.

\paragraph{Building attention mechanisms.}
We represent $f$ and $V$ using basis functions, 
$\phi: S \rightarrow \mathbb{R}^M$ and $\psi: S \rightarrow \mathbb{R}^N$, 
defining $f_\theta(t) = \theta^\top \phi(t)$
and $V_B(t) = B\psi(t)$, 
where $\theta \in \mathbb{R}^M$ and $B \in \mathbb{R}^{D \times N}$. 
The score function $f_\theta$ is mapped into a probability density $p := \hat{p}_{\Omega}[f_\theta]$, 
from which we compute the context vector as $c=\mathbb{E}_{p}[V_B(t)]=Br$,  with $r =\mathbb{E}_{p}[\psi(t)]$. 
\remove{
from which the expectation
$r = \mathbb{E}_{p}[\psi(t)] \in \mathbb{R}^N$ is then obtained.
Given $r$, a context vector is computed as $c = Br$; from the definition of $V_B(t)$, this is equivalent to writing $c = \mathbb{E}_{p}[V_B(t)]$. 
}
Summing up yields the  following:
\vspace{0.1cm}
\begin{definition}\label{def:attention_mechanism}
Let $\langle S, \Omega, \phi, \psi \rangle$ be a tuple with  $\Omega: \mathcal{M}_+^1(S) \rightarrow \mathbb{R}$, $\phi:S\rightarrow \mathbb{R}^M$, and $\psi:S\rightarrow \mathbb{R}^N$.
An \textbf{attention mechanism} is a mapping $\rho: \Theta \subseteq \mathbb{R}^M \rightarrow \mathbb{R}^N$, defined as:
\begin{equation}\label{eq:attention_expectation}
    \rho(\theta) = \mathbb{E}_{p}[\psi(t)],
\end{equation}
with $p = \hat{p}_\Omega[f_\theta]$ and $f_\theta(t) = \theta^\top \phi(t)$.
If $\Omega = \Omega_\alpha$, we  call this \textbf{entmax} attention, denoted as $\rho_\alpha$. The values $\alpha=1$ and $\alpha=2$ lead to softmax and sparsemax attention, respectively.
\end{definition}

Note that, if $S = \{1, ..., L\}$ and $\phi(k) = \psi(k) = e_k$ (Euclidean canonical basis), we recover the discrete attention of  \citet{bahdanau2014neural}.  
Still in the finite case, if $\phi(k)$ and $\psi(k)$ are key and value vectors and $\theta$ is a query vector, this recovers the key-value attention of \citet{vaswani2017attention}.  

On the other hand, for $S = \mathbb{R}^D$ and $\phi(t) = [t, \mathrm{vec}(tt^\top)]$, we obtain new attention mechanisms (assessed experimentally for the 1D and 2D cases 
in \S\ref{sec:experiments}): for $\alpha=1$, the underlying density $p$ is Gaussian, and for $\alpha=2$, it is a truncated paraboloid (see \S\ref{sec:sparsemax}). In both cases, we show (App.~\ref{sec:gaussian_basis}) that the expectation \eqref{eq:attention_expectation} is tractable (1D) or simple to approximate numerically (2D) if 
$\psi$ are Gaussian RBFs, and we use this fact in \S\ref{sec:experiments} (see  Alg.~\ref{algo:forward_backward_gaussian} for pseudo-code for the case $\alpha=1$).

\begin{algorithm}[t]
\small
\SetAlgoLined
\SetKwInput{KwInput}{Parameters}
\SetKwFunction{FRegression}{Regression}
\SetKwFunction{FForward}{Forward}
\SetKwFunction{FBackward}{Backward}
\def\algspace{.5\baselineskip}
\SetKwProg{Fn}{Function}{:}{}
\KwInput{Gaussian RBFs $\psi(t) = [\mathcal{N}(t; \mu_j, \Sigma_j)]_{j=1}^N$, basis functions $\phi(t) = [t, \mathrm{vec}(tt^\top)]$, value function $V_B(t) = B\psi(t)$ with $B \in \mathbb{R}^{D \times N}$, score function $f_\theta(t) = \theta^\top \phi(t)$ with $\theta \in \mathbb{R}^M$}
\vspace{\algspace}
\Fn{\FForward{$\theta := [\Sigma^{-1}\mu, -\frac{1}{2}\Sigma^{-1}]$}}{
    $r_j \leftarrow \mathbb{E}_{\hat{p}_\Omega[f_\theta]}[\psi_j(t)] = \mathcal{N}(\mu, \mu_j, \Sigma+\Sigma_j), \quad \forall j \in [N]$\hfill%
    \tcp*[h]{Eqs.\,\ref{eq:attention_expectation}, \ref{eq:continuous_softmax_forward_pass}}\\
    \KwRet{$c \leftarrow Br$ (context vector)}%
}
\vspace{\algspace}
\vspace{\algspace}
\Fn{\FBackward{$\frac{\partial \mathcal{L}}{\partial c}, \theta := [\Sigma^{-1}\mu, -\frac{1}{2}\Sigma^{-1}]$}}{
    \For{$j\gets1$ \KwTo $N$}{
        $\tilde{s} \leftarrow \mathcal{N}(\mu, \mu_j, \Sigma+\Sigma_j), \,\,
        \tilde{\Sigma} \leftarrow (\Sigma^{-1}+\Sigma_j^{-1})^{-1}, \,\,
        \tilde{\mu} \leftarrow \tilde{\Sigma}(\Sigma^{-1}\mu + \Sigma_j^{-1}\mu_j)$\\
        $\frac{\partial r_j}{\partial \theta} \leftarrow \mathrm{cov}_{\hat{p}_\Omega[f_\theta]}(\phi(t), \psi_j(t)) = [\tilde{s}(\tilde{\mu} - \mu); \tilde{s}(\tilde{\Sigma} + \tilde{\mu}\tilde{\mu}^\top - \Sigma - \mu\mu^\top)]$\hfill%
    \tcp*[h]{Eqs.\,\ref{eq:jacob}, \ref{eq:continuous_softmax_backward_pass_01}--\ref{eq:continuous_softmax_backward_pass_02}}\\        
    }
    \KwRet{$\frac{\partial \mathcal{L}}{\partial \theta} \leftarrow \left(\frac{\partial r}{\partial \theta}\right)^\top B^\top \frac{\partial \mathcal{L}}{\partial c}$}
}
\caption{Continuous softmax attention with $S=\mathbb{R}^D$, $\Omega=\Omega_1$, and Gaussian RBFs.\label{algo:forward_backward_gaussian}}
\end{algorithm}

\paragraph{Defining the value function $V_B(t)$.}
In many problems, the input is a discrete sequence of observations (\textit{e.g.}, text) or it was discretized (\textit{e.g.}, images), at locations $\{t_\ell\}_{\ell=1}^L$. To turn it into a continuous signal, we need to smooth and interpolate these observations. If we start with a  discrete encoder representing the input as a matrix $H \in \mathbb{R}^{D \times L}$, one way of obtaining a value mapping $V_B: S \rightarrow \mathbb{R}^D$ is by ``approximating'' $H$ with {\it multivariate ridge regression}. With  $V_B(t) = B \psi(t)$, 
and packing the basis vectors $\psi(t_\ell)$ as columns of matrix $F \in \mathbb{R}^{N\times L}$, we obtain: \begin{equation}\label{eq:B_regression}
    B^\star \,\,=\,\, \arg\min_B \|BF - H\|_F^2 + \lambda \|B\|_F^2 \,\,=\,\,  HF^\top (FF^\top + \lambda I_N)^{-1} \,\,=\,\, HG,
\end{equation}
where $\|.\|_F$ is the Frobenius norm, and
the $L\times N$ matrix $G = F^\top (FF^\top + \lambda I_N)^{-1}$ depends only on the values of the basis functions at discrete time steps and can be obtained off-line for different input lenghts $L$.
The result is an expression for $V_B$ with $ND$ coefficients, cheaper than $H$ if $N \ll L$. 

\subsection{Gradient backpropagation with continuous attention}\label{sec:jacobian}

The next proposition, based on Proposition~\ref{prop:gradient_A} and proved in App.~\ref{sec:proof_jacobian_entmax}, allows backpropagating over continuous entmax attention mechanisms. We define, for $\beta \ge 0$, a {\it generalized $\beta$-covariance},
\begin{equation}\label{eq:beta_covariance}
\mathrm{cov}_{p, \beta}[\phi(t), \psi(t)] \,\,=\,\, \|p\|_\beta^\beta \times \left( \mathbb{E}_{\tilde{p}_\beta}\big[\phi(t) \psi(t)^\top\big] - \mathbb{E}_{\tilde{p}_\beta}[\phi(t)]\,  \mathbb{E}_{\tilde{p}_\beta}[\psi(t)]^\top
\right),
\end{equation}
where $\tilde{p}_\beta$ is the $\beta$-escort distribution in \eqref{eq:escort}.
For $\beta=1$, we have the usual covariance; for $\beta=0$, we get a covariance taken w.r.t.\ a uniform density on the support of $p$,  scaled by $|\mathrm{supp}(p)|$.

\vspace{0.1cm}
\begin{proposition}\label{prop:jacobian_entmax}
Let $p = \hat{p}_{\Omega_\alpha}[f_\theta]$ with $f_\theta(t) = \theta^\top \phi(t)$.
The Jacobian of the $\alpha$-entmax is:
    \begin{equation}\label{eq:jacob}
        J_{\rho_\alpha}(\theta) = \frac{\partial \rho_\alpha(\theta)}{\partial \theta} = \mathrm{cov}_{p, 2-\alpha}(\phi(t), \psi(t)).
    \end{equation}
\end{proposition}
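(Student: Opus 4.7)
The plan is to differentiate the definition $\rho_\alpha(\theta) = \int_S p_\theta(t)\,\psi(t)\,d\nu(t)$ under the integral, with $p_\theta(t)=\exp_{2-\alpha}(\theta^\top\phi(t)-A_\alpha(\theta))$ given by Proposition~\ref{prop:solution_rpm_tsallis}. The whole computation reduces to knowing (i) how the $\beta$-exponential differentiates, and (ii) the gradient of $A_\alpha$, which is already handed to us by Proposition~\ref{prop:gradient_A}. Once those ingredients are in place, the Jacobian of $\rho_\alpha$ should collapse into the generalized $\beta$-covariance from \eqref{eq:beta_covariance} with $\beta=2-\alpha$.

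First I would record the key identity for the deformed exponential: from the definition $\exp_\beta(u)=[1+(1-\beta)u]_+^{1/(1-\beta)}$, a direct computation gives $\tfrac{d}{du}\exp_\beta(u)=\exp_\beta(u)^{\beta}$ wherever $\exp_\beta(u)>0$. Applied pointwise with $\beta=2-\alpha$ and $u=g_\theta(t):=\theta^\top\phi(t)-A_\alpha(\theta)$, the chain rule yields, at every $t\in\mathrm{supp}(p_\theta)$,
\begin{equation*}
\nabla_\theta p_\theta(t) \,=\, p_\theta(t)^{2-\alpha}\bigl(\phi(t)-\nabla_\theta A_\alpha(\theta)\bigr),
\end{equation*}
and zero outside the support. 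Substituting Proposition~\ref{prop:gradient_A}, $\nabla_\theta A_\alpha(\theta)=\mathbb{E}_{\tilde p_\theta^{2-\alpha}}[\phi(t)]$, recognizes $\nabla_\theta A_\alpha(\theta)$ as the escort mean of $\phi$.

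Next I would differentiate under the integral in the definition of $\rho_\alpha$, plug in the formula above, and split the result into two terms:
\begin{equation*}
J_{\rho_\alpha}(\theta) \,=\, \int_S p_\theta(t)^{2-\alpha}\phi(t)\psi(t)^\top \,-\, \Bigl(\int_S p_\theta(t)^{2-\alpha}\psi(t)\Bigr)\,\mathbb{E}_{\tilde p_\theta^{2-\alpha}}[\phi(t)]^\top.
\end{equation*}
Factoring $\|p_\theta\|_{2-\alpha}^{2-\alpha}$ and using the definition \eqref{eq:escort} of the escort density to rewrite both integrals as escort expectations, the right-hand side is exactly $\mathrm{cov}_{p,2-\alpha}(\phi(t),\psi(t))$ as in \eqref{eq:beta_covariance}, proving \eqref{eq:jacob} (up to the convention in which $\phi$ indexes columns and $\psi$ indexes rows, matching the shape of $\rho_\alpha:\mathbb{R}^M\to\mathbb{R}^N$).

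The main obstacle is that $p_\theta$ may have sparse support for $\alpha>1$ and is therefore non-differentiable along the boundary $\{t:1+(\alpha-1)g_\theta(t)=0\}$, so one must justify swapping $\nabla_\theta$ with the integral. The standard remedy is to observe that the boundary has $\nu$-measure zero and that $p_\theta$ vanishes continuously there, so the usual boundary contribution that would arise from a moving support is zero; then dominated convergence applied on compact neighborhoods of any $\theta$ in the interior of $\Theta$ legitimizes differentiation under the integral. Once this technicality is handled, the algebraic identification with the $\beta$-covariance is immediate and the $\alpha=1$ case recovers the familiar covariance formula for exponential-family attention as a sanity check.
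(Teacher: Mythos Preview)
Your proposal is correct and follows essentially the same route as the paper's proof: differentiate $\mathbb{E}_p[\psi(t)]$ under the integral, use $\nabla_\theta p_\theta(t)=p_\theta(t)^{2-\alpha}(\phi(t)-\nabla_\theta A_\alpha(\theta))$ (the paper phrases this via $\nabla_\theta \log_{2-\alpha} p_\theta$, which is the same identity), and then invoke Proposition~\ref{prop:gradient_A} to recognize the generalized covariance. Your version is in fact more explicit than the paper's about the derivative of $\exp_\beta$ and about the measure-zero boundary issue when $\alpha>1$; the paper's appendix skips these technicalities entirely.
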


Note that in the finite case, \eqref{eq:jacob} reduces to the expressions in \eqref{eq:jacobians_discrete} for softmax and sparsemax. 

\paragraph{Example: Gaussian RBFs.} 
As before, let $S = \mathbb{R}^D$, $\phi(t) = [t, \mathrm{vec}(tt^\top)]$, and $\psi_j(t) = \mathcal{N}(t; \mu_j, \Sigma_j)$. 
For $\alpha=1$, we obtain closed-form expressions for the expectation \eqref{eq:attention_expectation} and the Jacobian \eqref{eq:jacob}, for any $D \in \mathbb{N}$: 
$\hat{p}_\Omega[f_\theta]$ is a Gaussian, the expectation 
\eqref{eq:attention_expectation} is the integral of a product of Gaussians, and the covariance \eqref{eq:jacob} involves first- and second-order Gaussian moments. Pseudo-code for the case $\alpha=1$ is shown as Alg.~\ref{algo:forward_backward_gaussian}. 
For $\alpha=2$, $\hat{p}_\Omega[f_\theta]$ is a truncated paraboloid. 
In the 1D case, both \eqref{eq:attention_expectation} and \eqref{eq:jacob} can be expressed in closed form in terms of the $\mathrm{erf}$ function. 
In the 2D case, we can reduce the problem to 1D integration using the change of variable formula and working with polar coordinates. See App.~\ref{sec:gaussian_basis} for details. 

We use the facts above in the experimental section (\S\ref{sec:experiments}), where we experiment with continuous variants of softmax and sparsemax attentions in natural language processing and vision applications.

\remove{
\section{Continuous Fenchel-Young Losses}\label{sec:fy_losses}

We saw above how to construct distributions $p$ from $f_\theta$ via the $\Omega$-RPM (\S\ref{sec:sparse_families}), and how to used them to build attention mechanisms (\S\ref{sec:attention}). What about the reverse: can we estimate $\theta$ from an empirical data distribution $\bar{p}$? 
For finite $S$, \citet{blondel2020learning} introduced the notion of {\bf Fenchel-Young loss}. Here, we extend that notion to arbitrary domains.\footnote{The construction hinges on the notion of Fenchel dual, denoted $\Omega^*$, of an l.s.c. proper convex function $\Omega\!:\!\mathcal{M}_+^1(S) \rightarrow \mathbb{R}$ \citep{Bauschke_Combettes2011}:
\begin{equation*}
    \Omega^*(f) = \!\!\!\! \max_{p\in \mathcal{M}_+^1(S)} \! \mathbb{E}_{p}[f(t)] - \Omega({p}) 
 \! =\!  \mathbb{E}_{\hat{p}_\Omega[f]}[f(t)] - \Omega(\hat{p}_\Omega[f]).\label{eq:Omega_star2}
\end{equation*}}

\begin{definition}
Given an l.s.c., proper, strictly convex function $\Omega:\mathcal{M}_+^1(S) \rightarrow \mathbb{R}$, the \textbf{Fenchel-Young loss} (FY loss) $L_{\Omega} : \mathcal{F} \times \mathcal{M}_+^1(S) \rightarrow \mathbb{R}$, defined as 
\begin{equation}
L_{\Omega}(f; p) = \Omega^*(f) + \Omega(p) - \mathbb{E}_{p}[f(t)].
\end{equation}
\end{definition}

One target use of FY losses is to estimate $f$, given some empirical data distribution $\bar p$, by minimizing $L_{\Omega}(f; \bar{p})$. The following proposition (that stems directly from the definition of $\Omega^*$ in \eqref{eq:Omega_star2}) motivates this use of the FY loss.
\vspace{0.1cm}
\begin{proposition}\label{prop:fy_properties1}
For any l.s.c., proper, strictly convex function $\Omega:\mathcal{M}_+^1(S) \rightarrow \mathbb{R}$, we have that $L_{\Omega}(f;p) \geq 0$ and  $L_{\Omega}(f;p) = 0 \Leftrightarrow p = \hat{p}_{\Omega}[f]$ almost everywhere.
\end{proposition}

For the parametric case, $f_{\theta}(t) = \theta^{\top}\phi(t)$, the following proposition (proved in App.~\ref{sec:proof_fy_properties}) extends the result of \citet{blondel2020learning} 
to arbitrary domains, creating the path to estimating $\theta$ from data and showing the existence of fixed-dimensional sufficient statistics for $\alpha$-sparse families. 
\vspace{0.05cm}
\begin{proposition}\label{prop:fy_properties}
Let $f_{\theta}(t) = \theta^\top \phi(t)$. Then, 
\vspace{-0.2cm}
\begin{enumerate}[nosep]
\item $\nabla_{\theta} L_{\Omega}(f_\theta; p) = \mathbb{E}_{\hat{p}_{\Omega}[f_\theta]}[\phi(t)] - \mathbb{E}_{p}[\phi(t)]$.
    \item $L_{\Omega_{\alpha}}(f_\theta; p)$ is convex w.r.t.\ $\theta$. 
\item $\hat{\theta} \in \arg\min_{\theta} L_{\Omega_{\alpha}}(f_\theta; p) \Leftrightarrow \mathbb{E}_{\hat{p}_{\Omega}[f_{\hat{\theta}}]}[\phi(t)] = v$,
    where $v = \mathbb{E}_{p}[\phi(t)]$.
\end{enumerate}
\end{proposition}
In point 3, if $p=\bar{p}$, an empirical data distribution, $v$ is the empirical mean of the statistics;  the statement shows that estimating $\theta$ only depends on 
$\bar{p}$ through $v$, which generalizes the concept of sufficient statistics from exponential families.

\begin{example}\label{ex:fy_gaussians} For $\alpha= 1$ (Shannon entropy), the Fenchel dual is the log-partition function  \eqref{eq:boltzmann}, $\Omega_1^*(f) = A_1(f)$, and the FY loss recovers the  Kullback-Leibler divergence (KLD).  
For $p(t) = \mathcal{N}(t; \mu_p, \sigma_p^2)$ and $f(t) = -\frac{(t-\mu)^2}{2\sigma^2}$, 
\begin{equation}\label{eq:kl_gaussians}
L_{\Omega_1}(f; p) = \log\left(\tfrac{\sigma}{\sigma_p}\right) + \tfrac{\sigma_p^2 + (\mu_p - \mu)^2}{2\sigma^2} - \tfrac{1}{2},
\end{equation}
since the negentropy of a Gaussian is as given in Example~\ref{ex:Gaussian}. This is the well-known KLD between two Gaussians.
\end{example}

As shown in App.~\ref{sec:gini_ent_sparse_family}, 
the conjugate of $\Omega_\alpha$ is
\begin{equation}\label{eq:omega_conjugate}
\Omega_\alpha^*(f) = (\alpha-1)\Omega_{\alpha}(\hat{p}_{\Omega_\alpha}[f]) + \lambda + (\alpha-1)^{-1},
\end{equation}
from which the following examples of FY losses can be obtained (full derivations are in App.~\ref{sec:fy_example_derivations}).

\begin{example} For $\alpha =2$ and $f(t) = -\frac{(t-\mu)^2}{2\sigma^2}$, we have (see \eqref{eq:truncated_parabola}) $\hat{p}_{\Omega_\alpha}[f] = \mathrm{TP}(t; \mu, \sigma^2)$ and $\lambda$ as given by \eqref{eq:lambda_gaussian}. Plugging these in \eqref{eq:omega_conjugate} yields $\Omega_2^*(f) = \tfrac{1}{2} -\tfrac{3}{10} \left(\tfrac{3}{2\sigma} \right)^{2/3}.$
If $p(t) = \mathrm{TP}(t; \mu_p, \sigma_p^2)$ is another truncated parabola, the FY loss between $f$ and $p$ becomes (see App.~\ref{sec:fy_example_derivations}):
\begin{equation*}
L_{\Omega_2}(f; p) = \tfrac{ \sqrt[3]{\tfrac{9}{4}}\Bigl( \sigma_p^{\tfrac{4}{3}} - 3 \sigma^{\tfrac{4}{3}} + 2\sigma^2\sigma_p^{-\frac{2}{3}} \Bigr)  }{10 \sigma^2} +  \frac{(\mu-\mu_p)^2}{2\sigma^2}.
\end{equation*}
If $\sigma=\sigma_p$,  $L_{\Omega_2}$ and \eqref{eq:kl_gaussians} are both equal to $(\mu-\mu_p)^2/(2\sigma^2)$.

\end{example}

\begin{example}\label{ex:fy_triangular}
If $f(t) = -\frac{|t-\mu|}{b}$  and 
$p(t) = \mathrm{Tri}(t; 0, b_p)$
\eqref{eq:triangular}, we have
\begin{equation*}
L_{\Omega_2}(f; p) =
\tfrac{\sqrt{b} - 2\sqrt{b_p}}{3 \sqrt{b\, b_p}} 
+ \max \Bigl\{\tfrac{|\mu|}{b}, \,\, -\tfrac{|\mu|^3}{3 b_p b} +\tfrac{\mu^2}{b_p^{1/2} b} + \tfrac{b_p^{1/2}}{3b}\Bigr\}.
\end{equation*}
\end{example}

We use the FY expressions from examples~\ref{ex:fy_gaussians}-\ref{ex:fy_triangular} for our interval regression experiments in \S\ref{sec:experiments}.
}

\section{Experiments}\label{sec:experiments}

As proof of concept, we test our continuous attention mechanisms on three tasks: document classification, machine translation, and visual question answering (more experimental details in App.~\ref{sec:model_hyperparams}). 

\remove{
\subsection{Continuous attention mechanisms}
}

\paragraph{Document classification.} 

Although textual data is fundamentally discrete, 
modeling long documents as a continuous signal may be advantageous, due to smoothness and independence of length. 
To test this hypothesis, we use the IMDB movie review dataset \citep{maas2011learning}, 
whose inputs are documents (280 words on average) and  outputs are sentiment labels (positive/negative). Our baseline is a biLSTM with discrete attention. 
For our continuous attention models, we normalize the document length $L$ into the unit interval $[0,1]$, and use $f(t)\! =\! \sfrac{-(t-\mu)^2}{2\sigma^2}$ as the score function, leading to a 1D Gaussian ($\alpha=1$) or  truncated parabola ($\alpha=2$) as the attention density.  We compare three attention variants: 
{\bf discrete attention} with softmax \citep{bahdanau2014neural} and sparsemax \citep{Martins2016ICML};  
{\bf continuous attention},  
where a CNN and max-pooling  
yield a document representation $v$ from which 
we compute $\mu = \mathrm{sigmoid}(w_1^\top v)$ and $\sigma^2 = \mathrm{softplus}(w_2^\top v)$;   
and {\bf combined attention}, which obtains $p \in \triangle^L$ from discrete attention, computes $\mu = \mathbb{E}_{p}[\ell/L]$ and $\sigma^2 = \mathbb{E}_{p}[(\ell/L)^2] - \mu^2$, applies the continuous attention, and sums the two context vectors (this model has the same number of parameters as the discrete attention baseline). 

\begin{table}[t]
    \caption{Results on IMDB in terms of accuracy (\%). 
    For the continuous attentions, we used $N \in \{32, 64, 128\}$ Gaussian RBFs 
$\mathcal{N}(t, \tilde{\mu}, \tilde{\sigma}^2)$, with $\tilde{\mu}$ linearly spaced in $[0,1]$ and $\tilde{\sigma} \in \{.1, .5\}$. 
    } 
    \label{table:results_doc_classification}
    \vspace{-0.1cm}
    \begin{scriptsize}
    \begin{center}
\begin{tabular}{lc}
        \toprule
        \sc Attention & $\bar{L} \approx 280$ \\
        \midrule 
        Discrete softmax    		& 90.78 	\\
        Discrete sparsemax	 		& 90.58 	\\
       \bottomrule
 \end{tabular}
 \qquad
\begin{tabular}{lccc}
        \toprule
        \sc Attention & $N=32$ & $N=64$ & $N=128$ \\
       \midrule
        Continuous softmax			& 90.20		& 90.68		& 90.52    	\\
        Continuous sparsemax	 	& 90.52		& 89.63		& 90.90   	\\
        Disc. + Cont. softmax		& 90.98		& 90.69		& 89.62   	\\
        Disc. + Cont. sparsemax	 	& \bf 91.10		& \bf 91.18		& \bf 90.98    	\\
        \bottomrule
    \end{tabular}
\end{center}
    \end{scriptsize}
\end{table}

Table~\ref{table:results_doc_classification} 
shows accuracies for different numbers $N$ of Gaussian RBFs. The accuracies of the individual models are similar, suggesting that continuous attention is as effective as its discrete counterpart, despite having fewer basis functions than words, \textit{i.e.}, $N \ll L$. Among the continuous variants, the sparsemax outperforms the softmax, except for $N=64$. We also see that a large $N$ is not necessary to obtain good results, which is encouraging for tasks with long sequences. Finally, combining discrete and continuous sparsemax produced the best results, without increasing the number of parameters.

\begin{figure*}[t]
\centering
\includegraphics[width=0.3\textwidth]{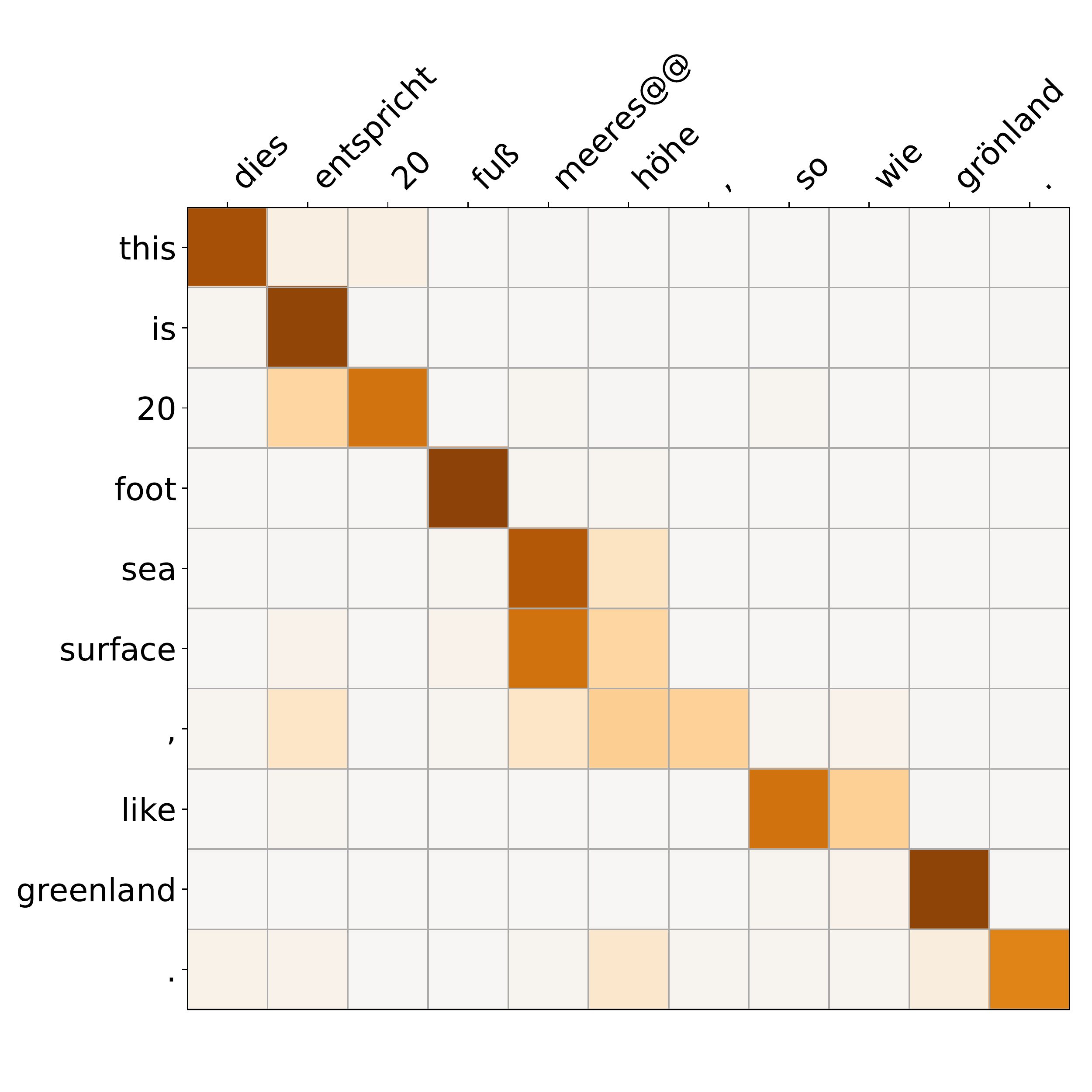}
\includegraphics[width=0.3\textwidth]{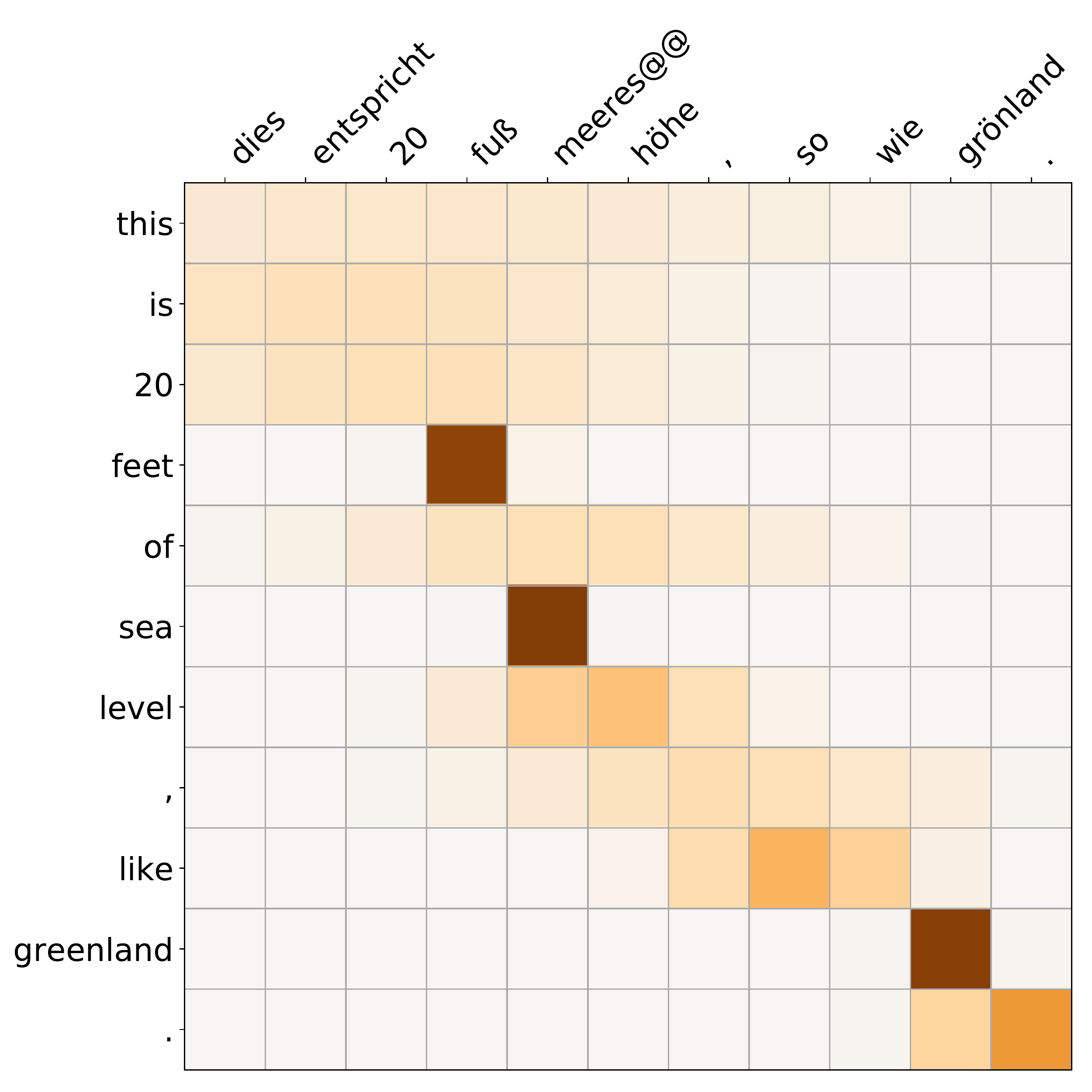}
\includegraphics[width=0.3\textwidth]{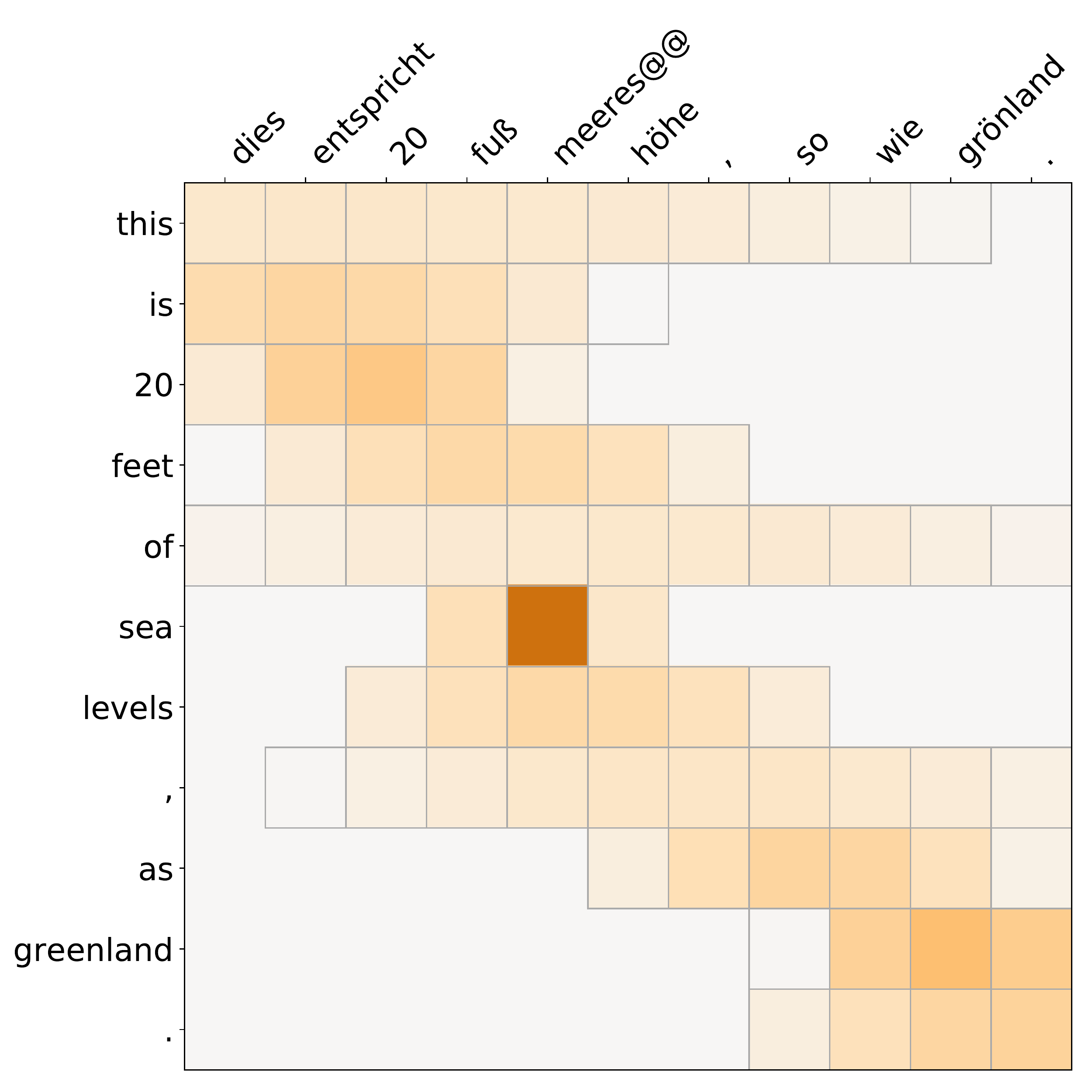}
\caption{\label{fig:attention_maps}Attention maps in machine translation: discrete (left), continuous softmax (middle), and continuous sparsemax (right), for a sentence in the De-En IWSLT17 validation set. In the rightmost plot, the selected words are the ones with positive density. In the test set, these models attained BLEU scores of 23.92 (discrete), 24.00 (continuous softmax), and 24.25 (continuous sparsemax).}
\end{figure*}

\paragraph{Machine translation.} 

We use the De$\to$En IWSLT 2017 dataset \citep{cettolo2017overview}, and a biLSTM model with discrete softmax attention as a baseline. 
For the continuous attention models, we use the combined attention setting described above, with 30 Gaussian RBFs and $\tilde{\mu}$ linearly spaced in $[0,1]$ and $\tilde{\sigma} \in \{.03, .1, .3\}$. 
The results (caption of Fig.~\ref{fig:attention_maps}) 
show a slight benefit in the combined  attention over discrete attention only, without any additional parameters. 
Fig.~\ref{fig:attention_maps} shows heatmaps for the different attention mechanisms on a De$\to$En sentence. The continuous mechanism tends to have attention means close to the diagonal,   adjusting the variances based on alignment confidence or when a larger context is needed ({\it e.g.}, a peaked density for the target word ``sea'', and a flat one for ``of'').

\paragraph{Visual QA.} 


Finally, we report experiments with 2D continuous attention on visual question answering, using the VQA-v2 dataset \cite{Goyal2019} and a modular co-attention network as a baseline \cite{Yu2019}.%
\footnote{Software code is available at \url{https://github.com/deep-spin/mcan-vqa-continuous-attention}.} %
The discrete attention model attends over a  14$\times$14 grid.%
\footnote{An alternative would be bounding box features from an external object detector \citep{anderson2018bottom}. We opted for grid regions to check if continuous attention has the ability to detect relevant objects on its own. However, our method can handle bounding boxes too, if  the $\{t_\ell\}_{\ell=1}^L$ coordinates in the  regression
\eqref{eq:B_regression} are placed on those regions.} %
For continuous attention, we normalize the image size into the unit square $[0,1]^2$. We fit a 2D Gaussian ($\alpha=1$) or truncated paraboloid ($\alpha=2$) as the attention density; both correspond  to  $f(t)=-\frac{1}{2}(t-\mu)^\top\Sigma^{-1}(t-\mu)$, with $\Sigma \succ 0$. We use the mean and variance according to the discrete attention probabilities and obtain $\mu$ and $\Sigma$ with moment matching. We use $N = 100 \ll 14^2$ Gaussian RBFs, with $\tilde{\mu}$ linearly spaced in  $[0,1]^2$ and $\tilde{\Sigma}=0.001\cdot \mathrm{I}$. Overall, the number of neural network parameters is the same as in discrete attention. 

The results in Table~\ref{table:results_vqa} show similar accuracies for all attention models, with a slight advantage for continuous softmax. Figure~\ref{fig:examples_vqa} shows an example (see App.~\ref{sec:model_hyperparams} for more examples and some failure cases): in the baseline model, the discrete attention is too scattered, possibly mistaking the lamp with a TV screen.  The continuous attention models focus on the right region and answer the question correctly, with continuous sparsemax enclosing all the relevant information in its supporting ellipse.

\begin{figure*}[t]
\centering
\includegraphics[width=0.21\textwidth]{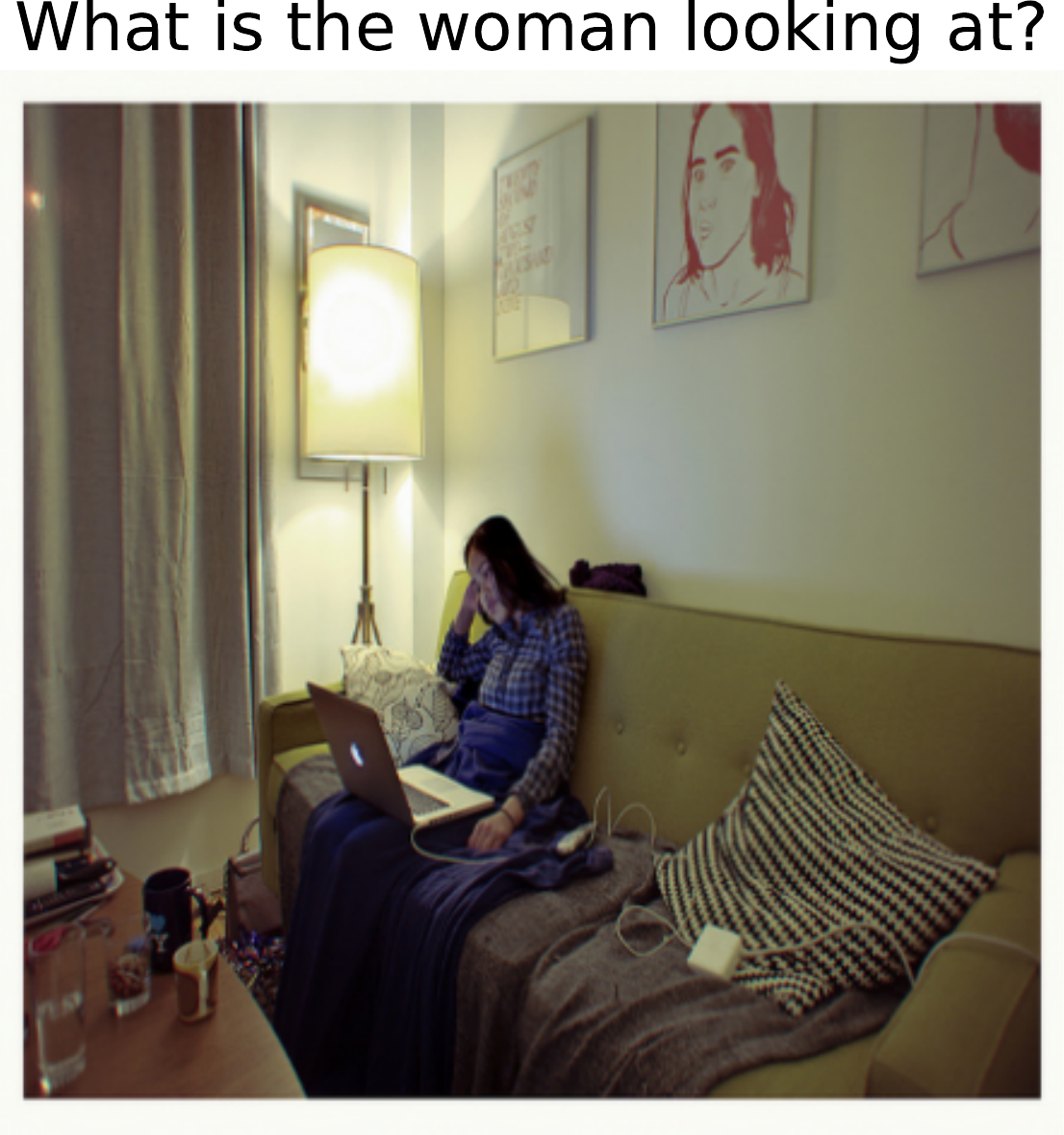}
\includegraphics[width=0.21\textwidth]{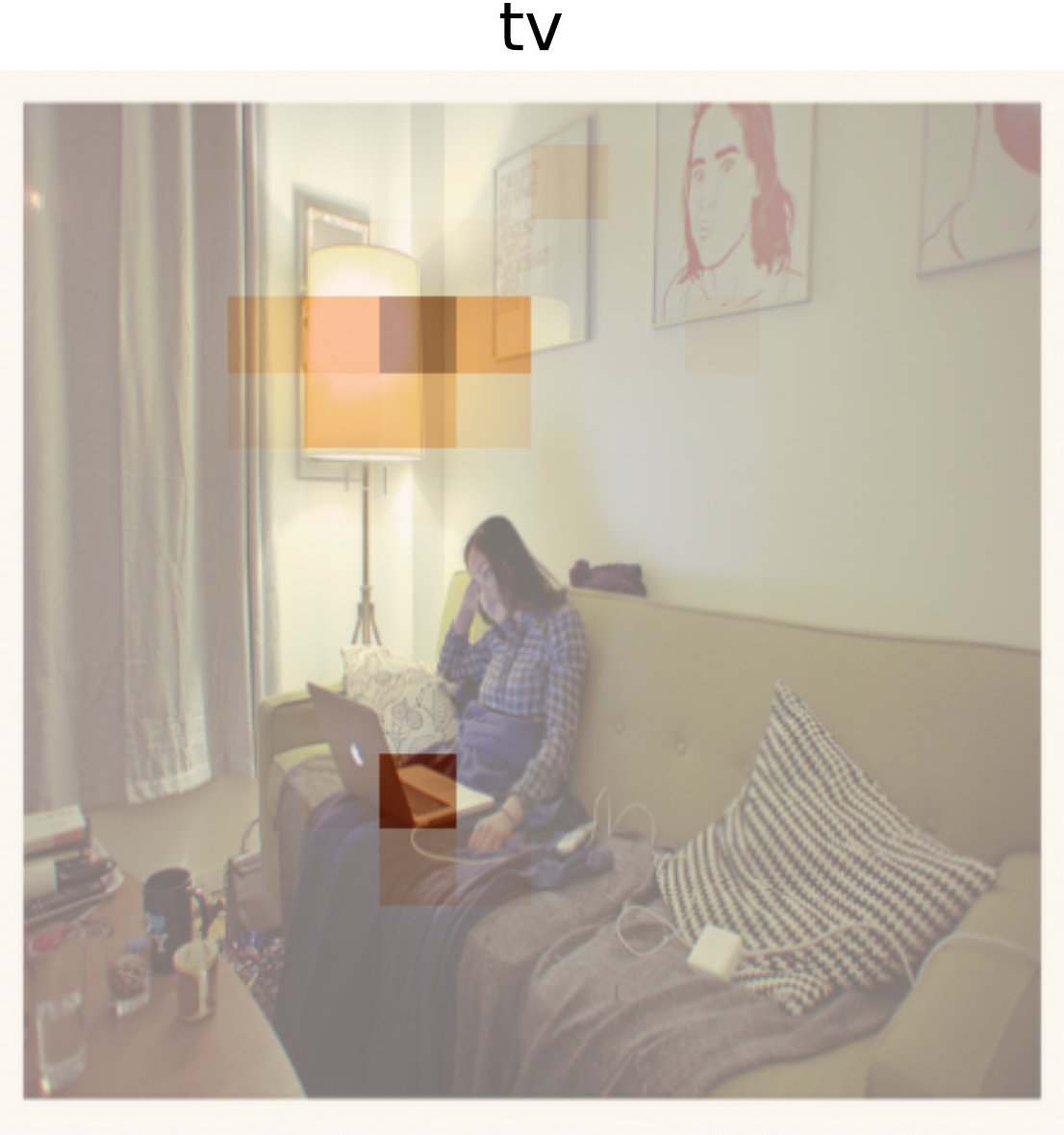}
\includegraphics[width=0.21\textwidth]{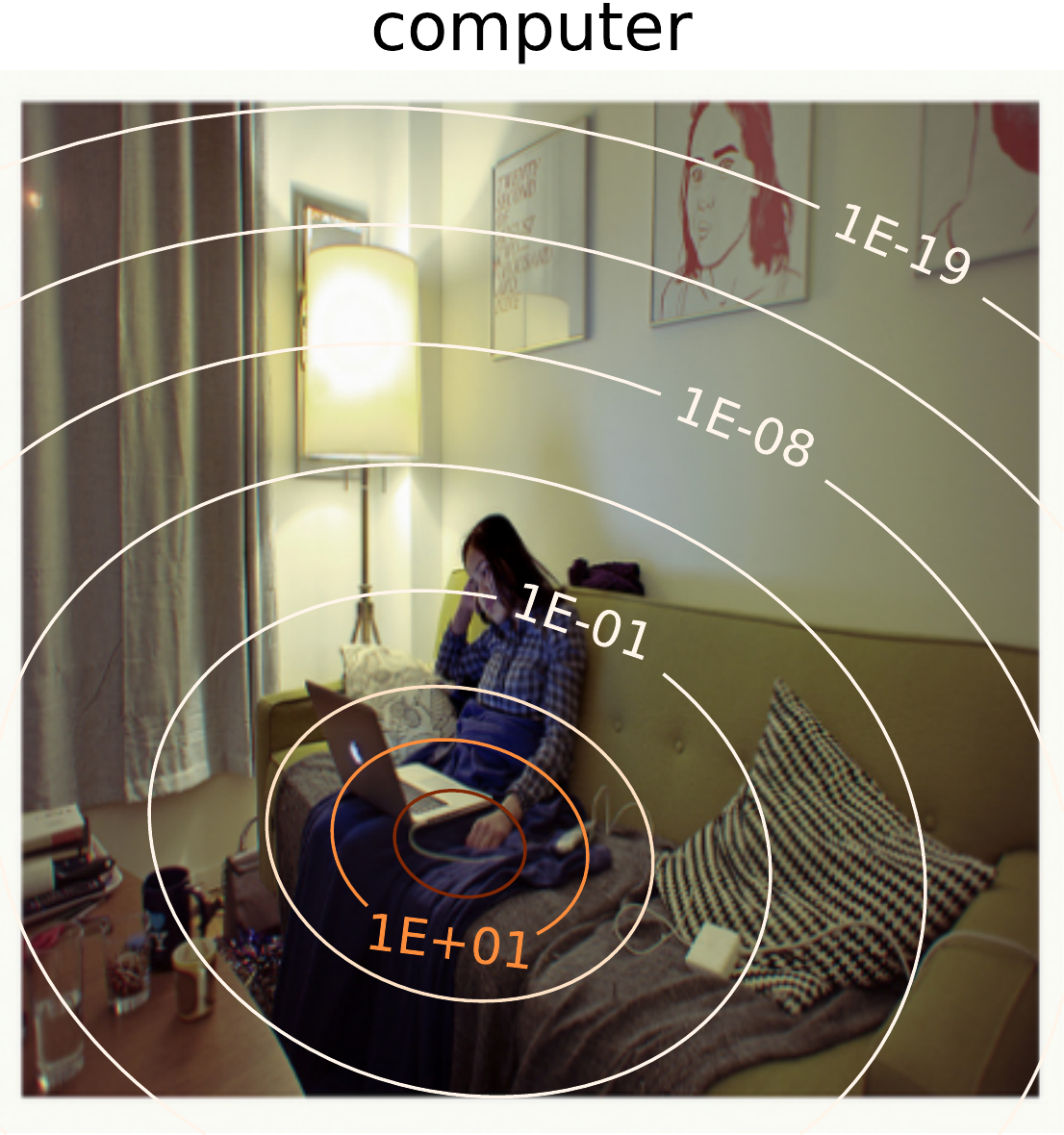}
\includegraphics[width=0.21\textwidth]{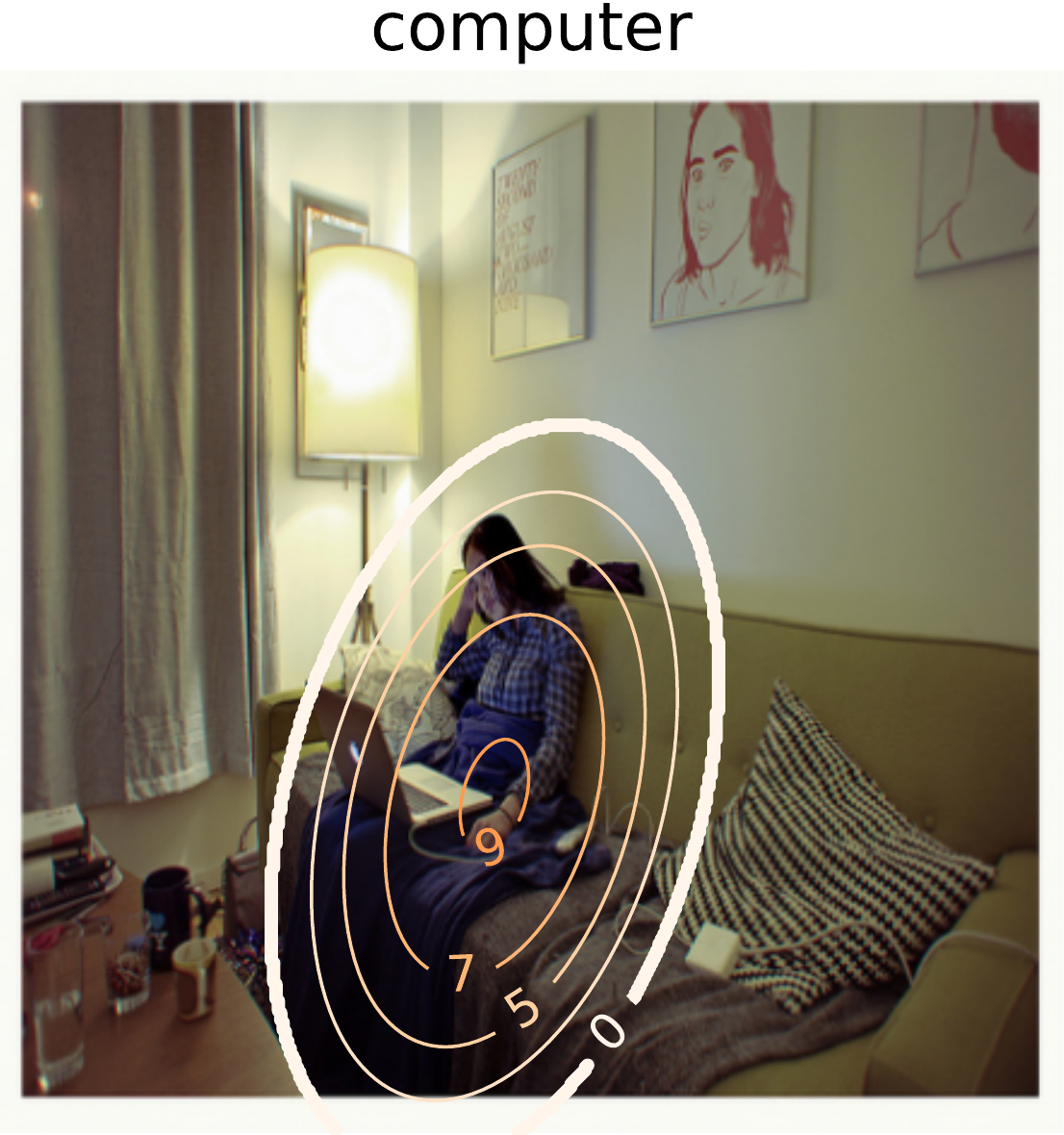}
\caption{\label{fig:examples_vqa}Attention maps for an example in VQA-v2: original image, discrete attention, continuous softmax, and continuous sparsemax. The latter encloses all probability mass within the outer ellipse.}
\end{figure*}

\begin{table}[t]
    \caption{Accuracies of different models on the \textit{test-dev} and \textit{test-standard} splits of VQA-v2.} 
    \label{table:results_vqa}
    \vspace{-.1cm}
    \begin{small}
    \begin{center}
\begin{tabular}{l@{\hspace{10pt}}c@{\hspace{10pt}}c@{\hspace{10pt}}c@{\hspace{10pt}}c@{\hspace{10pt}}c@{\hspace{10pt}}c@{\hspace{10pt}}c@{\hspace{10pt}}c}
        \toprule
        \sc Attention & \multicolumn{4}{c}{Test-Dev}  & \multicolumn{4}{c}{Test-Standard} \\
        {} & Yes/No & Number & Other & Overall & Yes/No & Number & Other & Overall \\
        \midrule 
        Discrete softmax    		& 83.40 & 43.59	& 55.91 & 65.83 & 83.47 & 42.99 & 56.33 & 66.13    	\\
        \midrule
        2D continuous softmax			& 83.40	& 44.80	& 55.88 & \textbf{65.96} & 83.79 & 44.33 & 56.04 & \textbf{66.27}    	\\
        2D continuous sparsemax	 	& 83.10 & 44.12 & 55.95 & 65.79 & 83.38 & 43.91 & 56.14 & 66.10  	\\
        \bottomrule
    \end{tabular}
\end{center}
    \end{small}
\end{table}

\remove{
\subsection{Interval regression}

\andre{Indeed, this is not a censored regression task with prescribed intervals. Interval-valued regression is studied in symbolic data analysis (see \citet{billard2000regression} and \citet{maia2008forecasting}. It is the regression counterpart of multi-label classification. Using two networks is an interesting suggestion (similar to the midpoint and range method of \citet{neto2010constrained}, we will add a comparison.}

Finally, we perform
regression
experiments to assess the continuous FY losses from \S\ref{sec:fy_losses}. Regression typically predicts a scalar outcome; however, when predicting stochastic phenomena, one is often interested in more information than just a point prediction. \textit{E.g.}, interval regression predicts an interval $[y_\text{min}, y_\text{max}]$; intervals are intuitively interpretable and, in some applications, they are available as supervision.

We address temperature prediction as interval
regression, using 2015--2019 data from the U.S.~Climate Reference Network \citep{diamond2013us} for Ithaca, NY, 
predicting minimum and maximum daily temperatures. Let $f_{\theta}$ denote the prediction function, computed in some  parametric form by an NN. Given an interval
$y = [y_\text{min}, y_\text{max}]$, we posit a  target distribution $p_y$,
and minimize the total loss
$
\sum_{(x, y) \in \mathcal{D}} L_{\Omega_\alpha}(f_{\theta(x)}; p_y).
$
The input features $x$ are minimum and maximum temperatures in the previous 2 days, and an index $x_0 \in [0, 1]$ of the target day ($x_0=0$ corresponds
to 2015-01-01 and $x_0=1$ to 2019-12-31). We remove windows with missing data and leave out 99
validation and 99 test days with non-overlapping windows.
We consider the following models for $p_y$: \textbf{(i)} Gaussian with $[\mu-\sigma, \mu+\sigma] = y$;
\textbf{(ii)} Gaussian with 95\% CI equal to $y$; \textbf{(iii)} truncated parabola of support $y$; \textbf{(iv)} triangular of support $y$. These choices correspond to assumptions about daily temperature distributions, given
min and max values; in all cases, we fit a distribution of the same type as $p_y$.

We train a 2-hidden-layer NN, outputting a location
parameter via an affine map, and a scale using a softplus-affine map. We tune the hidden dimension in $\{2^5, ..., 2^9\}$, the Adam learning rate in $.001 \times \{0.5, 1, 2\}$, and the dropout
rate in $\{0.05, 0.1, 0.2\}$. 
The baseline predicts the mid-point of the interval using mean
squared error and sets the interval width to the mean interval length in the training data. Tab.~\ref{tab:weather} shows that a truncated parabola leads to the best fit
in terms of both mean and support, suggesting sparse
distributions as a promising tool for interval regression.

\begin{figure}[h]\centering
\includegraphics[width=.5\textwidth]{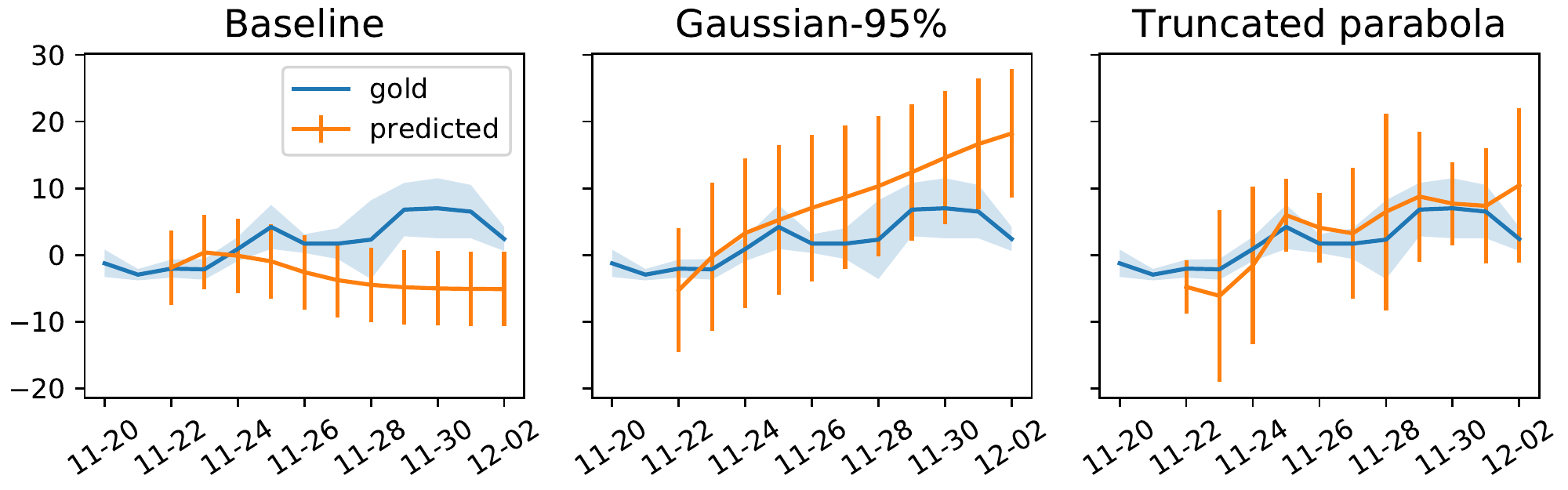}
\vspace{-0.5cm}
\caption{Auto-regressive temperature interval predictions ($^{\circ}$C).}
\end{figure}

\begin{table}[t]
    \caption{\label{tab:weather}Temperature interval prediction: mean squared error (w.r.t.\ the interval midpoint) and Jaccard similarity w.r.t.\ the gold interval.}
    \vskip 0.15in
    \begin{small}
    \begin{center}
    \begin{tabular}{lcc}
    \toprule
    & MSE & {\sc Jaccard} \\
    \midrule
    Baseline
    & 0.0112 & 0.4701 \\
    Gaussian-$\sigma$
    & 0.0111 & 0.4679 \\
    Gaussian-$95\%$
    & 0.0109 & 0.4709 \\
    Truncated parabola
    & \textbf{0.0108} & \textbf{0.4807} \\
    Triangular
    & 0.0110 & 0.4734 \\
    \bottomrule
    \end{tabular}
    \end{center}
    \end{small}
    \vskip -0.2in
\end{table}
}

\section{Related Work}\label{sec:related}
\paragraph{Relation to the Tsallis maxent principle.}
Our paper unifies two lines of work: deformed exponential families from statistical physics \citep{Tsallis1988,naudts2009q,amari2011geometry}, and sparse alternatives to softmax recently proposed in the machine learning literature \citep{Martins2016ICML,peters2019sparse,blondel2020learning}, herein extended to continuous domains. This link may be fruitful for future research in both fields. 
While most prior work is focused on heavy-tailed distributions ($\alpha < 1$), we focus instead on light-tailed, sparse distributions, the other side of the spectrum ($\alpha > 1$). See App.~\ref{sec:tsallis_maxent} for the relation to the Tsallis maxent principle.  

\paragraph{Continuity in other architectures and dimensions.} 
In our paper, we consider attention networks exhibiting temporal/spatial continuity in the input data, be it text (1D) or images (2D). Recent work propose continuous-domain CNNs for 3D structures like point clouds and molecules \citep{wang2018deep,schutt2017schnet}. The dynamics of continuous-time RNNs have been studied in \citep{funahashi1993approximation}, and similar ideas have been applied to irregularly sampled time series \citep{rubanova2019latent}. 
Other recently proposed frameworks produce continuous variants in other dimensions, such as network depth \citep{chen2018neural}, or in the target domain for machine translation tasks \citep{kumar2018mises}.
Our continuous attention networks can be used in tandem with these frameworks.

\paragraph{Gaussian attention probabilities.} 
\citet{cordonnier2020relationship} analyze the relationship between (discrete) attention and convolutional layers, and consider spherical Gaussian attention probabilities as relative positional encodings. By contrast, our approach removes the need for positional encodings: by converting the input to a function on a predefined continuous space, positions are encoded {\it implicitly}, not requiring explicit positional encoding. 
Gaussian attention has also been hard-coded as input-agnostic self-attention layers in transformers for machine translation tasks by 
\citet{you-etal-2020-hard}. 
Finally, in their DRAW architecture for image generation, \citet[\S 3.1]{gregor2015draw} propose a selective attention component which is parametrized by a spherical Gaussian distribution.

\section{Conclusions and Future Work}

We proposed extensions to regularized prediction maps,\remove{and FY losses,} originally defined on finite domains, to arbitrary measure spaces (\S\ref{sec:sparse_families}). 
With Tsallis $\alpha$-entropies for $\alpha>1$, we obtain sparse families, whose members can have zero tails, such as triangular or truncated parabola distributions. 
We then used these distributions to construct continuous attention mechanisms (\S\ref{sec:attention}). We derived their Jacobians in terms of generalized covariances (Proposition~\ref{prop:jacobian_entmax}), allowing for efficient forward and backward propagation.  Experiments for 1D and 2D cases were shown on attention-based text classification, machine translation, and visual question answering (\S\ref{sec:experiments}), with encouraging results. \remove{and FY-based interval regression.}

There are many avenues for future work. 
As a first step, we  considered unimodal distributions only (Gaussian, truncated paraboloid), for which we show that the forward and backpropagation steps have closed form or can be reduced to 1D integration. However, there are applications in which  multiple attention modes are desirable. This can be done by considering mixtures of distributions,  multiple attention heads, or sequential attention steps. 
Another direction concerns combining our continuous attention models with 
other spatial/temporal continuous architectures for CNNs and RNNs \citep{wang2018deep,schutt2017schnet,funahashi1993approximation} or 
with continuity in other dimensions, such as depth \citep{chen2018neural} or output space \citep{kumar2018mises}.



\section*{Broader Impact}

We discuss the broader impact of our work, including ethical aspects and future societal consequences. Given the early stage of our work and its predominantly theoretical nature, the discussion is mostly speculative.

The continuous attention models developed in our work can be used in a very wide range of applications, including natural language processing, computer vision, and others. For many of these applications, current state-of-the-art models use discrete softmax attention, whose interpretation capabilities have been questioned in prior work \citep{jain2019attention,serrano2019attention,wiegreffe2019attention}. Our models can potentially lead to more interpretable decisions, since they lead to less scattered attention maps (as shown in our Figures~\ref{fig:attention_maps}--\ref{fig:examples_vqa}) and are able to select contiguous text segments or image regions. 
As such, they may provide better inductive bias for interpretation.

In addition, our attention densities using Gaussian and truncated paraboloids include a variance term, being potentially useful as a measure of confidence---for example, a large ellipse in an image may indicate that the model had little confidence about where it should attend to answer a question, while a small ellipse may denote high confidence on a particular object. 

We also see opportunities for research connecting our work with other continuous models \citep{wang2018deep,schutt2017schnet,chen2018neural} leading to end-to-end continuous models which, by avoiding discretization, have the potential to be less susceptible to adversarial attacks via input perturbations. 
Outside the machine learning field, the links drawn in \S\ref{sec:sparse_families} between sparse alternatives to softmax and models used in non-extensive (Tsallis) statistical physics suggest a potential benefit in that field too. 


Note, however, that our work is a first step into all these directions, and as such further investigation will be needed to better understand the potential benefits. We strongly recommend carrying out user studies before deploying any such system, to better understand the benefits and risks. Some of the examples in App.~\ref{sec:model_hyperparams} may help understand potential failure modes. 


We should also take into account that, for any computer vision model, there are important societal risks related to privacy-violating surveillance applications. Continuous attention holds the promise to scale to larger and multi-resolution images, which may, in the longer term, be deployed in such undesirable domains. 
Ethical concerns hold for natural language applications such as machine translation, where 
biases present in data can be arbitrarily augmented or hidden by machine learning systems.
For example, our natural language processing experiments mostly use English datasets (as a
target language in machine translation, and in document classification). 
Further work is needed to understand if our findings generalize to other languages. Likewise, in the vision experiments, the VQA-v2 dataset uses COCO images, which have documented biases \citep{wang2019balanced}. 
In line with the fundamental scope and early stage of this line of research, we deliberately choose applications on standard benchmark datasets, in an attempt to put as much distance as possible from malevolent applications. 
Finally, although we chose the most widely used evaluation metrics for each task (accuracy for document classification and visual question answering, BLEU for machine translation), these metrics do not always capture performance quality---for example, BLEU in machine translation is far from being a perfect metric.

The data, memory, and computation requirements for training systems with continuous attention do not seem considerably higher than the ones which use discrete attention. On the other hand, for NLP applications, our approach has the potential to better compress sequential data, by using fewer basis functions than the sequence length (as suggested by our document classification experiments). While there is nothing specific about our research that poses environmental concerns  or that promises to alleviate such concerns, our models share the same problematic property as other neural network models in terms of their energy consumption to train models and tune hyperparameters \citep{strubell2019energy}.

\begin{ack}
This work was %
supported by the European Research Council (ERC StG DeepSPIN 758969),
by the P2020 program MAIA (contract 045909), and by the Funda\c{c}\~ao para a Ci\^encia e Tecnologia 
through contract UIDB/50008/2020. 
We would like to thank Pedro Martins, Zita Marinho, and the anonymous reviewers for their helpful feedback.


\end{ack}


\bibliography{neurips_2020}
\bibliographystyle{unsrtnat}

\newpage
\onecolumn

\appendix

\bigskip

\begin{center}
\LARGE{\bf Supplemental Material}
\end{center}

\section{Differential Negentropy and Boltzmann-Gibbs distributions}\label{sec:diff_ent_exp_family}

We adapt a proof from \citet{cover2012elements}.
Let $\Omega$ be the Shannon negentropy, which is proper, lower semi-continuous, and strictly convex \citep[example~9.41]{Bauschke_Combettes2011}, and let  $$\mathrm{KL}(p \| q) := \int_S p(t) \log \frac{p(t)}{q(t)}$$ be the Kullback-Leibler divergence between distributions $p$ and $q$ (which is always non-negative and equals $0$ iff $p=q$).
Take $q(t) = \frac{\exp(f(t))}{\int_S \exp(f(t'))d\nu(t')} = \exp(f(t) - A(f))$ as in \eqref{eq:boltzmann}, where $A(f)$ is the log-partition function.

We have, for any $p \in \mathcal{M}_+^1(S)$:
\begin{eqnarray}
0 &\le& \mathrm{KL}(p \| q) 
= \int_S p(t) \log \frac{p(t)}{q(t)} 
= \Omega(p) - \int_S p(t) \log q(t) 
= \Omega(p) - \int_S p(t) (f(t) - A(f)) \nonumber\\
&=& \Omega(p) - \mathbb{E}_p[f(t)]  + A(f).
\end{eqnarray}
Therefore, we have, for any  $p \in \mathcal{M}_+^1(S)$, that
\begin{equation}
\mathbb{E}_p[f(t)]  - \Omega(p) \le A(f),
\end{equation}
with equality if and only if $p=q$. Since the right hand side is constant with respect to $p$, we have that the posited $q$ must be the maximizer of \eqref{eq:reg_prediction}.

\section{Tsallis Negentropy and  Sparse Distributions}\label{sec:gini_ent_sparse_family}

\subsection{Shannon as a limit case of Tsallis when $\alpha\rightarrow 1$}

We show that $\lim_{\alpha \rightarrow 1} \Omega_{\alpha}(p) = \Omega_1(p)$ for any $p\in \mathcal{M}_+^1(S)$. 
From \eqref{eq:tsallis}, it suffices to show that $\lim_{\beta\rightarrow 1} \log_\beta (u) = \log(u)$ for any $u \ge 0$. 
Let $g(\beta) \coloneqq u^{1-\beta} - 1$, and
$h(\beta)\coloneqq 1 - \beta$. Observe that $$\lim_{\beta\rightarrow 1}\log_\beta (u) = \lim_{\beta\rightarrow 1} \frac{g(\beta)}{h(\beta)} =
\frac{g(1)}{h(1)}
= \frac{0}{0},$$
so we are in an indeterminate case.
We take the derivatives of $g$ and $h$: 
\begin{equation}
g'(\beta) = \left(\exp (\log u^{1-\beta}) \right)'
= \exp (\log u^{1-\beta}) \cdot ((1-\beta) \log u)'
= -u^{1-\beta} \log u,
\end{equation}
and $h'(\beta) = -1$. 
From l'H\^{o}pital's rule,
\begin{equation}
\lim_{\beta \rightarrow 1} \frac{g(\beta)}{h(\beta)}
=\lim_{\beta \rightarrow 1} \frac{g'(\beta)}{h'(\beta)}
= \log u.
\end{equation}

\subsection{Proof of Proposition~\ref{prop:solution_rpm_tsallis}}

The proof of Proposition~\ref{prop:solution_rpm_tsallis} is  similar to the one in \S\ref{sec:diff_ent_exp_family}, replacing the KL divergence by the Bregman divergence induced by $\Omega_\alpha$, and using an additional bound.
Let $$B_{\Omega_\alpha}(p, q) := \Omega_\alpha(p) - \Omega_\alpha(q) - \langle \nabla \Omega_\alpha(q), p-q \rangle$$ be the (functional) Bregman divergence between distributions $p$ and $q$ induced by $\Omega_\alpha$, and let $$q(t) = \exp_{2-\alpha}(f(t) - A_\alpha(f)) = [1 + (\alpha-1)(f(t) - A_\alpha(f))]_+^{\frac{1}{\alpha-1}}.$$ 
Note that, from \eqref{eq:tsallis}, $$\left(\nabla_q \Omega_{\alpha}(q)\right)(t) = \frac{q(t)^{\alpha-1}}{\alpha-1}.$$ 
From the non-negativity of the Bregman divergence \cite{bregman1967relaxation}, we have, for any $p \in M_+^1(S)$:
\begin{eqnarray}
0 &\le^{(a)}& B_{\Omega_\alpha}(p, q) \nonumber\\
&=& \Omega_\alpha(p) - \Omega_\alpha(q) - \langle \nabla \Omega_\alpha(q), p-q \rangle \nonumber\\
&=& \Omega_\alpha(p) - \Omega_\alpha(q) - \int_S \frac{q(t)^{\alpha-1}}{\alpha-1} (p(t) - q(t)) \nonumber\\
&=& \Omega_\alpha(p) - \Omega_\alpha(q) - \underbrace{\mathbb{E}_p[[f(t) - A_\alpha(f) + (\alpha-1)^{-1}]_+]}_{\ge \mathbb{E}_p[f(t) - A_\alpha(f) + (\alpha-1)^{-1}]} + \frac{1}{\alpha-1}\int_S q(t)^{\alpha} \nonumber\\
&\le^{(b)}& \Omega_\alpha(p) - \Omega_\alpha(q) - \mathbb{E}_p[f(t) - A_\alpha(f) + (\alpha-1)^{-1}] + \frac{1}{\alpha-1}\int_S q(t)^{\alpha} \nonumber\\
&=& \Omega_\alpha(p) - \mathbb{E}_p[f(t)]  - \Omega_\alpha(q) + \underbrace{\frac{1}{\alpha-1}\left(\int_S q(t)^{\alpha} - 1\right)}_{=\alpha \Omega_\alpha(q)} +A_\alpha(f) \nonumber\\
&=& \Omega_\alpha(p) - \mathbb{E}_p[f(t)]  + (\alpha-1)\Omega_\alpha(q) + A_\alpha(f).
\end{eqnarray}

Therefore, we have, for any $p \in M_+^1(S)$,
\begin{equation}\label{eq:variational_proof_tsallis}
\mathbb{E}_p[f(t)] - \Omega_\alpha(p) \le (\alpha-1)\Omega_\alpha(q) +A_\alpha(f),
\end{equation}
with equality iff $p = q$, which leads to zero Bregman divergence ({\it i.e.}, a tight inequality $(a)$) and to 
$\mathbb{E}_p[[f(t) - A_\alpha(f) + (\alpha-1)^{-1}]_+] = \mathbb{E}_p[f(t) - A_\alpha(f) + (\alpha-1)^{-1}]$ ({\it i.e.}, a tight inequality $(b)$).

We can use the equality above to obtain an expression for the Fenchel conjugate $\Omega_\alpha^*(f) = \mathbb{E}_q[f(t)] - \Omega_\alpha(q)$ ({\it i.e.}, the value of the maximum in \eqref{eq:reg_prediction} and the right hand side in \eqref{eq:variational_proof_tsallis}):
\begin{equation}
\Omega_\alpha^*(f) = (\alpha-1)\Omega_{\alpha}(q) + A_\alpha(f).
\end{equation}

\subsection{Normalizing function $A_\alpha(f)$}\label{sec:A_alpha}

Let $p = \hat{p}_{\Omega_\alpha}[f]$. 
The expression for $A_\alpha$ in Prop.~\ref{prop:solution_rpm_tsallis} is
obtained by inverting \eqref{eq:entmax}, yielding
$A_\alpha(f) = f(t) - \log_{2-\alpha}(p(t))$,  and integrating with respect to  $p(t)^{2-\alpha} d\nu(t)$, leading to:
\begin{eqnarray}
\int_S p_\theta(t)^{2-\alpha} A_\alpha(f) &=& \int_S p(t)^{2-\alpha} f(t) - \int_S p(t)^{2-\alpha} \log_{2-\alpha}(p(t))\nonumber\\
&=& \int_S p_\theta(t)^{2-\alpha} f(t) - \frac{\int_S  (p(t) - p(t)^{2-\alpha})}{\alpha-1} \nonumber\\
&=& \int_S p(t)^{2-\alpha} f(t) - \frac{1}{\alpha-1} + \frac{\int_S  p(t)^{2-\alpha}}{\alpha-1},
\end{eqnarray}
from which the desired expression follows.

\section{Relation to the Tsallis Maxent Principle}\label{sec:tsallis_maxent}

We discuss here the relation between the $(2-\alpha)$-exponential family of distributions as presented in Prop.~\ref{prop:solution_rpm_tsallis} and the distributions arising from the Tsallis maxent principle \citep{Tsallis1988}. We put in perspective the related work in statistical physics \citep{abe2003geometry,naudts2009q}, information geometry \citep{amari2011geometry,amari2016information}, and the discrete case presented in the machine learning literature \citep{blondel2020learning,peters2019sparse}. 

We start by noting that our $\alpha$ parameter matches the $\alpha$ used in prior machine learning literature related to the ``$\alpha$-entmax transformation'' \citep{blondel2020learning,peters2019sparse}. In the definition of Tsallis entropies  \eqref{eq:tsallis}, our $\alpha$ corresponds to the entropic index $q$ defined by \citet{Tsallis1988}. 
However, our $(2-\alpha)$-exponential families correspond to the $q$-exponential families as defined by \citet{naudts2009q}, and to the $t$-exponential families described by \citet{ding2010t} (which include the $t$-Student distribution). The family of Amari's $\alpha$-divergences relates to this $q$ as $\alpha=2q-1$ \citep[\S4.3]{amari2016information}. 

These differences in notation have historical reasons, and they are explained by the different ways in which Tsallis entropies relate to $q$-exponential families. 
In fact, the physics literature has defined $q$-exponential distributions in two distinct ways, as we next describe. 

Note first that the $\Omega$-RPM in our  Def.~\ref{def:regularized_prediction} is a generalization of the free energy variational principle, if we see $-f_\theta(t) = -\theta^\top \phi(t)$ as an energy function and $\Omega$ the entropy scaled by a temperature. 
Let $\Omega = \Omega_\alpha$ be the Tsallis $\alpha$-entropy. 
An equivalent constrained version of this problem is the maximum entropy ({\it maxent}) principle \citep{jaynes1957information}:
\begin{equation}\label{eq:constrained_maxent}
\max_{p \in \mathcal{M}_+^1(S)}  -\Omega_\alpha(p), \hspace{0.5cm} \mathrm{s.t.}  \hspace{0.35cm} \mathbb{E}_{p}[\phi(t)] = b.
\end{equation}
The solution of this problem corresponds to a distribution in the $(2-\alpha)$-exponential family \eqref{eq:entmax}:
\begin{equation}\label{eq:sol_maxent}
    p^\star(t) = \exp_{2-\alpha}(\theta^\top \phi(t) - A_\alpha(\theta)),
\end{equation}
for some Lagrange multiplier $\theta$. 

However, this construction differs from the one by \citet{Tsallis1988} and others, who use {\it escort distributions} (Eq.~\ref{eq:escort}) in the expectation constraints. Namely, instead of \eqref{eq:constrained_maxent}, they consider the problem:
\begin{equation}\label{eq:constrained_maxent_tsallis}
\max_{p \in \mathcal{M}_+^1(S)}  -\Omega_\alpha(p), \hspace{0.5cm} \mathrm{s.t.}  \hspace{0.35cm} \mathbb{E}_{\textcolor{blue}{\tilde{p}^{\alpha}}}[\phi(t)] = b.
\end{equation}
The solution of \eqref{eq:constrained_maxent_tsallis} is of the form
\begin{equation}\label{eq:sol_maxent_tsallis}
    p^\star(t) = B_{\alpha}(\theta)\exp_{\alpha}(\theta^\top (\phi(t) - b)),
\end{equation}
where $\theta$ is again a Lagrange multiplier. This is derived, for example, in \citep[Eq.~15]{abe2003geometry}. 
There are two main differences between \eqref{eq:sol_maxent} and \eqref{eq:sol_maxent_tsallis}:
\begin{itemize}
    \item While \eqref{eq:sol_maxent} involves the $(2-\alpha)$-exponential, \eqref{eq:sol_maxent_tsallis} involves the $\alpha$-exponential.
    \item In \eqref{eq:sol_maxent}, the normalizing term $A_\alpha(\theta)$ is {\it inside} the $(2-\alpha)$-exponential. In \eqref{eq:sol_maxent_tsallis}, there is an normalizing factor $B_{\alpha}(\theta)$ {\it outside} the $\alpha$-exponential. 
\end{itemize}
Naturally, when $\alpha=1$, these two problems become equivalent, since an additive term inside the exponential is equivalent to a multiplicative term outside. However, this does {\it not} happen with $\beta$-exponentials ($\exp_\beta(u+v) \ne \exp_\beta(u) \exp_\beta(v)$ in general, for $\beta \ne 1$), and therefore these two alternative paths lead to two different definitions of $q$-exponential families. Unfortunately, both have been considered in the physics literature, under the same name, and  
this has been subject of debate. Quoting \citet[\S 1]{naudts2009q}: 

\begin{quote}
{\it ``An important question is then whether in the modification the normalization should stand in front of the deformed exponential function, or whether it should be included as $\ln Z(\beta)$ inside. From the general formalism mentioned above it follows
that the latter is the right way to go.''}
\end{quote}

Throughout our paper, we use the definition of \citep{naudts2009q,amari2011geometry}, equivalent to the maxent problem \eqref{eq:constrained_maxent}. 

\section{Proof of Proposition~\ref{prop:gradient_A}}\label{sec:proof_gradient_A}

We adapt the proof from \citet[Theorem 5]{amari2011geometry}. 
Note first that, for $t \in \mathrm{supp}(p_\theta)$,
\begin{eqnarray}
\nabla_\theta p_\theta(t) &=& \nabla_\theta [(\alpha-1)(\theta^\top \phi(t) - A_\alpha(\theta))+1]^{1/(\alpha-1)} \nonumber\\
&=&  [(\alpha-1)(\theta^\top \phi(t) - A_\alpha(\theta))+1]^{(2-\alpha)/(\alpha-1)} (\phi(t) - \nabla_\theta A_\alpha(\theta)) \nonumber\\
&=& p_\theta(t)^{2-\alpha} (\phi(t) - \nabla_\theta A_\alpha(\theta)),
\end{eqnarray}
and
\begin{eqnarray}
\nabla^2_\theta p_\theta(t) &=& 
\nabla_\theta p_\theta^{2-\alpha}(t) (\phi(t) - \nabla_\theta A_\alpha(\theta))^\top - p_\theta^{2-\alpha}(t) \nabla^2_\theta A_\alpha(\theta) \nonumber\\
&=& (2-\alpha) p_\theta^{1-\alpha}(t) \nabla_\theta  p_\theta(t) (\phi(t) - \nabla_\theta A_\alpha(\theta))^\top - p_\theta^{2-\alpha}(t) \nabla^2_\theta A_\alpha(\theta) \nonumber\\
&=& (2-\alpha)p_\theta(t)^{3-2\alpha}\bigl(\phi(t) - \nabla_\theta A_\alpha(\theta)\bigr) \bigl(\phi(t) - \nabla_\theta A_\alpha(\theta)\bigr)^\top \nonumber\\
&& - p_\theta(t)^{2-\alpha} \nabla^2_\theta A_\alpha(\theta).
\end{eqnarray}

Therefore we have:
\begin{equation}
0 = \nabla_\theta \underbrace{\int_S  p_\theta(t)}_{=1} = \int_S \nabla_\theta p_\theta(t) = \int_S p_\theta(t)^{2-\alpha} (\phi(t) - \nabla_\theta A_\alpha(\theta)),
\end{equation}
from which we obtain
\begin{equation}
    \nabla_\theta A_\alpha(\theta) = \frac{\int_S p_\theta(t)^{2-\alpha} \phi(t)}{\int_S p_\theta(t)^{2-\alpha}}.
\end{equation}

To prove that $A_\alpha(\theta)$ is convex, we will show that its Hessian is positive semidefinite. Note that
\begin{eqnarray}
0 &=& \nabla^2_\theta \underbrace{\int_S  p_\theta(t)}_{=1} = \int_S \nabla^2_\theta p_\theta(t) \nonumber\\
&=& \int_S (2-\alpha)p_\theta(t)^{3-2\alpha}\bigl(\phi(t) - \nabla_\theta A_\alpha(\theta)\bigr) \bigl(\phi(t) - \nabla_\theta A_\alpha(\theta)\bigr)^\top  - p_\theta(t)^{2-\alpha} \nabla^2_\theta A_\alpha(\theta)\nonumber\\
&=& (2-\alpha) \int_S p_\theta(t)^{3-2\alpha}\bigl(\phi(t) - \nabla_\theta A_\alpha(\theta)\bigr) \bigl(\phi(t) - \nabla_\theta A_\alpha(\theta)\bigr)^\top  \nonumber\\
&& - \nabla^2_\theta A_\alpha(\theta) \int_S p_\theta(t)^{2-\alpha},
\end{eqnarray}
hence, for $\alpha \le 2$,
\begin{equation}
    \nabla^2_\theta A_\alpha(\theta) = \frac{(2-\alpha) \int_S p_\theta(t)^{3-2\alpha}\overbrace{\bigl(\phi(t) - \nabla_\theta A_\alpha(\theta)\bigr) \bigl(\phi(t) - \nabla_\theta A_\alpha(\theta)\bigr)^\top}^{\succeq 0}}{\int_S p_\theta(t)^{2-\alpha}} \succeq 0,
\end{equation}
where we used the fact that $p_\theta(t) \ge 0$ for $t \in S$ and that integrals of positive semidefinite functions and positive semidefinite.

\section{Normalization Constants for Continuous Sparsemax Distributions}\label{sec:normalization_constants}

\subsection{Truncated parabola}\label{sec:proof_truncated_parabola}

Let $p(t) = \left[-\lambda - \frac{(t-\mu)^2}{2\sigma^2}\right]_+$ as in \eqref{eq:truncated_parabola}.
Let us determine the constant $\lambda$ that ensures this distribution normalizes to 1. Note that $\lambda$ does not depend on the location parameter $\mu$, hence we can assume $\mu=0$ without loss of generality. We must have
$\lambda = -\frac{a^2}{2\sigma^2}$ and $1 = \int_{-a}^{a} \left(-\lambda - \frac{x^2}{2\sigma^2}\right) = -2\lambda a - \frac{a^3}{3\sigma^2} = \frac{2a^3}{3\sigma^2}$, hence
$a = \left(\frac{3}{2}\sigma^2\right)^{1/3}$, which finally gives:
\begin{equation}\label{eq:lambda_gaussian_proof}
\lambda = -\frac{1}{2}\left(\frac{3}{2\sigma}\right)^{2/3}.
\end{equation}

\remove{
The Gini negentropy of this distribution is
\begin{eqnarray}
\Omega_2(\hat{p}_{\Omega_2}[f]) &=& -\frac{1}{2} + \frac{1}{2}\int \hat{p}^2_{\Omega_2}[f](x)\nonumber\\
&=& -\frac{1}{2} + \frac{1}{2}\int_{-a}^a \left(-\lambda - \frac{x^2}{2\sigma^2}\right)^2\nonumber\\
&=& -\frac{1}{2} - \lambda^2 a + \frac{\lambda a^3}{3\sigma^2} + \frac{a^5}{20\sigma^4}\nonumber\\
&=& -\frac{1}{2} + \frac{a^5}{4\sigma^4} - \frac{a^5}{6\sigma^4} + \frac{a^5}{20\sigma^4}\nonumber\\
&=& -\frac{1}{2} + \frac{2a^5}{15\sigma^4}\nonumber\\
&=& -\frac{1}{2} + \frac{1}{5}\left(\frac{3}{2\sigma}\right)^{2/3}.
\end{eqnarray}
}

\subsection{Multivariate truncated paraboloid}\label{sec:proof_truncated_paraboloid}

Let $p(t) = \left[-\lambda - \frac{1}{2}(t-\mu)\Sigma^{-1}(t-\mu)\right]_+$ as in \eqref{eq:truncated_paraboloid}.
Let us determine the constant $\lambda$ that ensures this distribution normalizes to 1, where we assume again $\mu=0$ without loss of generality. To obtain $\lambda$, we start by invoking the formula for computing the volume of an ellipsoid defined
by the equation $x^\top \Sigma^{-1} x \le 1$:
\begin{equation}
V_{\mathrm{ell}}(\Sigma) = \frac{\pi^{n/2}}{\Gamma(n/2 + 1)} \mathrm{det}(\Sigma)^{1/2},
\end{equation}
where $\Gamma(t)$ is the Gamma function.
Since each slice of a paraboloid is an ellipsoid, we can apply Cavalieri's principle to obtain the volume of a paraboloid $y=\frac{1}{2} x^\top \Sigma^{-1} x$ of height $h = -\lambda$ as follows:
\begin{eqnarray}
V_{\mathrm{par}}(h) &=& \int_{0}^{h} V_{\mathrm{ell}}(2 \Sigma y)dy 
\,\,=\,\, \frac{(2\pi)^{n/2}\mathrm{det}(\Sigma)^{1/2}}{\Gamma(\frac{n}{2} + 1)}  \int_{0}^{h} y^{\frac{n}{2}}dy\nonumber\\ 
&=& \frac{(2\pi)^{n/2}\mathrm{det}(\Sigma)^{1/2}}{(\frac{n}{2} + 1)\Gamma(\frac{n}{2} + 1)}  h^{\frac{n}{2}+1}\nonumber\\
&=& \frac{\sqrt{(2\pi)^{n}\mathrm{det}(\Sigma)}}{\Gamma(\frac{n}{2} + 2)}  h^{\frac{n}{2}+1}.
\end{eqnarray}
Equating the volume to 1, we obtain $\lambda = -h$ as:
\begin{equation}
\lambda = -\left(\frac{\Gamma(\frac{n}{2} + 2)}{\sqrt{(2\pi)^{n}\mathrm{det}(\Sigma)}}\right)^{\frac{2}{2+n}}.
\end{equation}

\subsection{Triangular}\label{sec:proof_triangular}

Let $p(t) = \left[-\lambda - \frac{|t-\mu|}{b}\right]_+$ as in \eqref{eq:triangular}.
Let us determine the constant $\lambda$ that ensures this distribution normalizes to 1.
Assuming again $\mu=0$ without loss of generality, we must have
$\lambda = -\frac{a}{b}$ and $1 = \int_{-a}^{a} \left(-\lambda - \frac{|x|}{b}\right) = -2\lambda a - \frac{a^2}{b} = \frac{a^2}{b}$, hence
$a = \sqrt{b}$, which finally gives
$\lambda = -b^{-1/2}$.

\remove{
The negentropy of this distribution is
\begin{eqnarray}
\Omega_2(\hat{p}_{\Omega_2}[f]) &=& -\frac{1}{2} + \frac{1}{2}\int \hat{p}^2_{\Omega_2}[f](x)\nonumber\\
&=& -\frac{1}{2} + \frac{1}{2}\int_{-a}^a \left(-\lambda - \frac{|x|}{b}\right)^2\nonumber\\
&=& -\frac{1}{2} + \frac{1}{2}\int_{-a}^a \left(\lambda^2 +\frac{2\lambda|x|}{b} + \frac{x^2}{b^2} \right)\nonumber\\
&=& -\frac{1}{2} + \lambda^2 a + \frac{\lambda a^2}{b} + \frac{\lambda a^3}{3b^2} \nonumber\\
&=& -\frac{1}{2} +  \frac{a^3}{b^2} - \frac{a^3}{b^2} + \frac{a^3}{3b^2} \nonumber\\
&=& -\frac{1}{2} +  \frac{1}{3\sqrt{b}}.
\end{eqnarray}
}

\subsection{Location-scale families}\label{sec:proof_location_scale}

We first show that $a$ is the solution of the equation $ag'(a) - g(a) + g(0) = \frac{1}{2}$.
From symmetry around $\mu$, we must have
\begin{eqnarray}
\frac{1}{2} = \int_{\mu}^{\mu + a\sigma} \left(\frac{1}{\sigma}g'(a) - \frac{1}{\sigma}g'\left( \frac{t-\mu}{\sigma}\right)\right) dt = \int_{0}^{a} \left(g'(a) - g'(s)\right)  ds = ag'(a) - g(a) + g(0),
\end{eqnarray}
where we made a variable substitution $s = (t-\mu)/\sigma$,
which proves the desired result.
Now we show that a solution always exists if $g$ is strongly convex, {\it i.e.}, if there is some $\gamma>0$ such that
$g(0) \ge g(s) - sg'(s) + \frac{\gamma}{2}s^2$ for any $s \ge 0$.
Let $F(s) := sg'(s) - g(s) + g(0)$. We want to show that the equation $F(a) = \frac{1}{2}$ has a solution. Since $g$ is continuously differentiable, $F$ is continuous.
From the strong convexity of $g$, we have that $F(s) \ge \frac{\gamma}{2}s^2$ for any $s \ge 0$, which implies that $\lim_{s\rightarrow +\infty} F(s) = +\infty$.
Therefore, since $F(0) = 0$, we have by the intermediate value theorem that there must be some $a$ such that $F(a) = \frac{1}{2}$.

\section{Proof of Proposition~\ref{prop:jacobian_entmax}}\label{sec:proof_jacobian_entmax}

We have
\begin{eqnarray}
\nabla_\theta \mathbb{E}_{p}[\psi_i(t)] &=& \nabla_\theta \int_S p_\theta(t) \psi_i(t) 
= \int_S \nabla_\theta p_\theta(t) \psi_i(t) \nonumber\\
&=& \int_S p_\theta^{2-\alpha}(t) \nabla_\theta \log_{2-\alpha}(p_\theta(t)) \psi_i(t) \nonumber\\
&=& \int_S p_\theta^{2-\alpha}(t) \nabla_\theta (\theta^\top \phi(t) - A_\alpha(\theta)) \psi_i(t) \nonumber\\
&=& \int_S p_\theta^{2-\alpha}(t) (\phi(t) - \nabla_\theta A_\alpha(\theta)) \psi_i(t).
\end{eqnarray}
Using the expression for $\nabla_\theta A_\alpha(\theta)$ from Proposition~\ref{prop:gradient_A} yields the desired result.

\section{Continuous Attention with Gaussian RBFs}\label{sec:gaussian_basis}

We derive expressions for the evaluation and gradient computation of  continuous attention mechanisms where $\psi(t)$ are Gaussian radial basis functions, both for the softmax ($\alpha=1$) and sparsemax ($\alpha=2$) cases. 
For softmax, we show closed-form expressions for any number of dimensions (including the 1D and 2D cases). 
For sparsemax, we derive closed-form expressions for the 1D case, and we reduce the 2D case to a univariate integral on an interval, easy to compute numerically. 

This makes it possible to plug both continuous attention mechanisms in neural networks and learn them end-to-end with the gradient backpropagation algorithm.

\subsection{Continuous softmax ($\alpha=1$)}

We derive expressions for continuous softmax for multivariate Gaussians in $\mathbb{R}^D$.  
This includes the 1D and 2D cases, where $D \in \{1,2\}$. 

If $S=\mathbb{R}^D$, for $\phi(t)=[t,tt^\top]$, the distribution $p=\hat{p}_{\Omega_1}[f_\theta]$, with $f_\theta(t)=\theta^\top \phi(t)$, is a multivariate Gaussian where the mean $\mu$ and the covariance matrix $\Sigma$ are related to the canonical parameters as $\theta=[\Sigma^{-1}\mu,-\frac{1}{2}\Sigma^{-1}]$. 

We derive closed form expressions for the attention mechanism output 
$\rho_1(\theta)=\mathbb{E}_p[\psi(t)]$ in \eqref{eq:attention_expectation} and for its Jacobian $J_{\rho_1}(\theta)= \mathrm{cov}_{p,1}(\phi(t), \psi(t))$ in \eqref{eq:jacob},  when $\psi(t)$ are Gaussian RBFs, i.e., each $\psi_j$ is of the form $\psi_j(t)=\mathcal{N}(t;\mu_j, \Sigma_j)$. 

\paragraph{Forward pass.}

Each coordinate of the attention mechanism output becomes the integral of a product of Gaussians,
\begin{equation}
    \mathbb{E}_p[\psi_j(t)]=\int_{\mathbb{R}^D}\mathcal{N}(t;\mu, \Sigma)\mathcal{N}(t;\mu_j, \Sigma_j).
\end{equation}

We use the fact that the product of two Gaussians is a scaled Gaussian:
\begin{equation}
    \mathcal{N}(t;\mu, \Sigma)\mathcal{N}(t;\mu_j, \Sigma_j)=\Tilde{s}\mathcal{N}(t;\Tilde{\mu}, \Tilde{\Sigma}),
\end{equation}
where 
\begin{equation}
    \Tilde{s}=\mathcal{N}(\mu;\mu_j, \Sigma + \Sigma_j), \qquad \Tilde{\Sigma}=(\Sigma^{-1}+\Sigma_j^{-1})^{-1}, \qquad \Tilde{\mu}=\Tilde{\Sigma}(\Sigma^{-1}\mu+\Sigma_j^{-1}\mu_j).
\end{equation}
Therefore, the forward pass can be computed as:
\begin{equation}\label{eq:continuous_softmax_forward_pass}
\begin{split}
\mathbb{E}_p[\psi_j(t)] & =\Tilde{s}\int_{\mathbb{R}^D}\mathcal{N}(t;\Tilde{\mu}, \Tilde{\Sigma})=\Tilde{s}\\
 & =\mathcal{N}(\mu;\mu_j, \Sigma + \Sigma_j).
\end{split}
\end{equation}

\paragraph{Backward pass.}
To compute the backward pass, we have that each row of the Jacobian $J_{\rho_1}(\theta)$ becomes a first or second moment under the resulting Gaussian:
\begin{equation}\label{eq:continuous_softmax_backward_pass_01}
\begin{split}
\mathrm{cov}_{p,1}(t, \psi_j(t)) & = \mathbb{E}_p[t\psi_j(t)]-\mathbb{E}_p[t]\mathbb{E}_p[\psi_j(t)]\\
 & =\int_{\mathbb{R}^D}t\mathcal{N}(t;\mu, \Sigma)\mathcal{N}(t;\mu_j, \Sigma_j)-\Tilde{s}\mu\\
 & =\Tilde{s}\int_{\mathbb{R}^D}t\mathcal{N}(t;\Tilde{\mu}, \Tilde{\Sigma})-\Tilde{s}\mu \\
 & =\Tilde{s}(\Tilde{\mu}-\mu),
\end{split}
\end{equation} and, noting that $\Sigma=\mathbb{E}[(t-\mu)(t-\mu)^\top]=\mathbb{E}[tt^\top]-\mu\mu^\top$,
\begin{equation}\label{eq:continuous_softmax_backward_pass_02}
\begin{split}
\mathrm{cov}_{p,1}(tt^\top, \psi_j(t)) & = \mathbb{E}_p[tt^\top\psi_j(t)]-\mathbb{E}_p[tt^\top]\mathbb{E}_p[\psi_j(t)]\\
 & =\int_{\mathbb{R}^D}tt^\top\mathcal{N}(t;\mu, \Sigma)\mathcal{N}(t;\mu_j, \Sigma_j)-\Tilde{s}(\Sigma+\mu\mu^\top)\\
 &=\Tilde{s}\int_{\mathbb{R}^D}tt^\top\mathcal{N}(t;\Tilde{\mu}, \Tilde{\Sigma})-\Tilde{s}(\Sigma+\mu\mu^\top)\\
 &=\Tilde{s}(\Tilde{\Sigma}+\Tilde{\mu}\Tilde{\mu}^\top)-\Tilde{s}(\Sigma+\mu\mu^\top)\\
 &=\Tilde{s}(\Tilde{\Sigma}+\Tilde{\mu}\Tilde{\mu}^\top-\Sigma-\mu\mu^\top).
\end{split}
\end{equation}

\subsection{Continuous sparsemax in 1D ($\alpha=2$, $D=1$)}

With $\phi(t) = [t, t^2]$, the distribution $p = \hat{p}_{\Omega_2}[f_\theta]$, with $f_\theta(t) = \theta^\top \phi(t)$, becomes a truncated parabola where  $\mu$ and  $\sigma^2$ are related to the canonical parameters as above, i.e., $\theta = [\frac{\mu}{\sigma^2}, -\frac{1}{2\sigma^2}]$.

We  derive closed form expressions for the attention mechanism output $\rho_2(\theta) = \mathbb{E}_{p}[\psi(t)]$ in \eqref{eq:attention_expectation} and its Jacobian $J_{\rho_2}(\theta) = \frac{\partial \rho_2(\theta)}{\partial \theta} = \mathrm{cov}_{p, 2}(\phi(t), \psi(t))$ in \eqref{eq:jacob} when $\psi(t)$ and Gaussian RBFs, {\it i.e.}, each $\psi_j$ is of the form $\psi_j(t) = \mathcal{N}(t; \mu_j, \sigma_j^2)$.

\paragraph{Forward pass.}
Each coordinate of the attention mechanism output becomes:
\begin{eqnarray}\label{eq:expectation_continuous_sparsemax_rbf}
\mathbb{E}_{p}[\psi_j(t)] &=& \int_{\mu-a}^{\mu+a} \left(-\lambda - \frac{(t-\mu)^2}{2\sigma^2}\right)  \mathcal{N}(t; \mu_j, \sigma_j^2)\nonumber\\
&=& \int_{\frac{\mu-\mu_j-a}{\sigma_j}}^{\frac{\mu-\mu_j+a}{\sigma_j}} \frac{1}{\sigma_j} \left(-\lambda - \frac{(\sigma_j t + \mu_j - \mu)^2}{2\sigma^2}\right)  \mathcal{N}(s; 0, 1) ds,
\end{eqnarray}
where $a=(\frac{3}{2}\sigma^2)^{1/3}$ and $\lambda=-\frac{a^2}{2\sigma^2} = -\frac{1}{2}(\frac{3}{2\sigma})^{2/3}$, as stated in \eqref{eq:lambda_gaussian_proof}, and we made the substitution
$s = \frac{t-\mu_j}{\sigma_j}$. 
We use the fact that, for any $u, v \in \mathbb{R}$ such that $u \le v$:
\begin{eqnarray}\label{eq:erf_expr123}
\int_{u}^{v} \mathcal{N}(t; 0, 1) &=&  \frac{1}{2}\left( \mathrm{erf}\left(\frac{v}{\sqrt{2}}\right) - \mathrm{erf}\left(\frac{u}{\sqrt{2}}\right) \right),\nonumber\\
\int_{u}^{v} t\mathcal{N}(t; 0, 1) &=&  -\mathcal{N}(v; 0, 1) + \mathcal{N}(u; 0, 1),\nonumber\\
\int_{u}^{v} t^2\mathcal{N}(t; 0, 1) &=&
\frac{1}{2}\left( \mathrm{erf}\left(\frac{v}{\sqrt{2}}\right) - \mathrm{erf}\left(\frac{u}{\sqrt{2}}\right) \right) - v\mathcal{N}(v; 0, 1) + u\mathcal{N}(u; 0, 1),
\end{eqnarray}
from which the expectation \eqref{eq:expectation_continuous_sparsemax_rbf} can be computed directly.

\paragraph{Backward pass.}
Since $|\mathrm{supp}(p)| = 2a$, we have from \eqref{eq:beta_covariance} and \eqref{eq:erf_expr123} that each row of the Jacobian $J_{\rho_2}(\theta)$ becomes:
\begin{eqnarray}
\lefteqn{\mathrm{cov}_{p, 2}(t, \psi_j(t)) =} \nonumber\\
&& \int_{\mu-a}^{\mu+a} t\mathcal{N}(t; \mu_j, \sigma_j^2)
- \frac{1}{2a}\left(\int_{\mu-a}^{\mu+a} t\right)\left(\int_{\mu-a}^{\mu+a} \mathcal{N}(t; \mu_j, \sigma_j^2)\right)\nonumber\\
&=&
\int_{\frac{\mu-\mu_j-a}{\sigma_j}}^{\frac{\mu-\mu_j+a}{\sigma_j}} (\mu_j + \sigma_j s)\mathcal{N}(s; 0, 1)
- \underbrace{\frac{1}{2a}\left( \frac{(\mu+a)^2}{2} - \frac{(\mu-a)^2}{2} \right)}_{=\mu} 
\left( \int_{\frac{\mu-\mu_j-a}{\sigma_j}}^{\frac{\mu-\mu_j+a}{\sigma_j}} \mathcal{N}(s; 0, 1) \right)\nonumber\\
&=&
(\mu_j-\mu)\int_{\frac{\mu-\mu_j-a}{\sigma_j}}^{\frac{\mu-\mu_j+a}{\sigma_j}} \mathcal{N}(s; 0, 1)
+ \sigma_j \int_{\frac{\mu-\mu_j-a}{\sigma_j}}^{\frac{\mu-\mu_j+a}{\sigma_j}} s\mathcal{N}(s; 0, 1)
\nonumber\\
&=&
\frac{\mu_j-\mu}{2}\left( \mathrm{erf}\left( \frac{\mu-\mu_j+a}{\sqrt{2}\sigma_j} \right) - \mathrm{erf}\left( \frac{\mu-\mu_j-a}{\sqrt{2}\sigma_j} \right)\right)\nonumber\\
&& - \sigma_j \left( \mathcal{N}\left(\frac{\mu-\mu_j+a}{\sigma_j}; 0, 1\right) - \mathcal{N}\left(\frac{\mu-\mu_j-a}{\sigma_j}; 0, 1\right) \right),
\end{eqnarray}
and
\begin{eqnarray}
\lefteqn{\mathrm{cov}_{p, 2}(t^2, \psi_j(t)) =} \nonumber\\
&& \int_{\mu-a}^{\mu+a} t^2\mathcal{N}(t; \mu_j, \sigma_j^2)
- \frac{1}{2a}\left(\int_{\mu-a}^{\mu+a} t^2\right)\left(\int_{\mu-a}^{\mu+a} \mathcal{N}(t; \mu_j, \sigma_j^2)\right)
\nonumber\\
&=&
\int_{\frac{\mu-\mu_j-a}{\sigma_j}}^{\frac{\mu-\mu_j+a}{\sigma_j}} (\mu_j + \sigma_j s)^2\mathcal{N}(s; 0, 1)
- \underbrace{\frac{1}{2a}\left( \frac{(\mu+a)^3}{3} - \frac{(\mu-a)^3}{3} \right)}_{=\frac{a^2}{3} + \mu^2} 
\left( \int_{\frac{\mu-\mu_j-a}{\sigma_j}}^{\frac{\mu-\mu_j+a}{\sigma_j}} \mathcal{N}(s; 0, 1) \right)\nonumber\\
&=&
\left(\mu_j^2-\mu^2 -\frac{a^2}{3}\right)\int_{\frac{\mu-\mu_j-a}{\sigma_j}}^{\frac{\mu-\mu_j+a}{\sigma_j}} \mathcal{N}(s; 0, 1)
+ 2\mu_j\sigma_j \int_{\frac{\mu-\mu_j-a}{\sigma_j}}^{\frac{\mu-\mu_j+a}{\sigma_j}} s\mathcal{N}(s; 0, 1)
+ \sigma_j^2 \int_{\frac{\mu-\mu_j-a}{\sigma_j}}^{\frac{\mu-\mu_j+a}{\sigma_j}} s^2\mathcal{N}(s; 0, 1)
\nonumber\\
&=&
\left(\mu_j^2-\mu^2 +\sigma_j^2 -\frac{a^2}{3}\right)\left( \mathrm{erf}\left( \frac{\mu-\mu_j+a}{\sqrt{2}\sigma_j} \right) - \mathrm{erf}\left( \frac{\mu-\mu_j-a}{\sqrt{2}\sigma_j} \right)\right)\nonumber\\
&& -\sigma_j(\mu+\mu_j+a) \mathcal{N}\left(\frac{\mu-\mu_j+a}{\sigma_j}; 0, 1\right) + \sigma_j(\mu+\mu_j-a) \mathcal{N}\left(\frac{\mu-\mu_j-a}{\sigma_j}; 0, 1\right).
\end{eqnarray}

\subsection{Continuous sparsemax in 2D ($\alpha=2$, $D=2$)}

Let us now consider the case where $D=2$. 
For $\phi(t)=[t,tt^\top]$, the distribution $p=\hat{p}_{\Omega_2}[f_\theta]$, with $f_\theta(t)=\theta^\top \phi(t)$, becomes a bivariate truncated paraboloid where $\mu$ and  $\Sigma$ are related to the canonical parameters as before, $\theta=[\Sigma^{-1}\mu,-\frac{1}{2}\Sigma^{-1}]$. We obtain expressions for the attention mechanism output $\rho_2(\theta)=\mathbb{E}_p[\psi(t)]$ and its Jacobian $J_{\rho_2}(\theta)= \mathrm{cov}_{p,2}(\phi(t), \psi(t))$ that include 1D integrals (simple to integrate numerically), when $\psi(t)$ are Gaussian RBFs, {\it i.e.}, when each $\psi_j$ is of the form $\psi_j(t)=\mathcal{N}(t;\mu_j, \Sigma_j)$.

We start with the following lemma:

\smallskip

\begin{lemma}\label{lemma:affine_transform_gaussian}
Let $\mathcal{N}(t, \mu, \Sigma)$ be a $D$-dimensional multivariate Gaussian, 
Let $A \in \mathbb{R}^{D \times R}$ be a full column rank matrix (with $R \le D$), and $b \in \mathrm{R}^D$. 
Then we have $\mathcal{N}(Au + b; \mu, \Sigma) = \tilde{s} \mathcal{N}(u; \tilde{\mu}, \tilde{\Sigma})$ with:
\begin{eqnarray*}
\tilde{\Sigma} &=& (A^\top \Sigma^{-1} A)^{-1}\\
\tilde{\mu} &=& \tilde{\Sigma}A^\top \Sigma^{-1} (\mu - b)\\
\tilde{s} &=& (2\pi)^{\frac{R-D}{2}} \frac{|\tilde{\Sigma}|^{1/2}}{|\Sigma|^{1/2}} \exp\left(-\frac{1}{2}(\mu-b)^\top P(\mu-b)\right), \quad P = \Sigma^{-1} - \Sigma^{-1} A \tilde{\Sigma} A^\top\Sigma^{-1}.
\end{eqnarray*}
If $R=D$, then $A$ is invertible and the expressions above can be simplified to:
\begin{eqnarray*}
\tilde{\Sigma} &=& A^{-1} \Sigma A^{-\top}\\
\tilde{\mu} &=& A^{-1} (\mu - b)\\
\tilde{s} &=& |A|^{-1}.
\end{eqnarray*}
\end{lemma}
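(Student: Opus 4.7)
The plan is a direct algebraic manipulation: substitute $t = Au+b$ into the Gaussian density $\mathcal{N}(t;\mu,\Sigma)$, expand the quadratic form $(Au+b-\mu)^\top \Sigma^{-1}(Au+b-\mu)$ in the exponent, complete the square in $u$, and read off the parameters of the resulting Gaussian in $u$ plus the leftover scalar factor.

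Concretely, I would write $c = \mu - b$ and expand
\begin{equation*}
(Au - c)^\top \Sigma^{-1}(Au-c) = u^\top (A^\top\Sigma^{-1} A)\,u - 2 u^\top A^\top\Sigma^{-1} c + c^\top\Sigma^{-1} c.
\end{equation*}
Because $A$ has full column rank, $A^\top \Sigma^{-1} A$ is symmetric positive definite, so it is natural to define $\tilde\Sigma = (A^\top \Sigma^{-1} A)^{-1}$ and $\tilde\mu = \tilde\Sigma A^\top \Sigma^{-1} c$. Completing the square then gives
\begin{equation*}
(Au-c)^\top\Sigma^{-1}(Au-c) = (u-\tilde\mu)^\top \tilde\Sigma^{-1}(u-\tilde\mu) + c^\top P c,
\end{equation*}
where $P = \Sigma^{-1} - \Sigma^{-1} A \tilde\Sigma A^\top \Sigma^{-1}$ is precisely the ``residual'' term after completion. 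At this point, factoring out $\mathcal{N}(u;\tilde\mu,\tilde\Sigma)$ from the density produces the claimed scalar
\begin{equation*}
\tilde s = \frac{(2\pi)^{R/2}|\tilde\Sigma|^{1/2}}{(2\pi)^{D/2}|\Sigma|^{1/2}}\exp\!\Bigl(-\tfrac{1}{2}c^\top P c\Bigr),
\end{equation*}
since the dimensions of the two Gaussians differ by a factor of $(2\pi)^{(R-D)/2}$ in the normalizing constant.

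For the square case $R=D$, I would verify the simplification by noting that when $A$ is invertible we have $\tilde\Sigma = A^{-1}\Sigma A^{-\top}$ and $\tilde\mu = A^{-1}(\mu-b)$, and then observe that $P = \Sigma^{-1} - \Sigma^{-1}AA^{-1}\Sigma A^{-\top}A^\top\Sigma^{-1} = 0$, so the exponential factor in $\tilde s$ reduces to $1$. Combining this with $|\tilde\Sigma|^{1/2}/|\Sigma|^{1/2} = |A|^{-1}$ (using $|A^{-1}\Sigma A^{-\top}| = |\Sigma|/|A|^2$) yields $\tilde s = |A|^{-1}$.

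There is no real obstacle here; the only thing to be careful about is the algebra in $P$ and the handling of the determinant identity in the degenerate-dimension case ($R<D$). In particular, $P$ is positive semidefinite but not invertible when $R<D$, since $PA = 0$ by construction, so one should not be tempted to rewrite $\exp(-\tfrac12 c^\top P c)$ via a ``Gaussian in the orthogonal complement of the range of $A$''; the formula is best left in the stated form.
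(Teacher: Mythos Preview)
Your proposal is correct and follows exactly the approach the paper indicates: substitute $t=Au+b$ into the Gaussian density, expand the quadratic form, and complete the square to split the exponential into a Gaussian in $u$ times a constant factor. In fact, the paper's own proof is only a one-line sketch of this same computation, so your write-up is already more detailed than what appears there.
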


\begin{proof}
The result can be derived by writing $\mathcal{N}(Au+b; \mu, \Sigma) = (2\pi)^{-R/2} |\Sigma|^{-1/2} \exp(-\tfrac{1}{2}(Au+b-\mu)^\top\Sigma^{-1}(Au+b-\mu))$ and splitting the exponential of the sum as a product of exponentials.
\end{proof}

\paragraph{Forward pass.}
For the forward pass, we need to compute
\begin{equation}
\mathbb{E}_p[\psi_j(t)] = \iint_{\mathbb{R}^2} \left[-\lambda - \frac{1}{2}(t-\mu)^\top\Sigma^{-1}(t-\mu)\right]_+ \mathcal{N}(t; \mu_j, \Sigma_j) dt,
\end{equation}
with
\begin{equation}
    \mathcal{N}(t;\mu_j, \Sigma_j)=\frac{1}{2\pi\left| \Sigma_j\right|^{\frac{1}{2}}} \exp{\left(-\frac{1}{2}(t-\mu_j)^\top \Sigma_j^{-1}(t-\mu_j)\right)},
\end{equation}
and (from \eqref{eq:truncated_paraboloid})
\begin{equation}
    \lambda=-\left(\frac{1}{\pi\sqrt{\det(\Sigma)}}\right)^{\frac{1}{2}}.
\end{equation}
Using Lemma~\ref{lemma:affine_transform_gaussian} and  the change of variable formula (which makes the determinants cancel), we can reparametrize $u = (-2\lambda)^{-\frac{1}{2}} \Sigma^{-\frac{1}{2}} (t - \mu)$ and write this as an integral over the unit circle:
\begin{equation}
\mathbb{E}_p[\psi_j(t)] = \iint_{\|u\|\le 1} 
-\lambda (1-\|u\|^2) \mathcal{N}(u; \tilde{\mu}, \tilde{\Sigma}) du,
\end{equation}
with $\tilde{\mu} = (-2\lambda)^{-\frac{1}{2}} \Sigma^{-\frac{1}{2}}(\mu_j - \mu)$, 
$\tilde{\Sigma} = (-2\lambda)^{-1} \Sigma^{-\frac{1}{2}} \Sigma_j \Sigma^{-\frac{1}{2}}$. 
We now do a change to polar coordinates, $u = (r\cos\theta, r\sin\theta) = ar$, where $a = [\cos \theta, \sin \theta]^\top \in \mathbb{R}^{2\times 1}$. The integral becomes:
\begin{eqnarray}
\mathbb{E}_p[\psi_j(t)] &=& \int_{0}^{2\pi} \int_{0}^1 
-\lambda (1-r^2) \mathcal{N}(ar; \tilde{\mu}, \tilde{\Sigma}) r \, dr \, d\theta\nonumber\\
&=& \int_{0}^{2\pi} \int_{0}^1 
-\lambda r(1-r^2) \tilde{s} \mathcal{N}(r; r_0, \sigma^2) \, dr \, d\theta,
\end{eqnarray}
where in the second line we applied again Lemma~\ref{lemma:affine_transform_gaussian}, resulting in 
\begin{eqnarray*}
\sigma^2(\theta) \equiv \sigma^2 &=& (a^\top \tilde{\Sigma}^{-1} a)^{-1}\\
r_0(\theta) \equiv r_0 &=& \sigma^2 a^\top \tilde{\Sigma}^{-1} \tilde{\mu}\\
\tilde{s}(\theta) \equiv \tilde{s} &=& \frac{1}{\sqrt{2\pi}}  \frac{\sigma}{|\tilde{\Sigma}|^{1/2}} \exp\left(-\frac{1}{2}\tilde{\mu}^\top P\tilde{\mu}\right), \quad P = \tilde{\Sigma}^{-1} - \sigma^2 \tilde{\Sigma}^{-1} a a^\top\tilde{\Sigma}^{-1}.
\end{eqnarray*}
Applying Fubini's theorem, 
we fix $\theta$ and integrate with respect to $r$. We use the formulas \eqref{eq:erf_expr123} and the fact that, for any $u, v \in \mathbb{R}$ such that $u \le v$:
\begin{equation}\label{eq:erf_expr4}
\int_{u}^v t^3 \mathcal{N}(t; 0, 1) = -\mathcal{N}(v; 0, 1)(2+v^2) + \mathcal{N}(u; 0, 1)(2+u^2).
\end{equation}
We obtain a closed from expression for the inner integral:
\begin{eqnarray}
 F(\theta) &=& \int_{0}^1 
r(1-r^2)  \mathcal{N}(r; r_0, \sigma^2) \, dr\nonumber\\
&=& (2\sigma^3 + r_0^2 \sigma + r_0\sigma) \mathcal{N}\left(\frac{1-r_0}{\sigma}; 0, 1\right)
- (2\sigma^3 + r_0^2 \sigma -\sigma) \mathcal{N}\left(-\frac{r_0}{\sigma}; 0, 1\right)\nonumber\\
&& 
-\frac{r_0^3 + (3\sigma^2 - 1)r_0}{2} \left[ \mathrm{erf}\left(\frac{1-r_0}{\sqrt{2}\sigma}\right)- \mathrm{erf}\left(-\frac{r_0}{\sqrt{2}\sigma}\right)\right].
\end{eqnarray}
The desired integral can then be expressed in a single dimension as 
\begin{eqnarray}
\mathbb{E}_p[\psi_j(t)] &=& -\lambda \int_{0}^{2\pi} \tilde{s}(\theta) F(\theta),
\end{eqnarray}
which may be integrated numerically.

\paragraph{Backward pass.}

For the backward pass we need to solve
\begin{equation}
\label{eq:1-2_row_J}
\mathrm{cov}_{p, 2}(t, \psi_j(t)) =
\iint_{E} t\mathcal{N}(t; \mu_j, \Sigma_j)
- \frac{1}{|E|}\left(\iint_{E} t\right)\left(\iint_{E} \mathcal{N}(t; \mu_j, \Sigma_j)\right)
\end{equation}and 
\begin{equation}
\label{eq:3-6_row_J}
\mathrm{cov}_{p, 2}(tt^\top, \psi_j(t)) =
\iint_{E} tt^\top\mathcal{N}(t; \mu_j, \Sigma_j)
- \frac{1}{|E|}\left(\iint_{E} tt^\top\right)\left(\iint_{E} \mathcal{N}(t; \mu_j, \Sigma_j)\right)
\end{equation} where $E = \mathrm{supp}(p) = \{t \in \mathbb{R}^2 \mid \frac{1}{2}(t-\mu)^\top \Sigma^{-1} (t-\mu) \le -\lambda\}$ denotes the support of the density $p$, a region bounded by an ellipse. 
Note that these expressions include integrals of vector-valued functions and that \eqref{eq:1-2_row_J} and \eqref{eq:3-6_row_J} correspond to the first to second and the third to sixth row of the Jacobian, respectively. The integrals that do not include Gaussians have closed form expressions and can be computed as
\begin{equation}
    \frac{1}{|E|}\left(\iint_{E} t\right)=\mu
\end{equation}
and
\begin{equation}
    \frac{1}{|E|}\left(\iint_{E} tt^\top\right)=\mu\mu^\top+\frac{\Sigma}{|E|},
\end{equation} where $|E|$ is the area of the region $E$ given by
\begin{equation}
    |E|=\frac{\pi}{\sqrt{ \det \left( \frac{1}{-2\lambda}\,\Sigma^{-1}\right)}}.
\end{equation}

All the other integrals are solved using the same affine transformation and change to polar coordinates as in the forward pass. Given this, $\tilde{\mu}$, $\tilde{\Sigma}$, $a$, $\sigma^2, r_0$ and $\tilde{s}$ are the same as before. 
To solve \eqref{eq:1-2_row_J} we write
\begin{equation}
    \iint_{E} t\mathcal{N}(t; \mu_j, \Sigma_j) = \iint_{\|u\|\le 1} \left((-2\lambda)^{\frac{1}{2}} \Sigma^{\frac{1}{2}}u+\mu\right) \mathcal{N}(u; \tilde{\mu}, \tilde{\Sigma}) du
\end{equation}
in polar coordinates,
\begin{equation}
    \int_{0}^{2\pi} \int_{0}^1 
 r\left((-2\lambda)^{\frac{1}{2}} \Sigma^{\frac{1}{2}}ar+\mu\right) \tilde{s} \, \mathcal{N}(r; r_0, \sigma^2) dr \, d\theta,
\end{equation}
which can be then expressed in a single dimension as 
\begin{eqnarray}
\iint_{E} t\mathcal{N}(t; \mu_j, \Sigma_j) &=& \int_{0}^{2\pi} \tilde{s}(\theta) G(\theta)d\theta,
\end{eqnarray}
with
\begin{eqnarray}
 G(\theta) &=& \int_{0}^1 
r\left((-2\lambda)^{\frac{1}{2}} \Sigma^{\frac{1}{2}}ar+\mu\right)  \mathcal{N}(r; r_0, \sigma^2) \, dr \nonumber\\
&=& \int_{-\frac{r_0}{\sigma}}^{\frac{1-r_0}{\sigma}}
(s\sigma+r_0)\left((-2\lambda)^{\frac{1}{2}} \Sigma^{\frac{1}{2}}a(s\sigma+r_0)+\mu\right)  \mathcal{N}(r; r_0, \sigma^2) \, ds \nonumber\\
&=&
\left((-2\lambda)^{\frac{1}{2}} \Sigma^{\frac{1}{2}}a\sigma(r_0)+\mu\sigma\right)\mathcal{N}\left(-\frac{r_0}{\sigma};0,1\right) \nonumber\\
&&-\left((-2\lambda)^{\frac{1}{2}} \Sigma^{\frac{1}{2}}a\sigma(1+r_0)+\mu\sigma\right) \mathcal{N}\left(\frac{1-r_0}{\sigma};0,1\right) \nonumber\\
&&+\frac{1}{2}\left((-2\lambda)^{\frac{1}{2}} \Sigma^{\frac{1}{2}}a(\sigma^2+r_0^2)+\mu r_0\right)\left[ \mathrm{erf}\left(\frac{1-r_0}{\sqrt{2}\sigma}\right)- \mathrm{erf}\left(-\frac{r_0}{\sqrt{2}\sigma}\right)\right]. 
\end{eqnarray}
We do the same for
\begin{equation}
    \iint_{E} \mathcal{N}(t; \mu_j, \Sigma_j)=\iint_{\|u\|\le 1} \mathcal{N}(u; \tilde{\mu}, \tilde{\Sigma}) du =     \int_{0}^{2\pi} \int_{0}^1 
 r\tilde{s} \, \mathcal{N}(r; r_0, \sigma^2) dr \, d\theta,
\end{equation}
which can then be expressed in a single dimension as 
\begin{eqnarray}
\iint_{E} \mathcal{N}(t; \mu_j, \Sigma_j) &=& \int_{0}^{2\pi} \tilde{s}(\theta) H(\theta)d\theta,
\end{eqnarray}with
\begin{eqnarray}
 H(\theta) &=& \int_{0}^1 
r\mathcal{N}(r; r_0, \sigma^2) \, dr \nonumber = \int_{-\frac{r_0}{\sigma}}^{\frac{1-r_0}{\sigma}}
(s\sigma+r_0)  \mathcal{N}(r; r_0, \sigma^2) \, ds \\
&=&
\sigma\left[\mathcal{N}\left(-\frac{r_0}{\sigma};0,1\right)
-\mathcal{N}\left(\frac{1-r_0}{\sigma};0,1\right)\right] \nonumber +\frac{r_0}{2}\left[ \mathrm{erf}\left(\frac{1-r_0}{\sqrt{2}\sigma}\right)- \mathrm{erf}\left(-\frac{r_0}{\sqrt{2}\sigma}\right)\right].
\end{eqnarray}
Finally, to solve \eqref{eq:3-6_row_J} we simplify the integral
\begin{eqnarray}
    \iint_{E} tt^\top\mathcal{N}(t; \mu_j, \Sigma_j)&=&\iint_{\|u\|\le 1} \left((-2\lambda)^{\frac{1}{2}} \Sigma^{\frac{1}{2}}u+\mu\right)\left((-2\lambda)^{\frac{1}{2}} \Sigma^{\frac{1}{2}}u+\mu\right)^\top \mathcal{N}(u; \tilde{\mu}, \tilde{\Sigma}) du \nonumber \\
    &=&\int_{0}^{2\pi} \int_{0}^1 
 r(r^2A+rB+C)\tilde{s} \, \mathcal{N}(r; r_0, \sigma^2) dr \, d\theta
\end{eqnarray}
with
\begin{equation}
    A=(-2\lambda)\Sigma^{\frac{1}{2}}aa^\top(\Sigma^{\frac{1}{2}})^\top
\end{equation}
\begin{equation}
    B=(-2\lambda)^{\frac{1}{2}}\left( \Sigma^{\frac{1}{2}}a\mu^\top+\mu a^\top (\Sigma^{\frac{1}{2}})^\top \right)
\end{equation}
\begin{equation}
    C=\mu\mu^\top.
\end{equation}
The integral can then be expressed in a single dimension as 
\begin{eqnarray}
\iint_{E} tt^\top \mathcal{N}(t; \mu_j, \Sigma_j) &=& \int_{0}^{2\pi} \tilde{s}(\theta) M(\theta)d\theta,
\end{eqnarray}
with
\begin{eqnarray}
M(\theta) &=& \int_{0}^1 
(r^3A+r^2B+rC)\, \mathcal{N}(r; r_0, \sigma^2) dr \nonumber\\
&=& \int_{-\frac{r_0}{\sigma}}^{\frac{1-r_0}{\sigma}}
(s^3\tilde{A}+s^2\tilde{B}+s\,\tilde{C}+\tilde{D})  \mathcal{N}(s; 0, 1) \, ds \nonumber\\
&=& \left[\left(2+\left(-\frac{r_0}{\sigma}\right)^2\right)\tilde{A}-\frac{r_0}{\sigma}\tilde{B}+\tilde{C}\right]\mathcal{N}\left(-\frac{r_0}{\sigma};0,1\right)\nonumber\\
&&-\left[\left(2+\left(\frac{1-r_0}{\sigma}\right)^2\right)\tilde{A}+\frac{1-r_0}{\sigma}\tilde{B}+\tilde{C}\right]\mathcal{N}\left(\frac{1-r_0}{\sigma};0,1\right) \nonumber\\
&&+\frac{1}{2}\left(\tilde{B}+\tilde{D}\right)\left[ \mathrm{erf}\left(\frac{1-r_0}{\sqrt{2}\sigma}\right)- \mathrm{erf}\left(-\frac{r_0}{\sqrt{2}\sigma}\right)\right]
\end{eqnarray}
where
\begin{equation}
    \tilde{A}=\sigma^3A
\end{equation}
\begin{equation}
    \tilde{B}=\sigma^2(3r_0\,A+B)
\end{equation}
\begin{equation}
    \tilde{C}=\sigma(3r_0^2\,A+2r_0\,B+C)
\end{equation}
\begin{equation}
    \tilde{D}=r_0^3\,A+r_0^2\,B+r_0\,C.
\end{equation}

\remove{
\section{Proof of Proposition~\ref{prop:fy_properties} }\label{sec:proof_fy_properties}

The proof adapts that of \citet{Blondel2019AISTATS} 
\andre{maybe replace by \citep{blondel2020learning}} 
when  Fenchel duality is now taken in the infinite-dimensional set $\mathcal{F} \subseteq \mathbb{R}^S$, which endowed with the inner product $\langle f, g\rangle = \int_S f(t) g(t) d\nu(t)$ forms a Hilbert space \citep{Bauschke_Combettes2011}. \andre{point to section?} 
The non-negativity of $L_\Omega$ stems from the Fenchel-Young inequality in Hilbert spaces.
The loss is zero iff $(f_\theta, p)$ is a dual pair, i.e., if $p = \hat{p}_\Omega[f_\theta] = \nabla \Omega^*(f_\theta)$.
The gradient of $L_\Omega$ is
\begin{equation*}
\nabla_\theta L_\Omega(f_\theta; p) = \int_S \frac{\partial L_\Omega(f_\theta; p)}{\partial f_\theta(t)}  \nabla_\theta f_\theta(t) d\mu(t) = \int_S (\hat{p}_\Omega[f_\theta](t) - p(t)) \phi(t).
\end{equation*}
where we used the fact that
$\frac{\partial L_\Omega(f_\theta; p)}{\partial f_\theta(t)} = [\nabla \Omega^*(f_\theta) - p](t)$.
This proves point 1.
To prove point 2, note that we have
\begin{equation*}
\nabla\nabla_\theta L_\Omega(f_\theta; p) = \nabla_\theta \mathbb{E}_{\hat{p}_\Omega[f_\theta]}[\phi(t)] - \underbrace{\nabla_\theta \mathbb{E}_{p}[\phi(t)]}_{=0} = \mathrm{Cov}_{\hat{p}_\Omega[f_\theta], 2-\alpha}[\phi(t)],
\end{equation*}
where we used the result of Prop.~\ref{prop:jacobian_entmax}.
Since for any probability distribution the covariance operation  leads to a positive semi-definite matrix, so does  $\mathrm{Cov}_{\hat{p}_\Omega[f_\theta], 2-\alpha}[\phi(t)]$. Therefore, the Hessian of $L_\Omega$ with respect to $\theta$ is positive semi-definite, i.e., $L_\Omega$ is convex on the canonical parameters, which proves point 2.
Finally, point 3 is an immediate consequence of points 1 and 2: Since
$L_{\Omega_\alpha}$ is convex, any stationary point is a global minimum,
and, from point 1, any stationary point $\hat\theta$ must satisfy
$\mathbb{E}_{\hat{p}_{\Omega}[f_{\hat{\theta}}]}[\phi(t)] =
\mathbb{E}_{p}[\phi(t)]$.

One interpretation of point 3 is that we may fit an $\alpha$-sparse
density to an empirical distribution $p$ by matching the expected statistics.
Point 3 guarantees that the result is optimal in the FY loss sense, analogous to
how, for exponential families, moment matching amounts to maximum likelihood.
Figure~\ref{fig:misfit} illustrates the result of fitting a Gaussian, truncated
parabola, and triangular distribution to samples drawn from each of the three
types of distribution. Kolmogorov-Smirnov test results, reported for each plot,
successfully reject all mismatched fits, confirming adequate fitting.

\begin{figure*}[t]\centering\includegraphics[width=.9\textwidth]{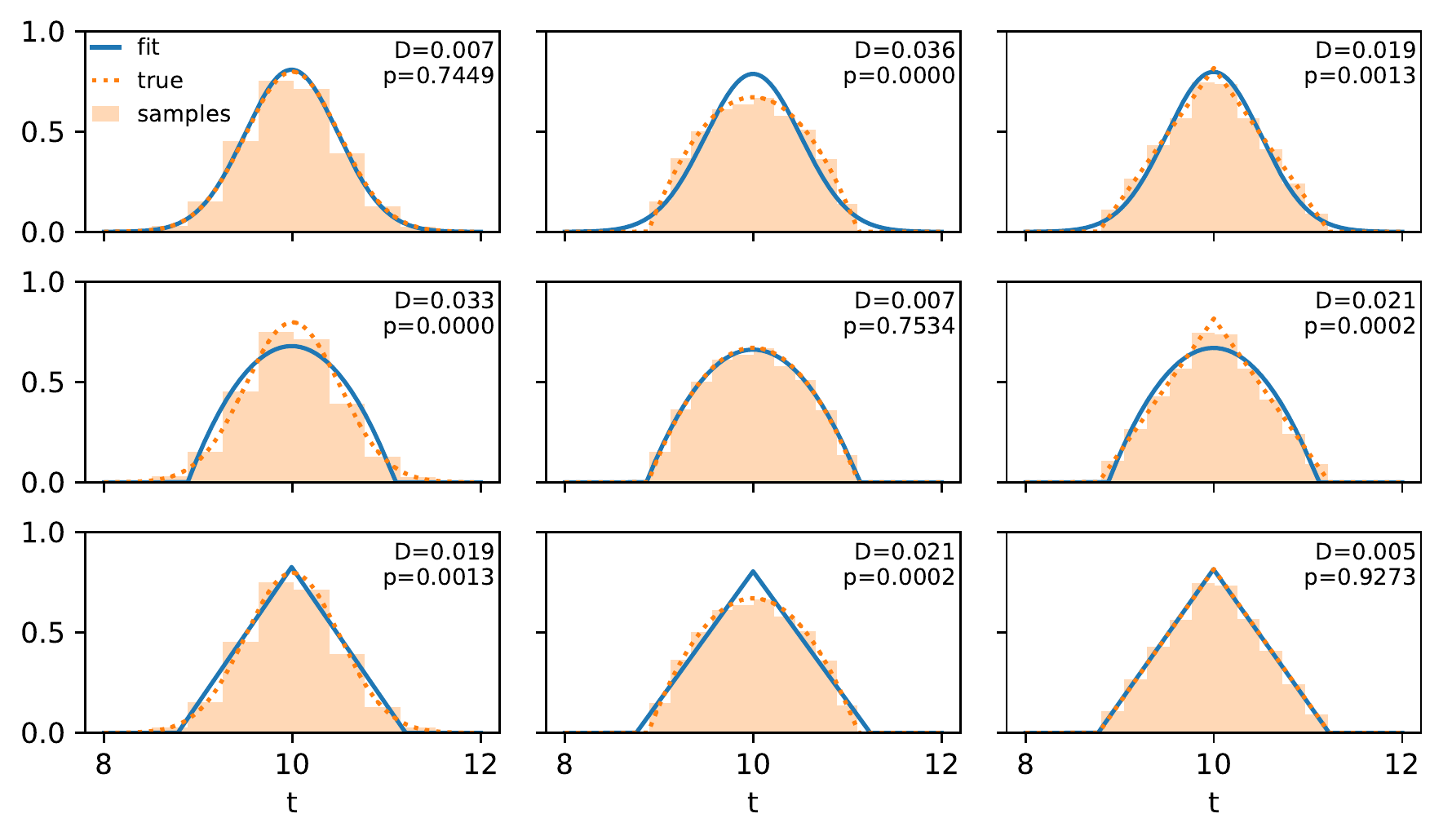}
\caption{\label{fig:misfit}Fitting a Gaussian, truncated parabola, and triangular distribution
(rows, $\downarrow$) to 10,000 samples from a Gaussian, truncated parabola, and
triangular (columns, $\rightarrow$). For each fit, we report the
Kolmogorov-Smirnov distance $D$ and corresponding $p$-value $p$.}
\end{figure*}

\section{Derivations of Fenchel-Young Losses}\label{sec:fy_example_derivations}

We derive in this section closed-form expressions for Fenchel-Young losses between truncated parabolas, triangular distributions, and between Gaussians and truncated parabolas and uniform distributions.

\subsection{FY loss between truncated parabolas}

If $f(t) = -\frac{(t-\mu)^2}{2\sigma^2}$,
we have $\lambda=-\frac{1}{2}\left(\frac{3}{2\sigma} \right)^{2/3}$, so this yields:
\begin{eqnarray}
\Omega_2^*(f) &=& \frac{1}{2} -\frac{3}{10} \left(\frac{3}{2\sigma} \right)^{2/3}.
\end{eqnarray}
Let
$p(t) = [-\lambda_p - \frac{(t-\mu_p)^2}{2\sigma_p^2}]_+$ be another truncated parabola,
with $\lambda_p = -\frac{1}{2}\left( \frac{3}{2\sigma_p}\right)^{2/3}$.
Doing the change of variable $\delta \mu = \mu - \mu_p$,
the Fenchel-Young loss becomes:
\begin{eqnarray}
L_{\Omega_2}(f; p) &=&
\Omega_2^*(f) + \Omega_2(p) - \mathbb{E}_p[f(X)]\nonumber\\
&=&
\frac{1}{2} -\frac{3}{10} \left(\frac{3}{2\sigma} \right)^{2/3}
- \frac{1}{2} +\frac{1}{5} \left(\frac{3}{2\sigma_p} \right)^{2/3}
+ \int_{-a}^a \left( -\lambda_p - \frac{t^2}{2\sigma_p^2} \right) \frac{(t-\delta\mu)^2}{2\sigma^2}\nonumber\\
&=&
-\frac{3}{10} \left(\frac{3}{2\sigma} \right)^{2/3}
+\frac{1}{5} \left(\frac{3}{2\sigma_p} \right)^{2/3}
+ \frac{1}{10\sigma^2} \left(\frac{3\sigma_p^2}{2} \right)^{2/3} + \frac{(\mu-\mu_p)^2}{2\sigma^2}.
\end{eqnarray}
which becomes 0 (and is minimized) when $\mu=\mu_p$ and $\sigma=\sigma_p$.
Note that when $\sigma=\sigma_p$ (\textit{i.e.}, if the two distributions have the same variance), the sparsemax loss between truncated parabolas and the KL divergence between Gaussians are both equal to $\frac{\mu^2}{2\sigma^2}$.

\subsection{FY loss between triangular distributions}

If $f(t) = -\frac{|t-\mu|}{b}$,
we have $\lambda=-b^{-1/2}$
and $\Omega_2(\hat{p}_{\Omega_2}[f]) = -\frac{1}{2} +  \frac{1}{3\sqrt{b}}$, so \eqref{eq:omega_conjugate} yields:
\begin{eqnarray}
\Omega_2^*(f) &=& \frac{1}{2} -\frac{2}{3\sqrt{b}}.
\end{eqnarray}
Let
$p(t) = [-\lambda_p - \frac{|t-\mu_p|}{b_p}]_+$ be another triangular distribution,
with $\lambda_p = -b_p^{-1/2}$.
We can assume without loss of generality that $\mu_p=0$.
To obtain $\mathbb{E}_p[f(t)]$, we need to consider four cases:
\begin{itemize}
\item \framebox{$\mu \ge a_p$}
In this case, $\mathbb{E}_p[f(t)] = -\int_{-a_p}^{a_p} \left(\lambda_p  + \frac{|t|}{b_p} \right) \frac{t-\mu}{b} = -\frac{\mu}{b}$ (after some math).
\item \framebox{$\mu \le -a_p$}
By symmetry, $\mathbb{E}_p[f(t)] = \frac{\mu}{b}$.
\item \framebox{$-a_p \le \mu \le 0$}
We need to split the integral in three intervals $[-a_p, \mu]$, $[\mu, 0]$, and $[0, a_p]$, giving (after some math):
\begin{equation}
\mathbb{E}_p[f(t)] = -\frac{\mu^2}{b_p^{1/2} b} - \frac{\mu^3}{3 b_p b} - \frac{b_p^{1/2}}{3b}.
\end{equation}
\item \framebox{$0 \le \mu \le a_p$}
By symmetry:
\begin{equation}
\mathbb{E}_p[f(t)] = -\frac{\mu^2}{b_p^{1/2} b} + \frac{\mu^3}{3 b_p b} - \frac{b_p^{1/2}}{3b}.
\end{equation}
\end{itemize}
Putting everything together, we obtain:
\begin{equation}
\mathbb{E}_p[f(t)] = \min \left\{-\frac{|\mu|}{b}, \,\, \frac{|\mu|^3}{3 b_p b} -\frac{\mu^2}{b_p^{1/2} b} - \frac{b_p^{1/2}}{3b}\right\},
\end{equation}
and
the Fenchel-Young loss becomes:
\begin{eqnarray}
L_{\Omega_2}(f; p) &=&
\Omega_2^*(f) + \Omega_2(p) - \mathbb{E}_p[f(t)]\nonumber\\
&=&
\frac{1}{3b_p^{1/2}} - \frac{2}{3b^{1/2}}
+ \max \left\{\frac{|\mu|}{b}, \,\, -\frac{|\mu|^3}{3 b_p b} +\frac{\mu^2}{b_p^{1/2} b} + \frac{b_p^{1/2}}{3b}\right\}.
\end{eqnarray}

\subsection{KL  between as uniform and a Gaussian}

Let now $p(t) = u_{[a,b]}(t)$ be a uniform density, and let $f_\theta(t) = \theta^\top \phi(t)$ with $\phi(t) = [t, t^2]$.
The log-partition function of $f_\theta$, expressed in terms of $\mu$ and $\sigma^2$, is
\begin{equation}
\Omega^*(f_\theta) = \frac{\mu^2}{2\sigma^2} + \frac{1}{2}\log(2\pi\sigma^2).
\end{equation}
The Shannon negentropy of a uniform distribution is
\begin{equation}
\Omega(u_{[a,b]}) = -\log(b-a).
\end{equation}
The expectation $\mathbb{E}_{u_{[a,b]}}[f_\theta(t)]$ is
\begin{eqnarray}\label{eq:expectation_under_uniform}
\mathbb{E}_{u_{[a,b]}}[f_\theta(t)] &=& \frac{1}{b-a}\int_a^b \theta^\top \phi(t)\nonumber\\
&=& \theta_1 \frac{b^2 - a^2}{2(b-a)} + \theta_2 \frac{b^3 - a^3}{3(b-a)}\nonumber\\
&=& \theta_1 \frac{a+b}{2} + \theta_2 \frac{a^2 + ab + b^2}{3}\nonumber\\
&=& \frac{\mu}{2\sigma^2} (a+b) - \frac{1}{6\sigma^2} (a^2 + ab + b^2).\nonumber\\
\end{eqnarray}
Putting all together, we get:
\begin{eqnarray}
    L_\Omega(f_\theta; u_{[a,b]}) &=&
    \frac{\mu^2}{2\sigma^2} + \frac{1}{2}\log(2\pi\sigma^2) -\log(b-a)\nonumber\\ && -\frac{\mu}{2\sigma^2} (a+b) + \frac{1}{6\sigma^2} (a^2 + ab + b^2).\nonumber\\
\end{eqnarray}

\subsection{Sparsemax loss between uniform and truncated parabola}

Let now $p(t) = u_{[a,b]}(t)$ be a uniform density, and let $f_\theta(t) = \theta^\top \phi(t)$ with $\phi(t) = [t, t^2]$.
We have
\begin{equation}
\Omega_2^*(f_\theta) = \frac{\mu^2}{2\sigma^2} +
\frac{1}{2} -\frac{3}{10} \left(\frac{3}{2\sigma} \right)^{2/3}.
\end{equation}
The Gini negentropy of a uniform distribution is
\begin{equation}
\Omega_2(u_{[a,b]}) = \frac{1}{2} \int_a^b \frac{1}{(b-a)^2} - 1 = \frac{1}{2}\left(\frac{1}{b-a} - 1\right).
\end{equation}
The expectation $\mathbb{E}_{u_{[a,b]}}[f_\theta(t)]$ is the same as \eqref{eq:expectation_under_uniform}.
Putting all together, we get:
\begin{eqnarray}
    L_{\Omega_2}(f_\theta; u_{[a,b]}) &=&
    \frac{\mu^2}{2\sigma^2}
    -\frac{3}{10} \left(\frac{3}{2\sigma} \right)^{2/3} +
    \frac{1}{2(b-a)}\nonumber\\ && -\frac{\mu}{2\sigma^2} (a+b) + \frac{1}{6\sigma^2} (a^2 + ab + b^2).\nonumber\\
\end{eqnarray}
}

\section{Experimental Details and Model Hyperparameters}\label{sec:model_hyperparams}

\subsection{Document classification}

We used the IMDB movie review dataset \citep{maas2011learning},%
\footnote{\url{https://ai.stanford.edu/~amaas/data/sentiment}} %
which consist of user-written text reviews with binary labels (positive/negative). 
Following \citep{jain2019attention}, we used 25K training documents, 10\% of which for validation, and 25K for testing. The training and test sets are perfectly balanced: 12.5K negative and 12.5K positive examples. 
The documents have 280 words on average.  

Our architecture is the same as \citep{maas2011learning}, a BiLSTM with attention. 
We used pretrained GloVe embeddings from the 840B release,\footnote{\url{http://nlp.stanford.edu/data/glove.840B.300d.zip}} kept frozen. 
We tuned three hyperparameters using the discrete softmax attention baseline:
learning rate within $\{0.003, \mathbf{0.001}, 0.0001\}$; $\ell_2$ within $\{0.01, 0.001, \mathbf{0.0001}, 0\}$; number of epochs within $\{5, \mathbf{10}, 20\}$. 
We picked the best configuration by doing a grid search and by taking into consideration the accuracy on the validation set (selected values in bold).
Table~\ref{tab:table_all_hyperparams} shows the hyperparameters and model configurations used for all document classification experiments.

\begin{table}[t]
    \caption{Hyperparmeters for document classification.}
    \label{tab:table_all_hyperparams}
    \begin{small}
    \begin{center}
    \begin{tabular}{llllll}
        \toprule
        \sc Hyperparameter & \sc Value  \\
        \midrule
        Batch size                  & 16    \\
        Word embeddings size        & 300     \\
        BiLSTM hidden size          & 128     \\
        Merge BiLSTM states         & Concat     \\
        Attention scorer            & \citep{bahdanau2014neural}     \\
        Conv filters                & 128   \\
        Conv kernel size            & 3     \\
        Early stopping patience     & 5     \\
        Number of epochs            & 10     \\
        Optimizer                   & Adam      \\
        $\ell_2$ regularization     & 0.0001     \\
        Learning rate               & 0.001     \\
        \bottomrule
    \end{tabular}
    \end{center}
    \end{small}
    \vskip -0.1in
\end{table}

\subsection{Machine translation}

We used the De$\to$En dataset from the IWSLT 2017 evaluation campaign \citep{cettolo2017overview}, with the standard splits (206K, 9K, and 2K sentence pairs for train/dev/test).%
\footnote{\url{https://wit3.fbk.eu/mt.php?release=2017-01-trnted}} %
We used BPE \citep{sennrich2016neural} with 32K merges to reduce the vocabulary size.
Our implementation is based on Joey-NMT \citep{kreutzer2019joey} and we used the provided configuration script for the baseline, a BiLSTM model with discrete softmax attention%
\footnote{\url{https://github.com/joeynmt/joeynmt/blob/master/configs/iwslt14_deen_bpe.yaml}} %
with the  hyperpameters in Table~\ref{tab:table_all_hyperparams_nmt}. 

\begin{table}[t]
    \caption{Hyperparmeters for neural machine translation.}
    \label{tab:table_all_hyperparams_nmt}
    \begin{small}
    \begin{center}
    \begin{tabular}{llllll}
        \toprule
        \sc Hyperparameter & \sc Value  \\
        \midrule
        Batch size                  & 80    \\
        Word embeddings size        & 620     \\
        BiLSTM hidden size          & 1000     \\
        Attention scorer            & \citep{bahdanau2014neural}     \\
        Early stopping patience & 8 \\
        Number of epochs            & 100     \\
        Optimizer                   & Adam      \\
        $\ell_2$ regularization     & 0     \\
        Dropout & 0.0 \\
        Hidden dropout & 0.2 \\
        Learning rate               & 0.0002     \\
        Scheduling & Plateau \\
        Decrease factor & 0.7 \\
        Lower case & True \\
        Normalization & Tokens \\
        Maximum output length & 80 \\
        Beam size & 5 \\
        RNN type & GRU \\
        RNN layers & 1 \\
        Input feeding & True \\
        Init. hidden & Bridge \\
        \bottomrule
    \end{tabular}
    \end{center}
    \end{small}
    \vskip -0.1in
\end{table}

\subsection{Visual question answering}

We used the VQA-v2 dataset \cite{Goyal2019} with the standard splits (443K, 214K, and 453K question-image pairs for train/dev/test, the latter subdivided into  test-dev, test-standard, test-challenge and test-reserve). We adapted the implementation of \cite{Yu2019},%
\footnote{\url{https://github.com/MILVLG/mcan-vqa}} %
consisting of a Modular Co-Attention Network (MCAN). Our architecture is the same as \cite{Yu2019} except that we represent the image input with grid features generated by a ResNet \cite{He2016} pretrained on ImageNet \cite{Russakovsky2015}, instead of bounding-box features \cite{Anderson2018}. 
The images are resized to $448 \times 448$ before going through the ResNet that outputs a feature map of size $14 \times 14 \times 2048$. To represent the input question words we use 300-dimensional GloVe word embeddings \cite{pennington2014glove}, yielding a question feature matrix representation. Table~\ref{tab:table_hyperparams_VQA} shows the hyperparameters used for all the VQA experiments presented.

All the models we experimented with use the same features and were trained only on the train set without data augmentation.

\paragraph{Examples.} 
Figure~\ref{fig:examples_vqa_skate} illustrates the difficulties that continuous attention models may face when trying to focus on objects that are too far from each other or that seem to have different relative importance to answer the question. Intuitively, in VQA, this becomes a problem when counting objects in those conditions. On the other side, in counting questions that require the understanding of a contiguous region of the image only, continuous attention may perform better (see Figure~\ref{fig:examples_vqa_2birds}).

Figures~\ref{fig:examples_vqa_hat} and \ref{fig:examples_vqa_soccer} show other examples where continuous attention focus on the right region of the image and answers the question correctly. For these cases, discrete attention is more diffuse than its continuous counterpart: in both examples, it attends to two different regions in the image, leading to incorrect answers.

\begin{table}[t]
    \caption{Hyperparmeters for VQA.}
    \label{tab:table_hyperparams_VQA}
    \begin{small}
    \begin{center}
    \begin{tabular}{llllll}
        \toprule
        \sc Hyperparameter & \sc Value  \\
        \midrule
        Batch size                  & 64    \\
        Word embeddings size        & 300     \\
        Input image features size   & 2048 \\
        Input question features size & 512 \\
        Fused multimodal features size & 1024 \\
        Multi-head attention hidden size        & 512 \\
        Number of MCA layers        & 6 \\
        Number of attention heads   & 8 \\
        Dropout rate                & 0.1 \\
        MLP size in flatten layers  & 512 \\
        Optimizer                   & Adam \\
        Base learning rate at epoch $t$ starting from 1 & $\mathrm{min}(2.5 t \cdot 10^{-5}, 1\cdot 10^{-4})$\\
        Learning rate decay ratio at epoch $t\in \{10,12\}$  & 0.2 \\
        Number of epochs            & 13 \\
        
        \bottomrule
    \end{tabular}
    \end{center}
    \end{small}
\end{table}

\begin{figure*}[t]
\centering
\includegraphics[width=0.24\textwidth]{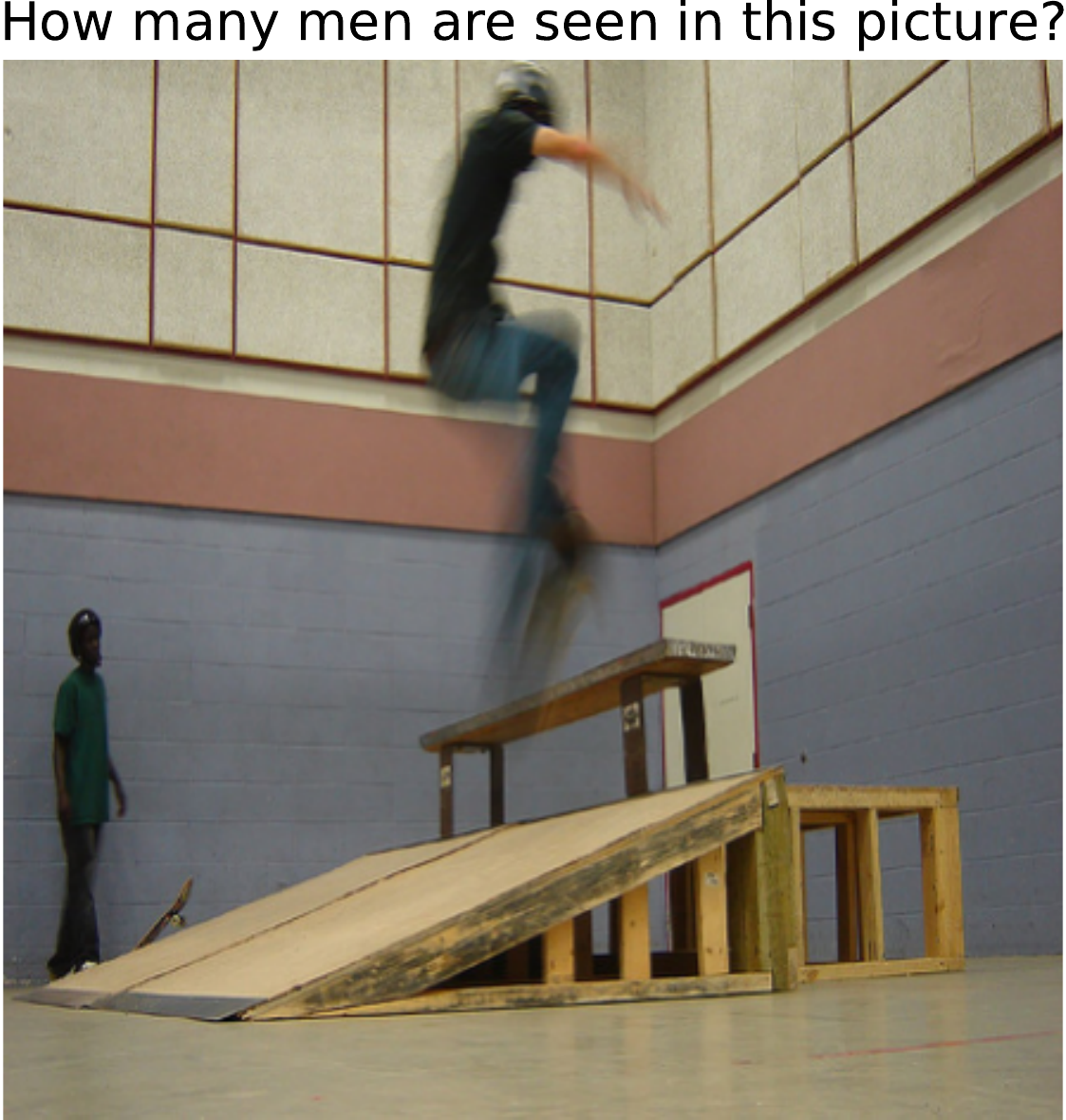}
\includegraphics[width=0.24\textwidth]{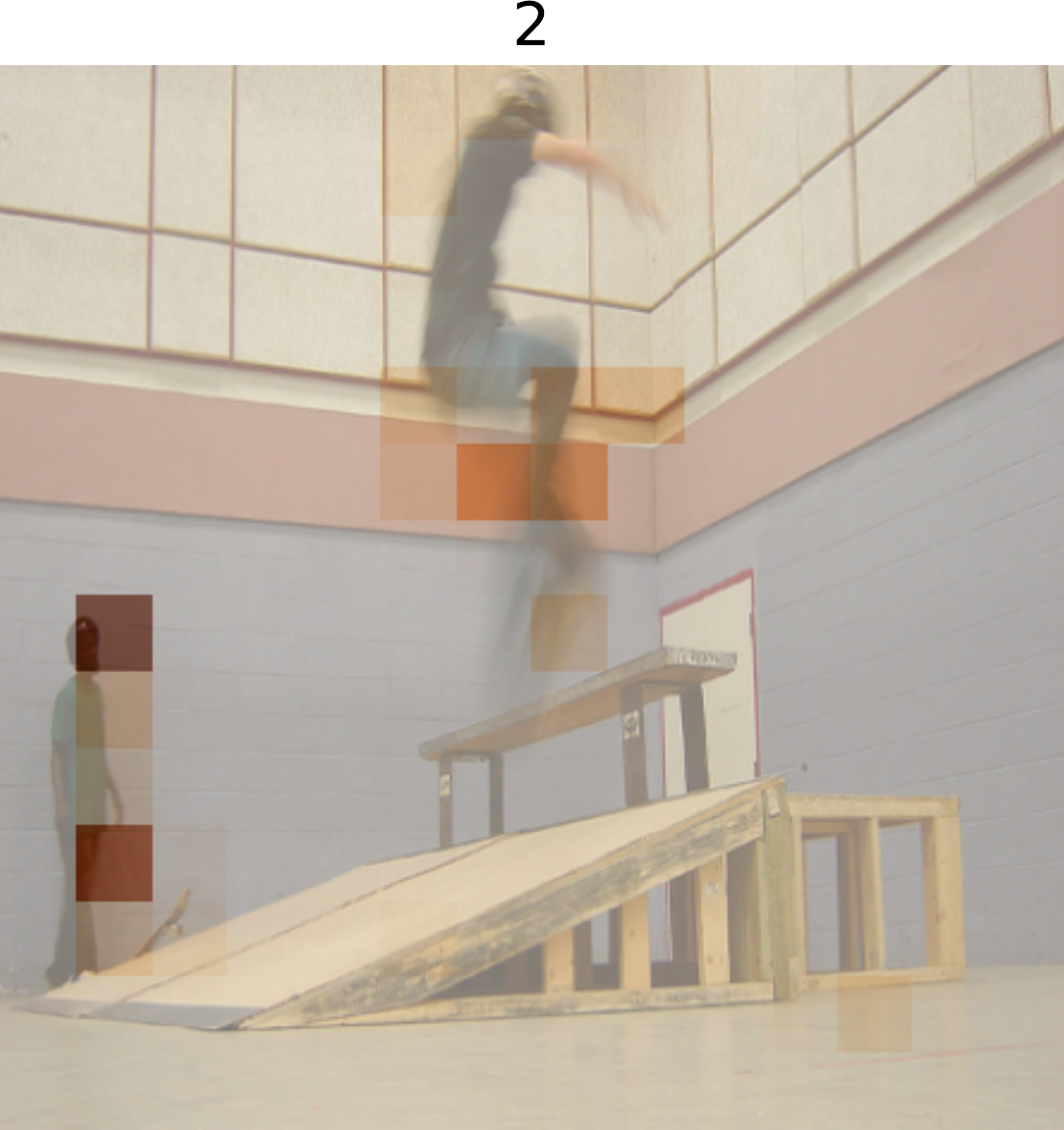}
\includegraphics[width=0.24\textwidth]{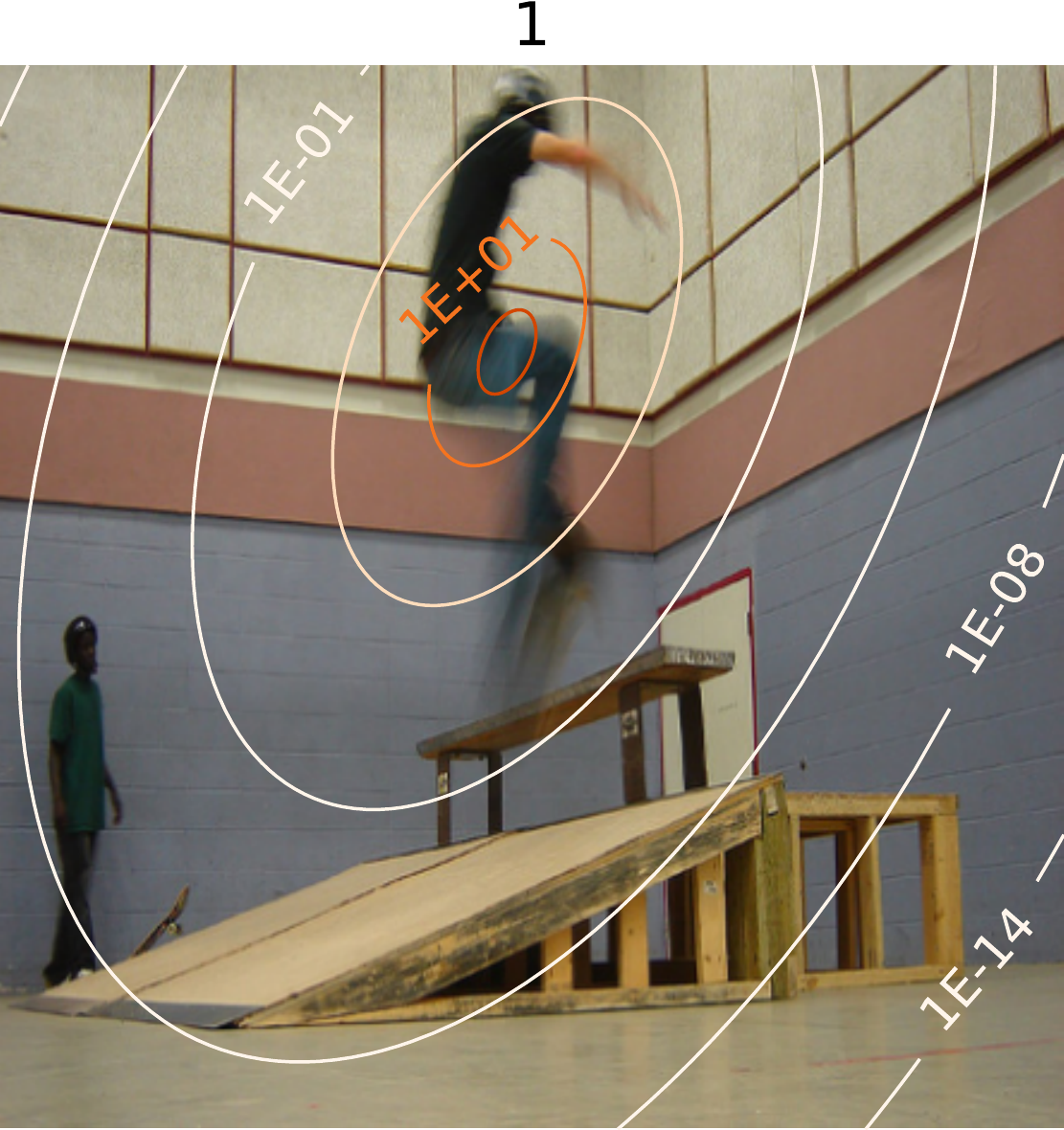}
\includegraphics[width=0.24\textwidth]{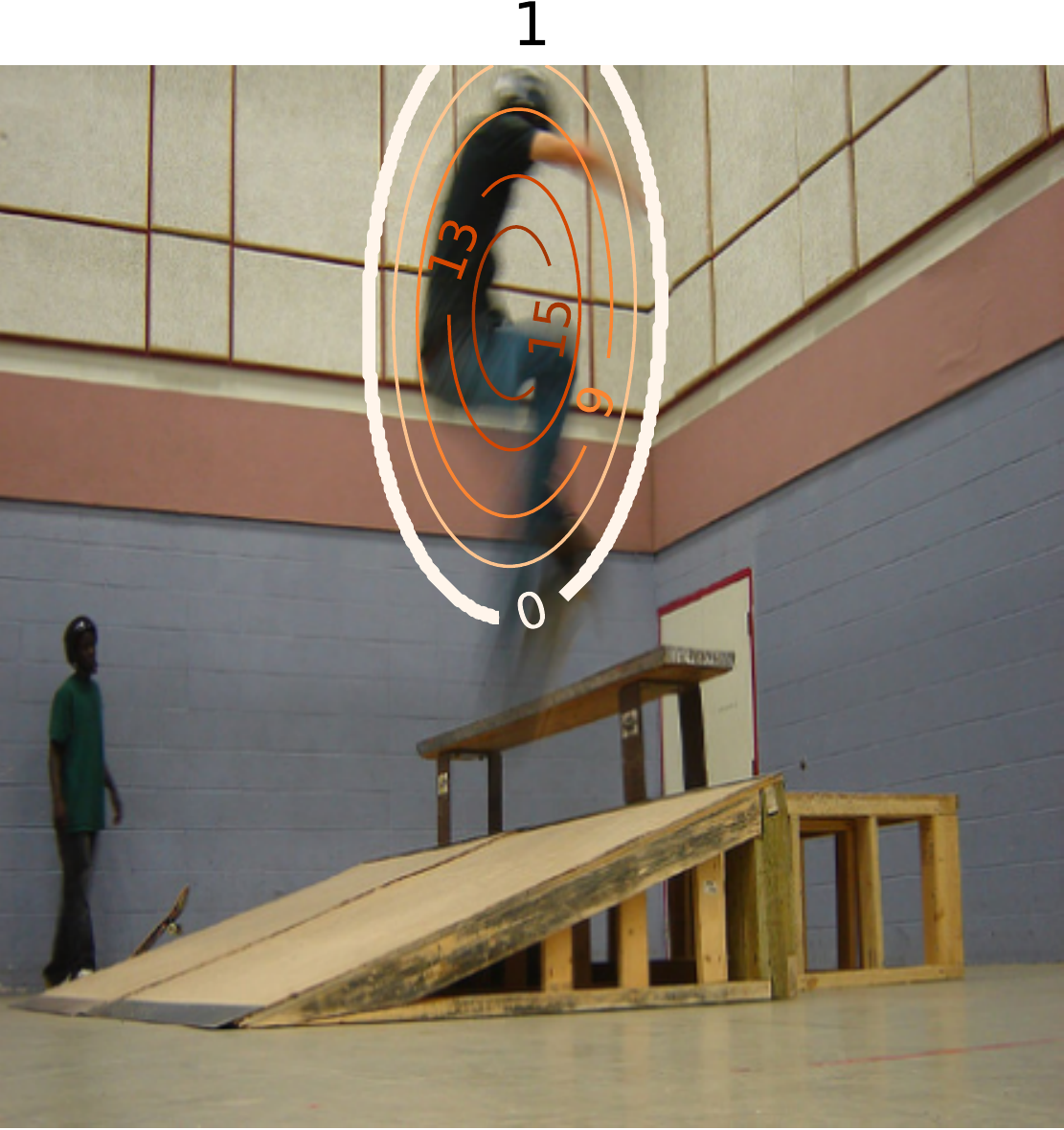}
\caption{\label{fig:examples_vqa_skate}Attention maps for an example in VQA-v2: original image, discrete attention, continuous softmax, and continuous sparsemax.}
\end{figure*}

\begin{figure*}[t]
\centering
\includegraphics[width=0.24\textwidth]{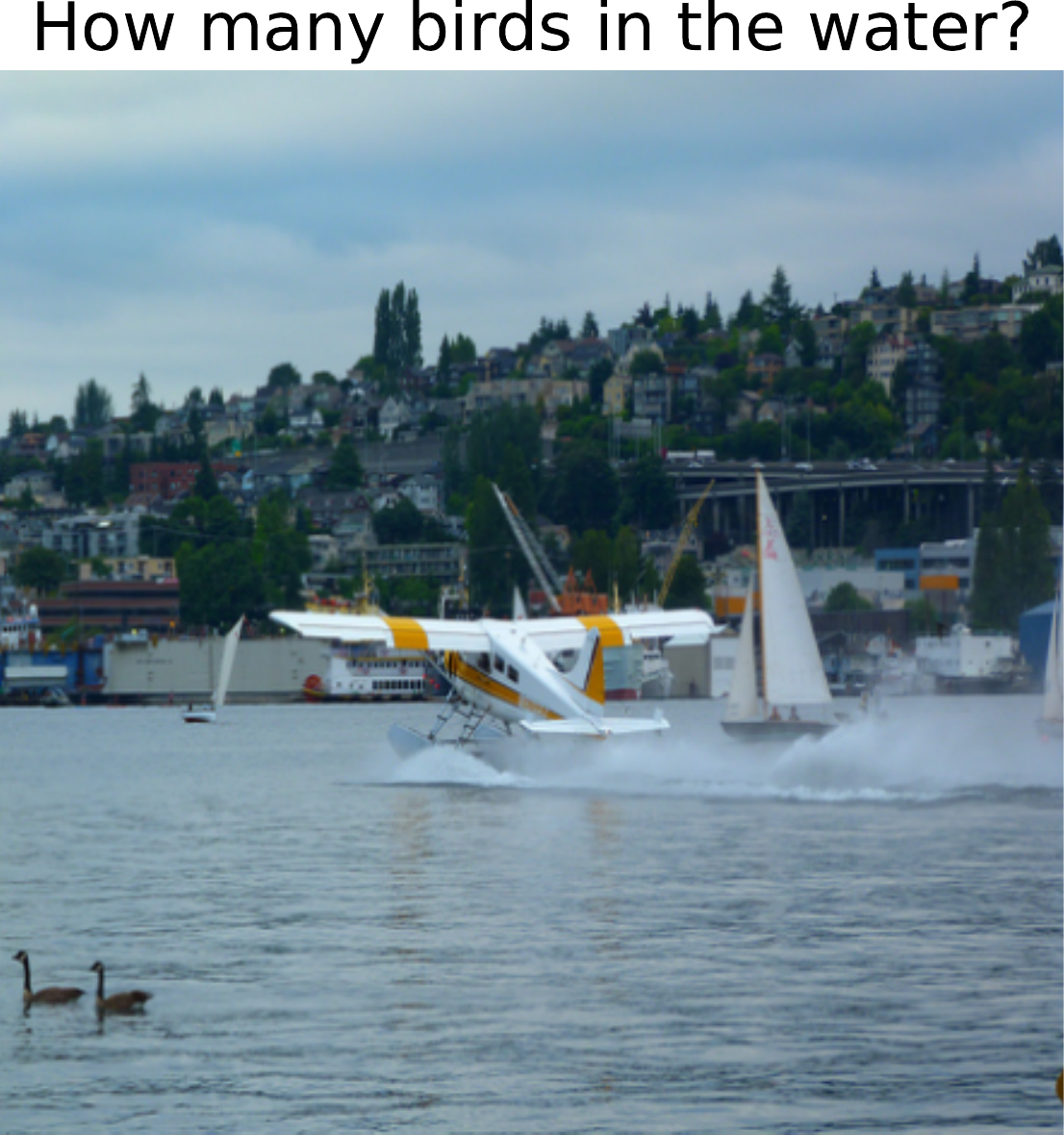}
\includegraphics[width=0.24\textwidth]{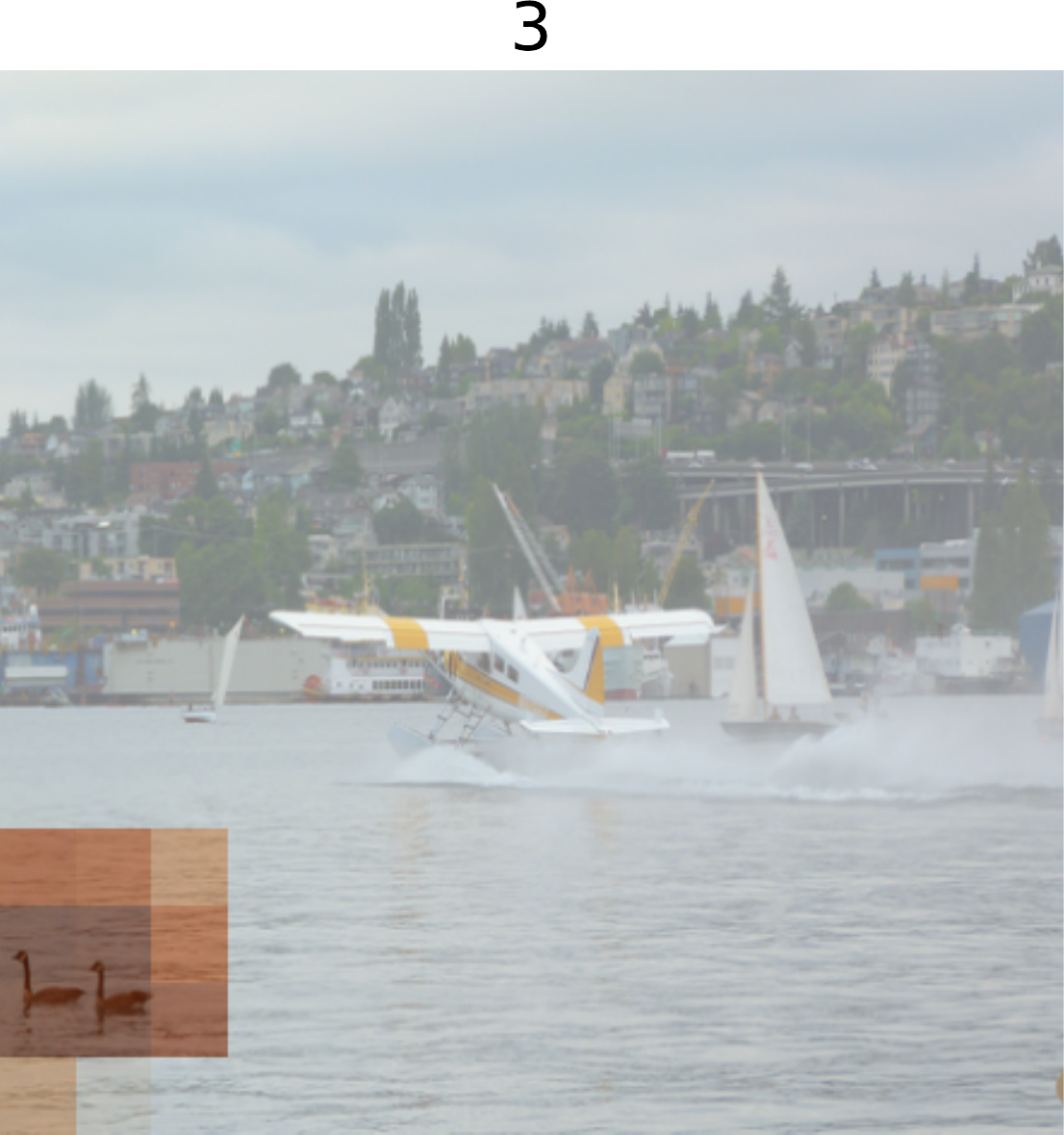}
\includegraphics[width=0.24\textwidth]{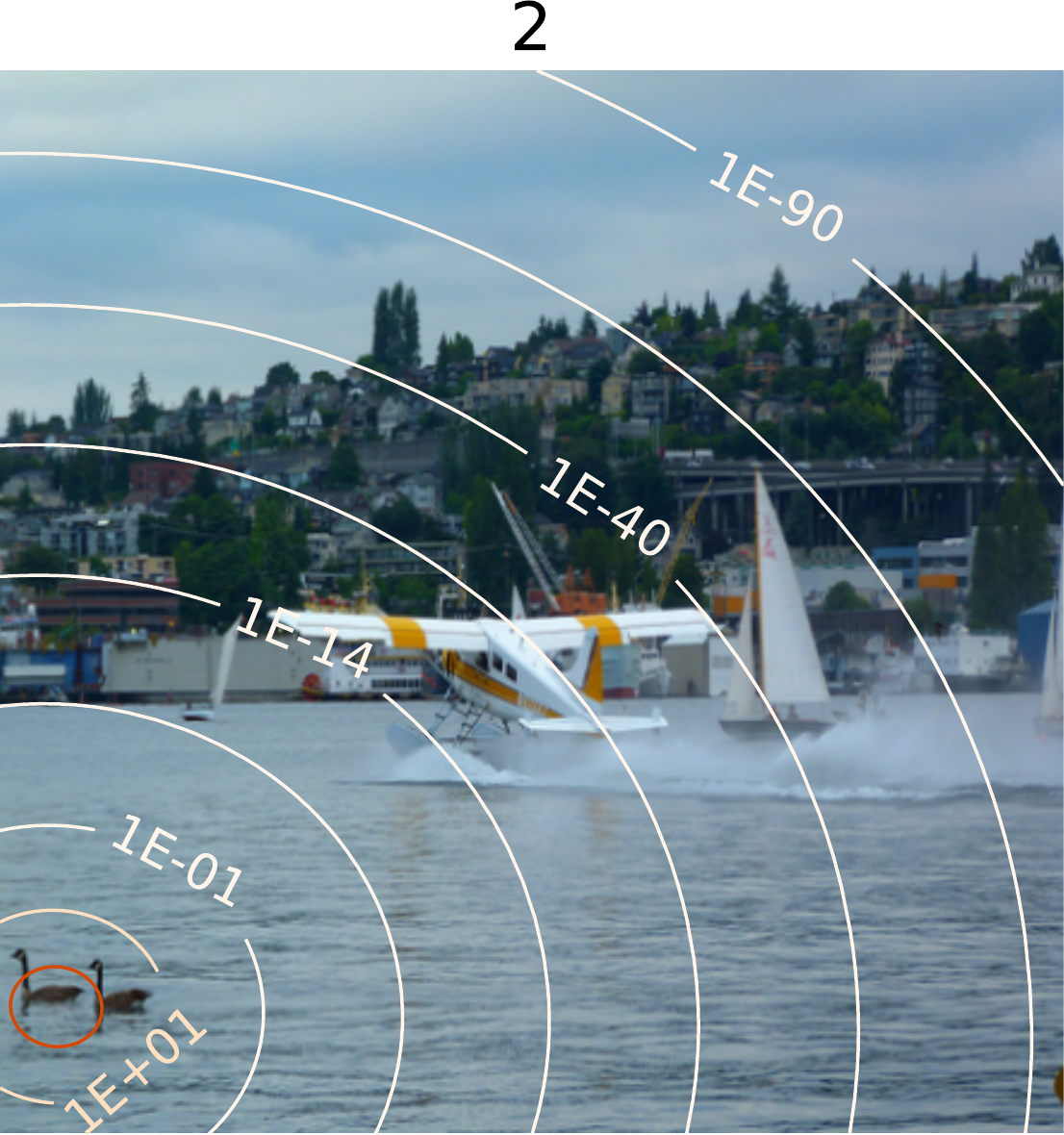}
\includegraphics[width=0.24\textwidth]{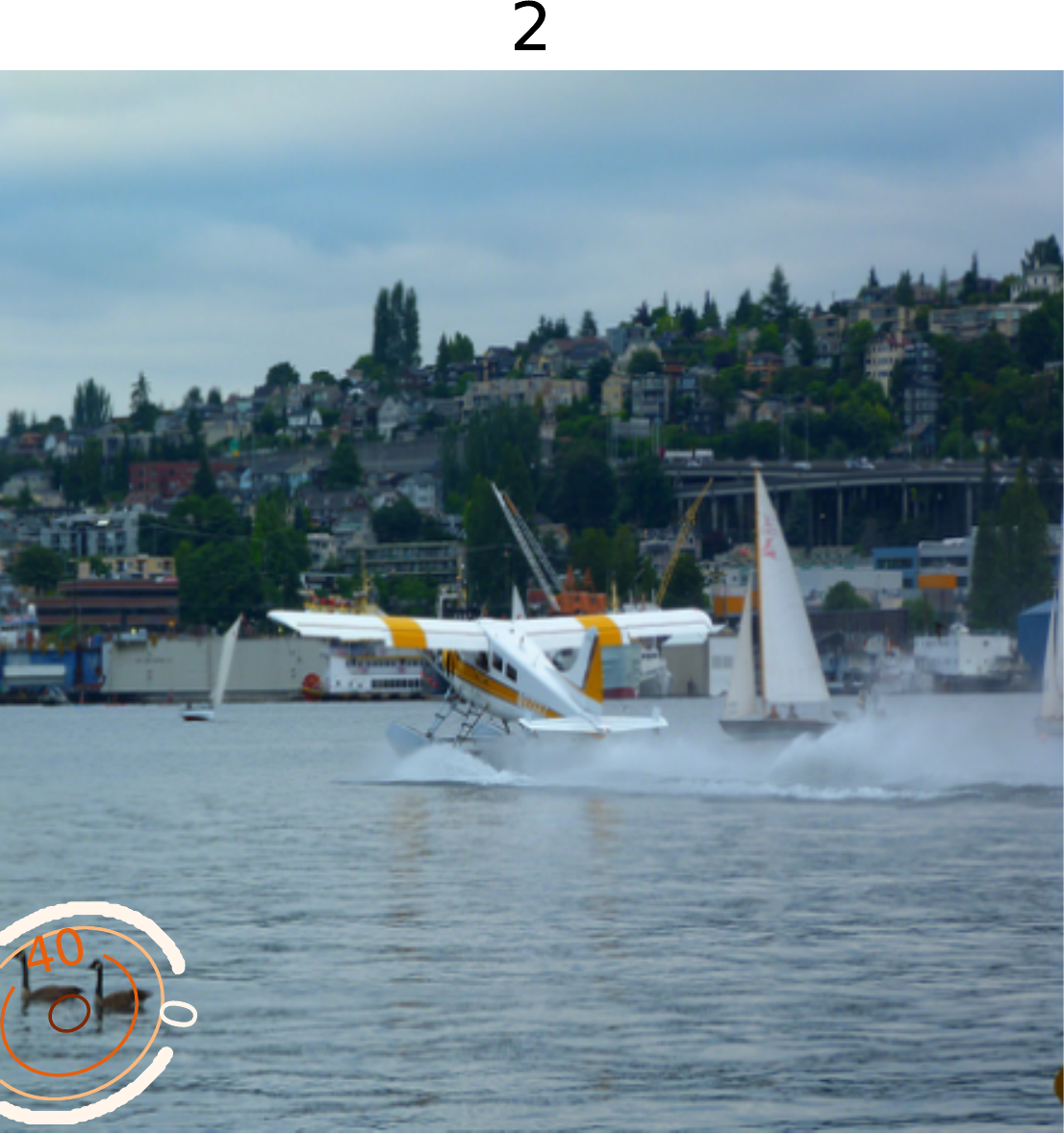}
\caption{\label{fig:examples_vqa_2birds}Attention maps for an example in VQA-v2: original image, discrete attention, continuous softmax, and continuous sparsemax.}
\end{figure*}

\begin{figure*}[t]
\centering
\includegraphics[width=0.24\textwidth]{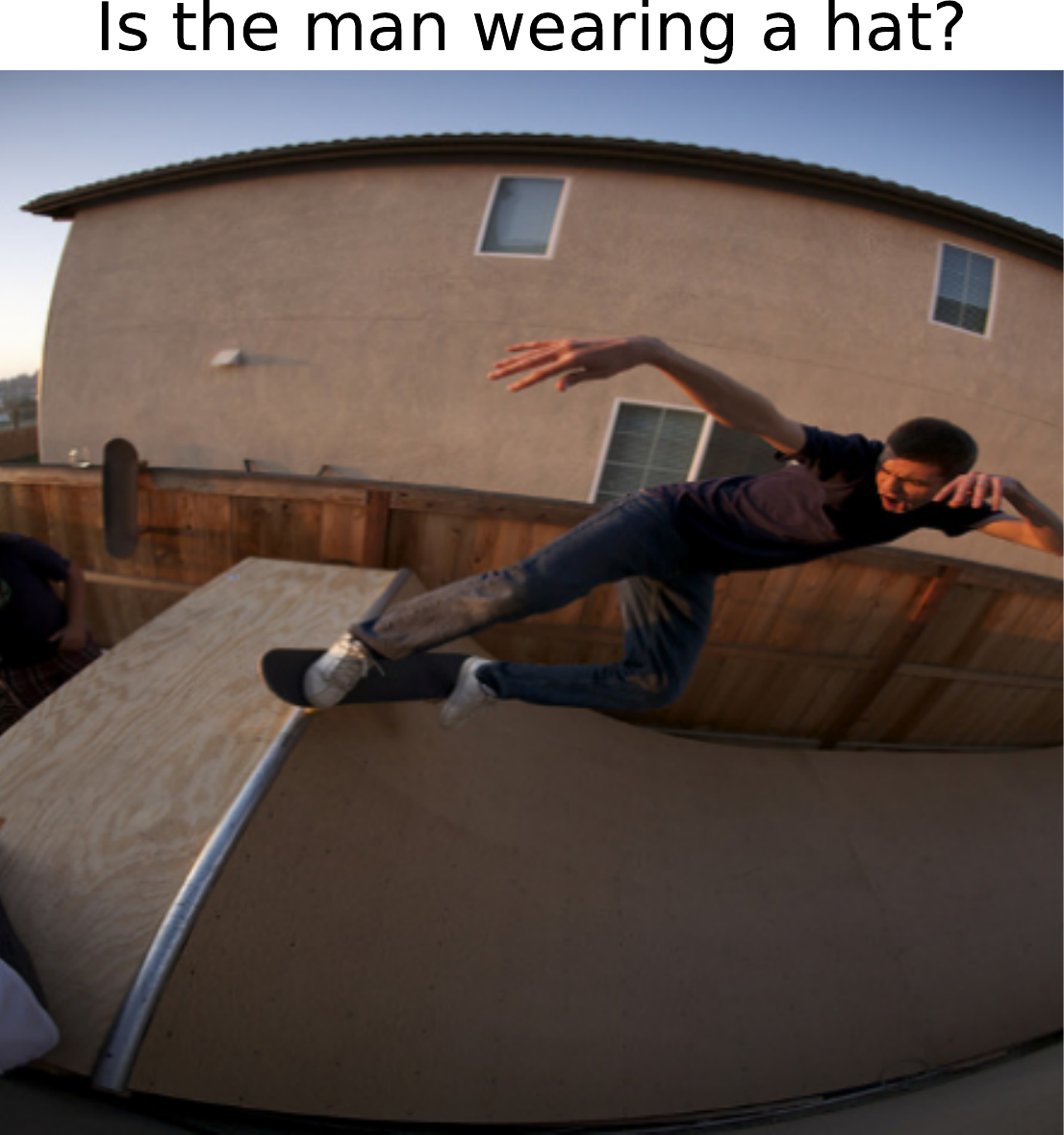}
\includegraphics[width=0.24\textwidth]{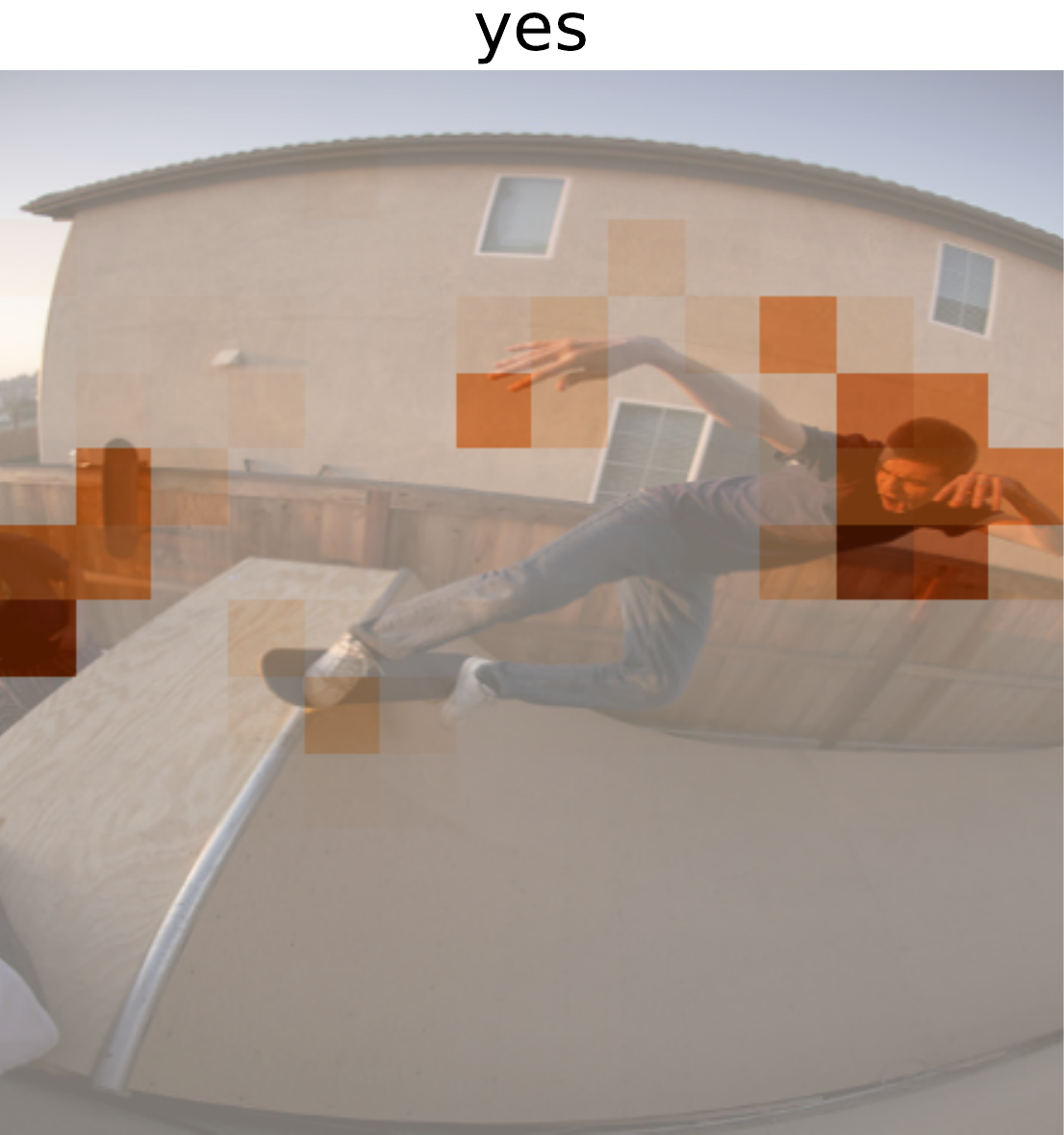}
\includegraphics[width=0.24\textwidth]{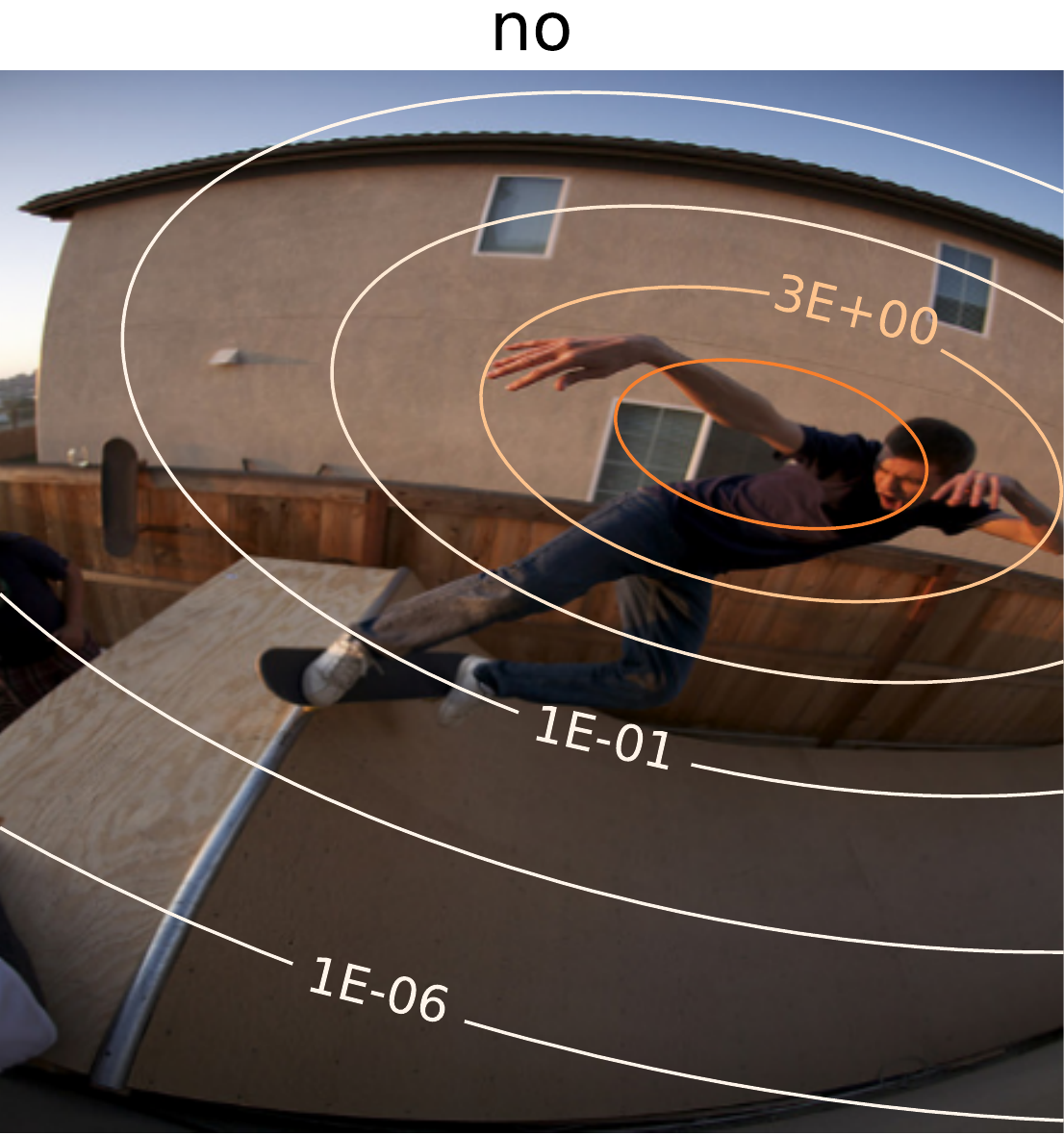}
\includegraphics[width=0.24\textwidth]{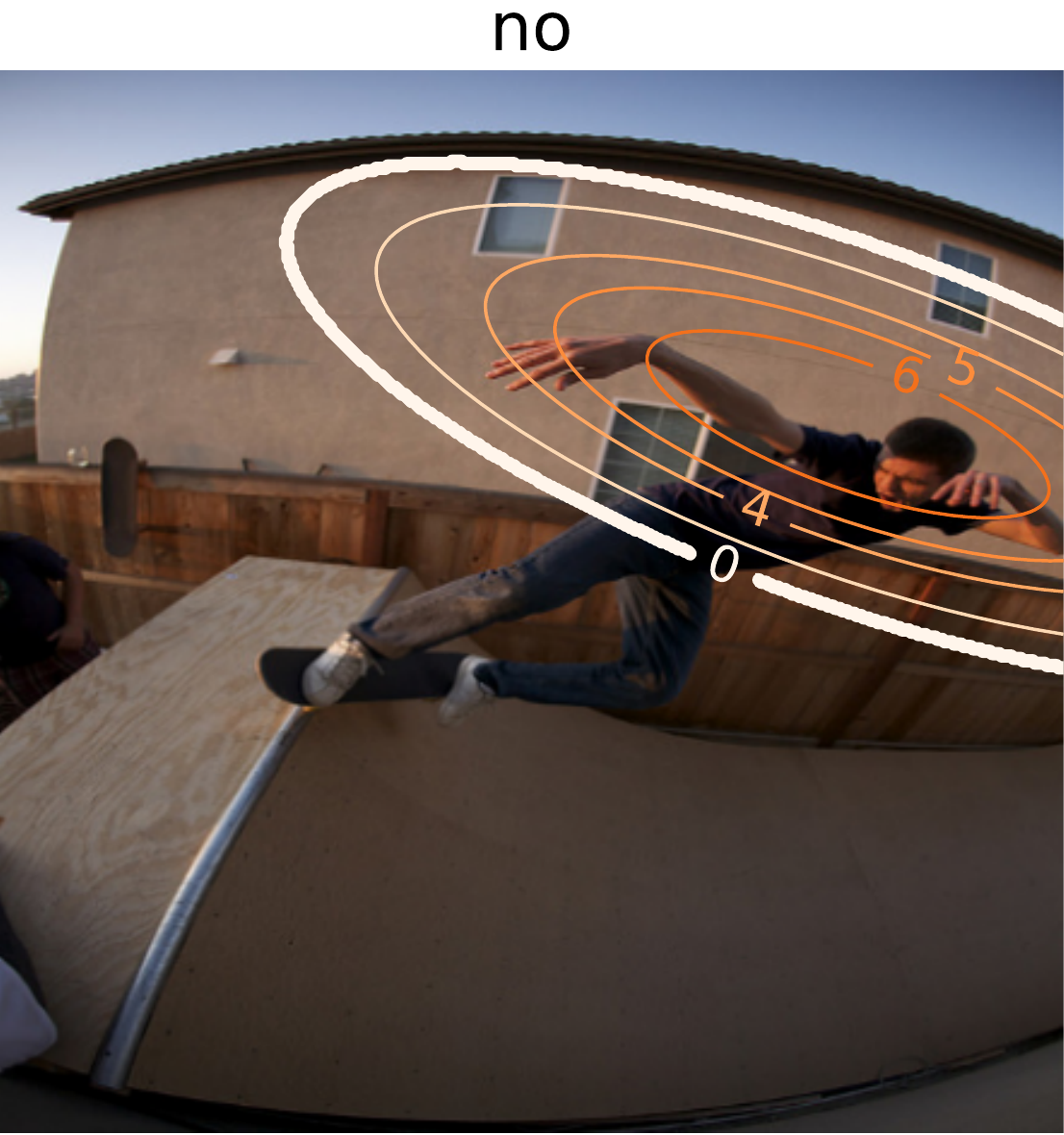}
\caption{\label{fig:examples_vqa_hat}Attention maps for an example in VQA-v2: original image, discrete attention, continuous softmax, and continuous sparsemax.}
\end{figure*}

\begin{figure*}[t]
\centering
\includegraphics[width=0.24\textwidth]{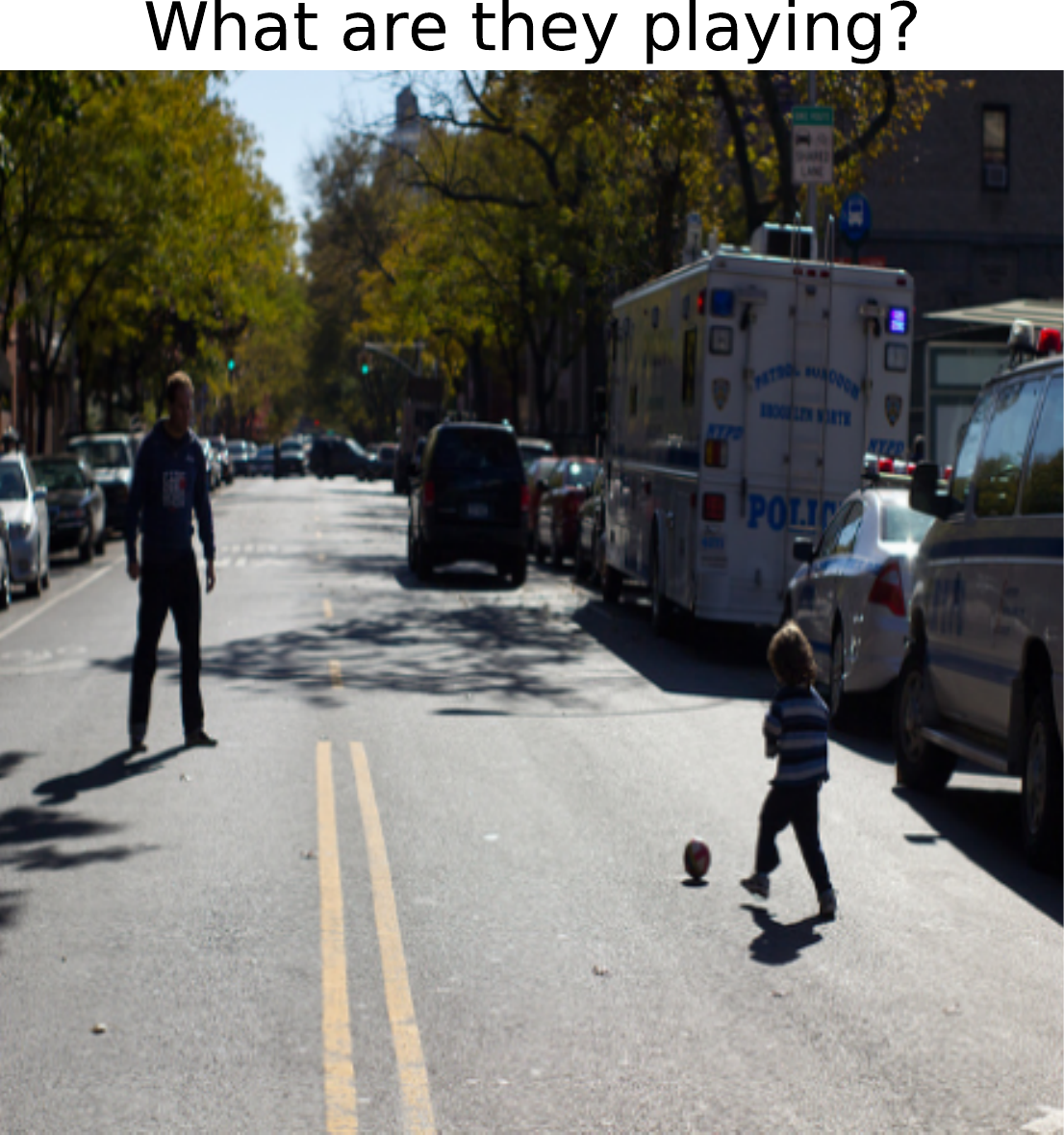}
\includegraphics[width=0.24\textwidth]{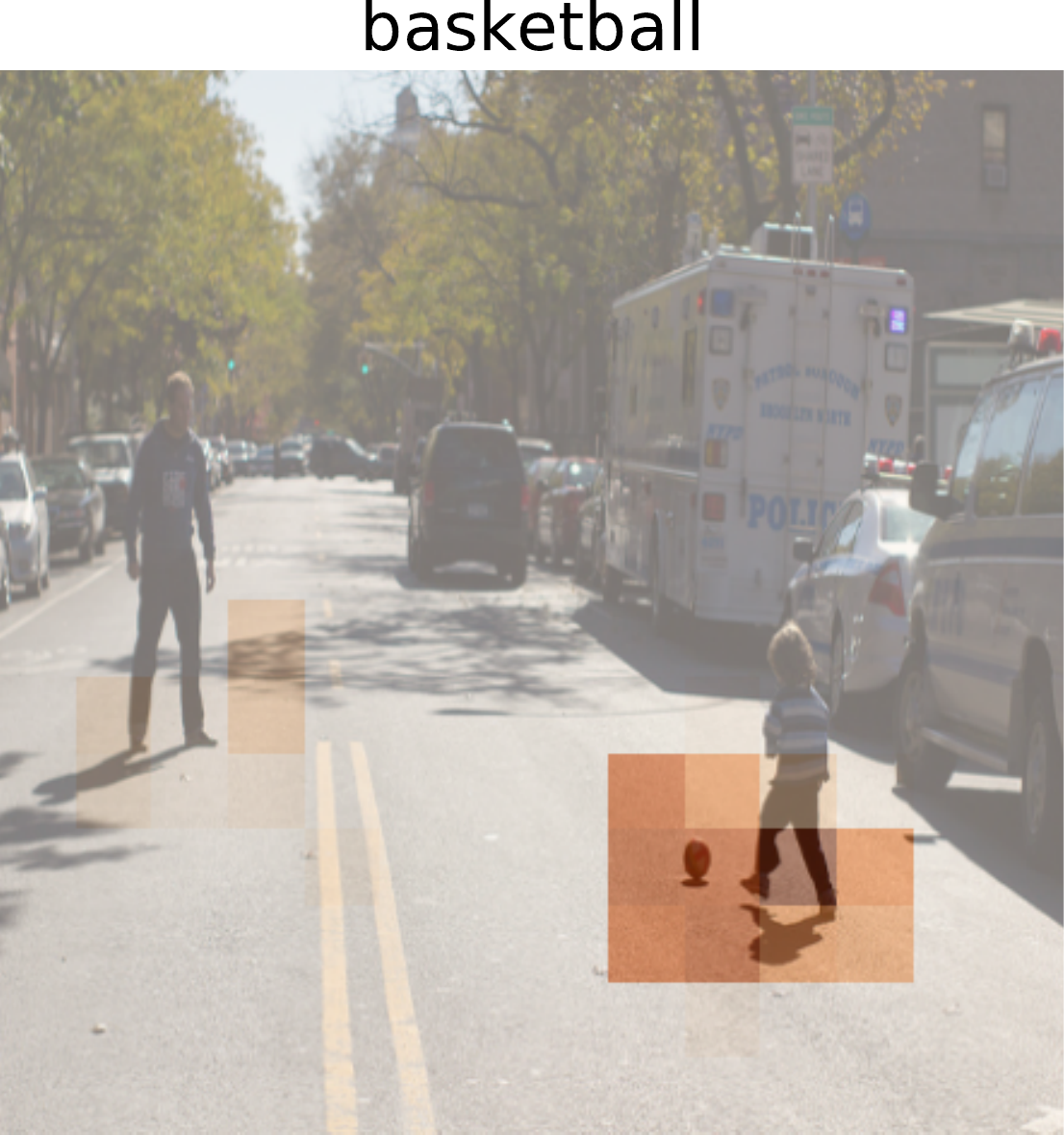}
\includegraphics[width=0.24\textwidth]{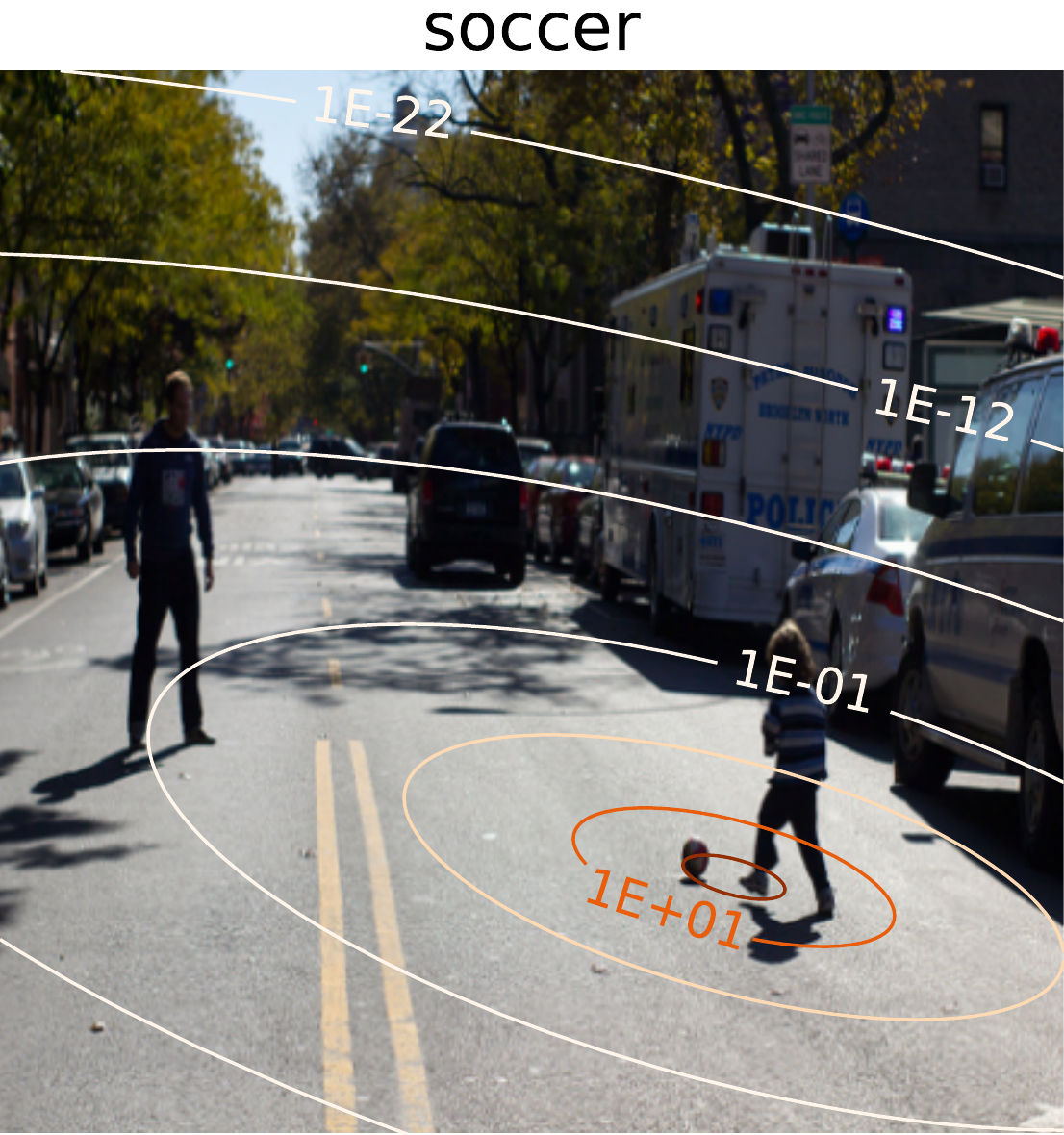}
\includegraphics[width=0.24\textwidth]{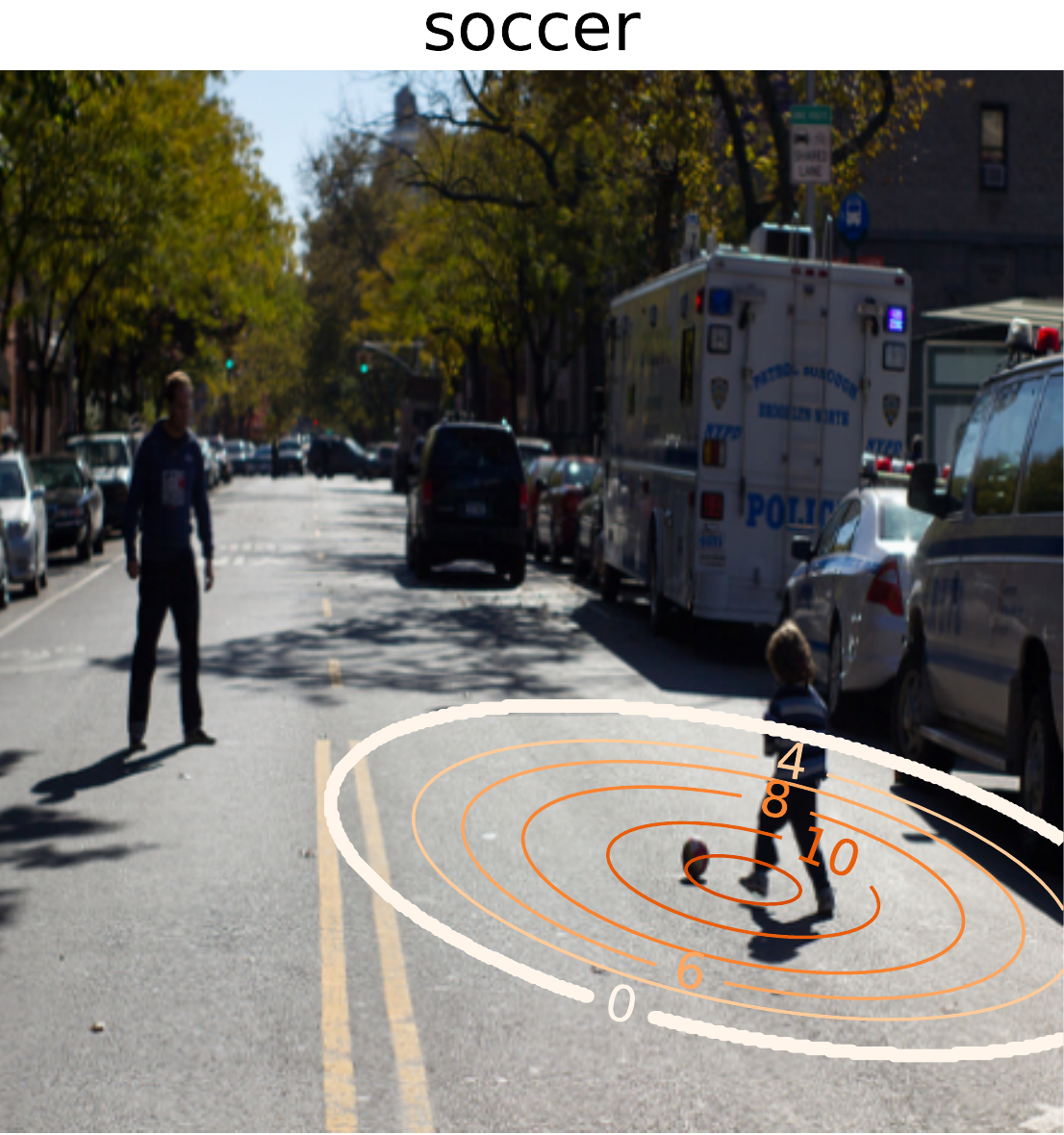}
\caption{\label{fig:examples_vqa_soccer}Attention maps for an example in VQA-v2: original image, discrete attention, continuous softmax, and continuous sparsemax.}
\end{figure*}

\remove{
\subsection{Interval regression.}

The dataset may be downloaded from
\url{https://www.ncdc.noaa.gov/crn/}.
\begin{table}[!htb]
    \caption{Hyperparmeters for interval regression.}
    \label{tab:table_reg_hyperparams}
    \vskip 0.15in
    \begin{small}
    \begin{center}
    \begin{tabular}{llllll}
        \toprule
        \sc Hyperparameter
& \sc MSE
& \sc Gaussian-$\sigma$
& \sc Gaussian-$95\%$
& \sc Truncated parabola
& \sc Triangular \\
\midrule
\emph{(tuned)} & & & & &\\
Learning rate &
0.002 &
0.001 &
0.0005 &
0.001 &
0.001 \\
Hidden layer size &
64 &
256 &
32 &
256 &
256 \\
Dropout probability &
0.2 &
0.1 &
0.2 &
0.1 &
0.05 \\
\emph{(fixed)} & & & & &\\
Optimizer & Adam & & & &\\
Batch& full dataset & & & &\\
N.~epochs& 50,000 & & & &\\
\bottomrule
    \end{tabular}
    \end{center}
    \end{small}
    \vskip -0.1in
\end{table}
}

\end{document}